\newcommand{\corr}{(\Letter)}
\newtheorem{assumption}{Assumption}
\begin{document}

\title{Generalizing Reward Modeling for Out-of-Distribution Preference Learning}

\titlerunning{Underwater Basket Weaving Under Extreme Pressure}

\author{Chen Jia \orcidID{0000-0002-8666-9930} \corr}

\authorrunning{Jia}

\institute{SI-TECH Information Technology \email{jiachenwestlake@gmail.com}}

\maketitle              

\begin{abstract}
Preference learning (PL) with large language models (LLMs) aims to align the LLMs' generations with human preferences. Previous work on reinforcement learning from human feedback (RLHF) has demonstrated promising results in in-distribution PL. However, due to the difficulty of obtaining human feedback, discretely training reward models (RMs) for every encountered distribution is challenging. Thus, out-of-distribution (OOD) PL is useful for enhancing LLMs' generalization ability with limited preference feedback. This work addresses OOD PL by optimizing a general RM through a meta-learning approach. A bilevel optimization algorithm is utilized during meta-training to learn an RM that guides policy learning to align with human preferences across various distributions. When encountering a test distribution, the meta-test procedure optimizes regularized policy using the learned RM for PL. We theoretically demonstrate the convergence rate of the bilevel optimization algorithm under reasonable assumptions. Additionally, we conduct experiments on two text generation tasks across 22 held-out data distributions and outperform various strong baselines across various evaluation metrics.
\keywords{Preference optimization \and LLMs \and OOD generalization.}
\end{abstract}

\section{Introduction}\label{sec1}
Aligning large language models (LLMs) with human preferences through reinforcement learning has been demonstrated as a practical approach to align pretrained LLMs along human values. As shown by recent research on LLMs \cite{christiano2017deep,ziegler2019fine,stiennon2020learning,ouyang2022training}, RLHF initially trains a reward model (RM) to capture human preferences from a pairwise preference dataset. It then aligns the LLMs with the learned RM through regularized policy optimization, aiming to learn a language policy that better reflects human values. Most previous work on RLHF focuses on in-distribution (ID) preference learning (PL) \cite{casper2023open}, i.e., using the ID preference data for reward learning \cite{christiano2017deep} and then performing policy optimization using PPO (Fig. \ref{fig:intro} (b)). Besides, direct preference optimization (DPO) \cite{rafailov2024direct} and its variants \cite{azar2024general,ethayarajh2024kto,park2024disentangling} directly optimize a language policy using maximum likelihood estimation on the ID preference data (Fig. \ref{fig:intro} (c)). 

However, defining precise rewards for various real-world tasks is non-trivial \cite{mckinney2022fragility}, and obtaining high-quality feedback that accurately represents human preferences is challenging \cite{bai2022training}.
Therefore, we focus on enhancing the out-of-distribution (OOD) generalization ability for PL. Fig. \ref{fig:intro} (a) illustrates an OOD scenario where only preference data from training distributions (\fcolorbox{black}{pink}{$y$},\fcolorbox{black}{pink}{\footnotesize$y'$},..., \fcolorbox{black}{green}{$y$},\fcolorbox{black}{green}{\footnotesize$y'$}) are available, as well as fine-tuning data from training-/test-distributions ($\mathcal{D}^{{tr}_1}_{xy}$,...,$\mathcal{D}^{{tr}_N}_{xy}$, $\mathcal{D}^{{te}}_{xy}$). Two major challenges arise when these ID PL methods encounter the distribution shift: (i) the generations from the policy may substantially deviate from the human preferences of the target distribution, posing an OOD challenge for PL. (ii) the distribution shifts result in policy drift during the policy optimization process with Kullback-Leibler divergence \cite{ramamurthy2022reinforcement,rafailov2024direct}: the policy tends to move towards the reference policy largely followed in the training distribution, exacerbating the model's deviation from the test distribution in PL. 
\begin{wrapfigure}[16]{r}{0.4\textwidth}
	\centering
	\includegraphics[width=1.\linewidth]{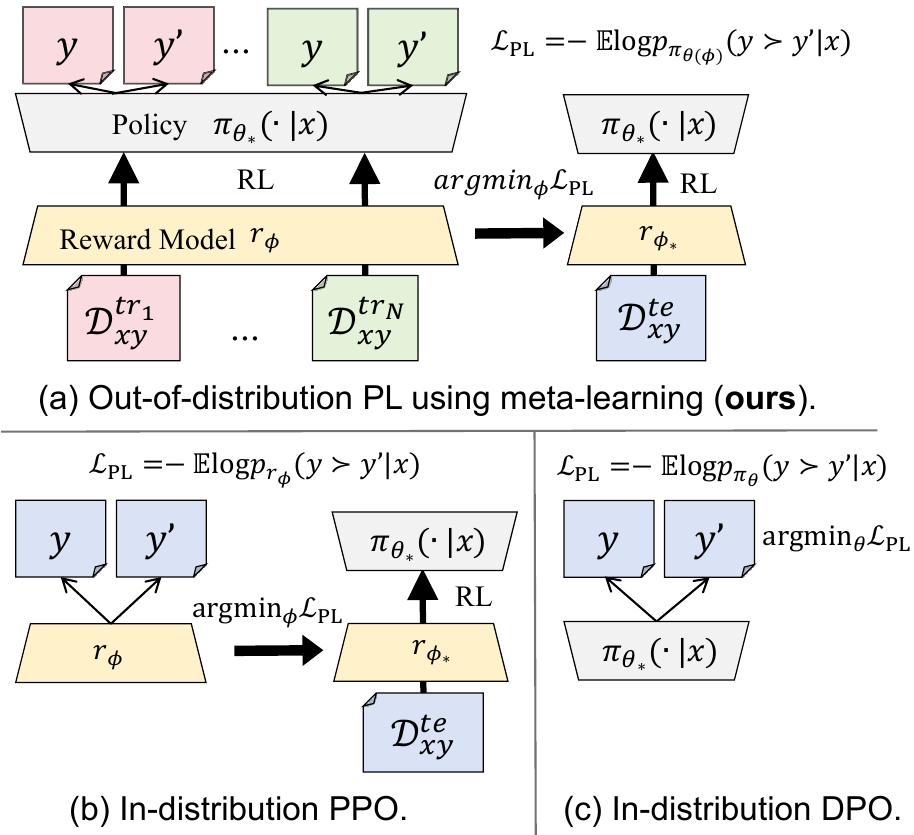}
	\caption{Comparison between OOD PL and existing ID PL, i.e., PPO and DPO.} \label{fig:intro}
\end{wrapfigure}

An initial approach to addressing the OOD challenge is to directly enhance the OOD generalization ability of the RM using preference data from multiple distributions. This specific approach may involve training an RM using multiple distributions or integrating distribution-specific RMs into a unified model, similar to previous work on RM ensembling \cite{rame2024warm}. However, such a method also has limitations from the distribution shift between the preference data and fine-tuning data. To better tackle the OOD challenge, we employ a meta-learning approach to learn a reward function capable of guiding policy optimization for OOD preference learning (Fig. \ref{fig:intro} (a)). For each episode, the policy is optimized with RL on fine-tuning data (meta-training on support set) and the RM is optimized with PL objectives on the preference data (meta-test on query set). In particular, we incorporate a regularization term in policy optimization to penalize the Kullback-Leibler divergence between the policy-induced distribution and test distribution, thereby mitigating the challenge of policy drift. We summarize the contributions as follows.
\begin{itemize}[leftmargin=1em]
	\item {\bf Improving OOD generalization for preference learning.} We focus on aligning large language models with human preference on the OOD data, and propose a novel meta-learning-based approach to tackle the OOD preference learning problem.
	\item {\bf Gradient-based bilevel optimization algorithm.} To optimize the bilevel optimization objectives of meta-learning, we propose a gradient-based algorithm, and we establish an upper bound for its convergence rate w.r.t. the learning rates and controlling factor for reward modeling.
	\item {\bf State-of-the-art performance.} We conduct experiments on both controlled sentiment generation and knowledge answer generation, using multiple metrics for evaluation. The results outperform a range of strong baselines and achieve the best results across four sentiment generation distributions and 18 answer generation distributions. The code is released at \url{https://github.com/jiachenwestlake/OODPL}.
\end{itemize}

\section{Preliminaries}
\noindent\textbf{Notations.}
We relay on an off-policy sequence-to-sequence decision making framework of language generation \cite{baheti2023improving,azar2024general,rafailov2024direct}: given a task-specific prompt $x \sim \mathcal{D}_{x}$, the $\theta$-parameterized language policy $\pi_{\theta}$ computes the probability distribution of response $y \sim \pi_{\theta}(\cdot | x)$, together with a task-specific sequence-level reward $r_{\phi}(x,y) \in \mathbb{R}$ by the $\phi$-parameterized RM $r_{\phi}$.
Preference learning (PL) \cite{christiano2017deep,ouyang2022training,rafailov2024direct} further utilizes answer comparisons corresponding to a prompt, $\{y, y', x: y\succ y'| x\}$, drawn from a preference distribution $\mathcal{D}_{\rm PL}$ where $y \succ y' | x$ indicates that $y$ is the preferred answer over dispreferred answer $y'$ given the prompt $x$. 
\begin{wraptable}[10]{r}{0.5\textwidth}
	\vspace{-0.8cm}
	\centering
	\caption{Comparison of training data distributions. $\mathcal{D}_{\rm PL}^{tr}$ and $\mathcal{D}_{\rm PL}^{te}$ represent training and testing preference distributions, respectively, $\mathcal{D}^{tr_i}_{xy}$ denote the RL (or SFT) data distribution.}
	\scalebox{.75}{
		\begin{tabular}{lll}
			\toprule
			{\bf Method}& PL & RL (SFT) \\
			\midrule
			SFT (ID) & - &$\mathcal{D}^{te}_{xy}$ \\
			$^{\llcorner}$PPO (ID) & {\small $\mathcal{D}^{te}_{\rm PL}$}  & $\mathcal{D}^{te}_{xy}$  \\
			$^{\llcorner}$xDPO (ID)& {\small$\mathcal{D}^{te}_{\rm PL}$} & -   \\
			$^{\llcorner}${\bf ours} (OOD) & \colorbox{pink}{\small$\{\mathcal{D}_{\rm PL}^{tr_1},...,\mathcal{D}_{\rm PL}^{tr_N}\}$} & $\mathcal{D}^{te}_{xy}$, \colorbox{pink}{\small$\{\mathcal{D}^{tr_1}_{xy},..., \mathcal{D}^{tr_N}_{xy}\}$} \\
			\bottomrule
		\end{tabular}
	}
	\label{tab:problem}
\end{wraptable}
\noindent\textbf{Supervised fine-tuning (SFT).} 
SFT involves employing a supervised learning objective, specifically maximum likelihood estimation (MLE), to fine-tune a pretrained LLM on a task-specific dataset, denoted as $\mathcal{D}_{xy}$. The training objective is to obtain a language policy $\pi_{\theta}$ that satisfies
\begin{align}
	\min_{\theta} - \mathbb{E}_{(x,y)\sim \mathcal{D}_{xy}} \left[ \log \pi_{\theta}(y|x) \right]
\end{align} 

\noindent\textbf{RLHF with reward modeling.} 
The standard RLHF framework \cite{ziegler2019fine,ouyang2022training} comprises two main phases: (i) learning an RM for a specific task, and (ii) fine-tuning the policy with the learned RM. During reward learning, given a preference distribution $\mathcal{D}_{\rm PL}$, an RM $r_{\phi}$ is trained to score preference of $y$ to be the answer of $x$, using a maximum likelihood estimation (MLE) loss function:
\begin{align}
	\mathcal{L}_{R} = - \mathbb{E}_{(x,y \succ y') \sim \mathcal{D}_{\rm PL}} \left[ \log \sigma\left(  r_{\phi}(x,y) - r_{\phi}(x,y') \right) \right],
\end{align}
where $\sigma$ denotes the logistic function.
In the RL fine-tuning phase, given a prompt distribution $\mathcal{D}_x$, the optimization problem can be represented as follows,
\begin{align}\label{eq:rlhfobj}
	\max_{{\theta}} \mathbb{E}_{x \sim \mathcal{D}_x} \left[ \mathbb{E}_{y \sim \pi_{\theta}(\cdot|x)} \left[ r_{\phi}(x,y) \right] - \beta \mathrm{D}_{\rm KL} \left( \pi_{\theta}(\cdot|x) || \pi_{\rm ref}(\cdot|x) \right)\right],
\end{align}
where $\mathrm{D}_{\rm KL}(\cdot||\cdot)$ represents the Kullback-Leibler divergence, $\pi_{\rm ref}(\cdot|x)$ denotes a reference distribution, and $\beta$ is the balancing coefficient. Previous work \cite{ramamurthy2022reinforcement,rafailov2024direct} uses an SFT model as the $\pi_{\rm ref}$ to impose constraints on the similarity between the predictions of the RLHF model and those of the initial SFT model. We define $\pi_{\rm ref}$ as the target distribution to address the policy drift challenge, as detailed in \S \ref{sec:method}.

\noindent\textbf{Problem Setting.}
We compare three PL methods in Tab. \ref{tab:problem}. To ensure a fair comparison, we consider a transfer learning scenario where all methods employ prompting from the test distribution $\mathcal{D}^{te}_{y|x}$ during testing. In contrast to supervised fine-tuning (SFT), preference learning (PL) utilizes additional preference data for training, as demonstrated in previous works \cite{christiano2017deep,ouyang2022training,zhang2023wisdom,liu2023aligning,rafailov2024direct}. In the ID PL setting, both RLHF-PPO \cite{christiano2017deep,ouyang2022training} and xDPO \cite{rafailov2024direct,azar2024general} relay on the ID preference data $\mathcal{D}^{te}_{\rm PL}$ for reward learning or policy optimization. This study concentrates on an OOD scenario, where we utilize preference data $\{\mathcal{D}_{\rm PL}^{tr_1},...,\mathcal{D}_{\rm PL}^{tr_N}\}$ and RL fine-tuning data $\mathcal{D}^{te}_{xy}$, $\{\mathcal{D}^{tr_1}_{xy},..., \mathcal{D}^{tr_N}_{xy}\}$ from the training distributions for preference learning.

\section{Meta-Learning for Generalizing Reward Modeling} \label{sec:method}
\begin{table*}[h]
	\centering
	\caption{Linkes among different fields of bilevel optimization in this work.}
	\scalebox{0.7}{
		\begin{tabular}{lp{6cm}p{6.5cm}}
			\toprule
			{Bilevel Programming}&{ID Preference Learning} &{OOD Preference Learning (Meta-Learning)} \\
			\midrule
			outer variables & RM $\phi$ & generalized RM $\phi$ \\
			outer objective & PL objective $\mathcal{L}_{\rm PL}(\phi, \theta_*, \mathcal{D}_{\rm PL})$ (Eq. (\ref{eq:outer}))& PL objective ${\bf L}(\phi)$ (Eq. (\ref{eq:metaobjouter}))  \\
			inner variables & policy $\theta$ & policies of each distribution $\{\theta^T\}_{T\in \mathcal{D}_{\mathcal{T}}}$\\
			inner objective & FT objective $\mathcal{L}_{\rm FT}(\phi, \theta, \mathcal{D}_{xy}) $ (Eq. (\ref{eq:innerobj})) & FT objectives for each distribution $\{\mathcal{L}_{\rm FT}(\phi, \theta, \mathcal{D}^T_{xy})\}_{T\in \mathcal{D}_{\mathcal{T}}} $ (Eq. (\ref{eq:metaobjinner})) \\
			\bottomrule
		\end{tabular}
	}
	\label{tab:bilevel}
	\vspace{-0.5cm}
\end{table*}

Our meta-learning approach for OOD generalization is built upon bilevel programming, \cite{finn2017model,li2018learning,ji2021bilevel}. In this section, we first introduce a bilevel optimization framework for ID preference learning (PL) in \S \ref{sec:bilevelrm}. Based on this, we introduce a meta-learning approach for OOD PL in \S \ref{sec:metalearn}. Tab. \ref{tab:bilevel} outlines the connections among bilevel programming, ID preference learning, and meta-learning for OOD PL. Additionally, we introduce a gradient-based algorithm for optimizing the meta-learning objective in \S \ref{sec:algorithm} and analyze the theoretical convergence rate in \S \ref{sec:convergence}.

\subsection{Preference Learning via Bilevel Programming} \label{sec:bilevelrm}
\noindent\textbf{Bilevel programming.}
We formularize preference learning as a bilevel optimization problem (see e.g., \cite{colson2007overview,sinha2017review}):
\begin{align}
	\vspace{-0.5cm}
	&\min_{\phi} \mathcal{L}_{\rm PL}({\phi}, {\theta_*};\mathcal{D}_{\rm PL}); \\
	&{\rm s.t.}\ \theta_* \in \mathop{\arg\min}_{\theta} \mathcal{L}_{\rm FT}({\phi}, {\theta};\mathcal{D}_{xy}),
	\vspace{-0.5cm}
\end{align}
where $\mathcal{L}_{\rm PL}$ represents the {\it outer objective}, and for every reward function $r_{\phi}$, $\mathcal{L}_{\rm FT}$ represents the {\it inner objective}. Note that $\theta_*$ denotes the optimal parameters of a policy w.r.t. an RM $r_{\phi}$ for minimizing the inner RL objective. 

\noindent\textbf{Outer objective.}
We desire that the fine-tuned policy ${\theta}_*$, with the help of the reward function $r_{\phi}$, can align with human preferences. In particular, given a pair of answers for a prompt $x$ with preference ranking, $\nu = (y,y',x: y \succ y'|x) \sim \mathcal{D}_{\rm PL}$, we use the Bradley-Terry (BT) model \cite{bradley1952rank} to compute the comparing preference of $p(y \succ y' | x)$ w.r.t. a fine-tuned policy ${\theta_*}$ as follows,
\begin{align}
	\vspace{-0.5cm}
	p_*(y \succ y' | x) = \frac{\pi_{\theta_*}(y|x)}{\pi_{\theta_*}(y|x) + \pi_{\theta_*}(y'|x)}
	\vspace{-0.5cm}
\end{align}

Given a task-specific preference distribution $\mathcal{D}_{\rm PL}$, we apply the maximum likelihood estimation (MLE) to define the {\it outer objective} of preference alignment w.r.t. the policy ${\theta_*}$ as follows,
\begin{align} \label{eq:outer}
	\begin{split}
		&\min_{\phi} \mathcal{L}_{\rm PL}(\phi,\theta_*;\mathcal{D}_{\rm PL}) = \min_{{\phi}} \mathbb{E}_{\nu \sim \mathcal{D}_{\rm PL}}  \left[\ell_{\rm PL}({\phi}, {\theta}_*,\nu) \right] \\
		&{\rm s.t.}\ \ell_{\rm PL}\left( {\phi}, {\theta}_*, \nu \right) = - \log \sigma \left( \log \pi_{\theta_*}(y|x) - \log \pi_{\theta_*}(y'|x) \right)
	\end{split}
\end{align}

\noindent\textbf{Inner objective.}
The inner-loop optimization objective consist with the RL objective of Eq. (\ref{eq:rlhfobj}), and especially the reference policy is defined as the task-specific distribution to avoid the policy drift problem. In particular, given a task-specific distribution $\mathcal{D}_{xy}$, the reference distribution is represented as the conditional task-specific distribution, i.e., $\pi_{\rm ref}(\cdot|x) := \mathcal{D}_{y|x}(\cdot)$ for any $x \sim \mathcal{D}_x$. Then, the RL fine-tuning objective can be represented as
\begin{align}\label{eq:rlhfobj1}
	\vspace{-0.5cm}
	\mathop{\arg\max}_{{\theta}} \mathbb{E}_{x \sim \mathcal{D}_x} \left[ \mathbb{E}_{y \sim \pi_{\theta}(\cdot|x)} \left[ r_{\phi}(x,y) \right] - \beta\mathrm{D}_{\rm KL}\left( \pi_{\theta}(\cdot|x) || \mathcal{D}_{y|x}(\cdot) \right) \right]
	\vspace{-0.5cm}
\end{align}

However, it is impossible to directly optimize the above objective, since the distribution $\mathcal{D}_{y|x}(\cdot)$ is unknown. Thus, we consider an approximate solution to this objective. The derivation largely follows Peters and Schaal \cite{peters2007reinforcement} and Peng et al. \cite{peng2019advantage}. The major observation is that the objective is strictly concave and thus we can use the Karush-Kuhn-Tucker (KKT) conditions \cite{boyd2004convex} to obtain a globally optimal solution to Eq. (\ref{eq:rlhfobj1}), such that for any $x,y$,
\begin{align} \label{eq:policyobj}
	\vspace{-0.5cm}
	\pi_*(y|x) = \frac{1}{Z(x)} \mathcal{D}_{y|x}(y)\exp\left(\frac{1}{\beta}r_{\phi}(x,y)\right), 
	\vspace{-0.5cm}
\end{align}
where $Z(x)$ represents the partition function and $\beta$ represents the {\it reward controlling factor}. A detailed derivation is available in Appendix \ref{apdx:paraest}. We focus on a parameterized estimator (e.g., a neural network) to tackle a parameter estimation problem $\min_{{\theta}} \mathbb{E}_{x  \sim \mathcal{D}_{x}}[\mathrm{D}_{\rm KL}(\pi_*(\cdot|x)||\pi_{\theta}(\cdot|x))]$. Thereby, through sampling $z=(x,y) \sim \mathcal{D}_{xy}$, the inner objective can be represented by logical regression,
\begin{align} \label{eq:innerobj}
	\vspace{-0.5cm}
	\begin{split}
		\mathop{\arg\min}_{\theta} \mathcal{L}_{\rm FT}(\phi, \theta, \mathcal{D}_{xy}) &=  \mathop{\arg\min}_{\theta} \mathbb{E}_{x \sim \mathcal{D}_x} \mathbb{E}_{y \sim \mathcal{D}_{y|x}} \left[ \ell_{\rm FT}({\phi}, {\theta}, z)  \right] \\
		\qquad\qquad\qquad\qquad\quad&= \mathop{\arg\min}_{\theta} \mathbb{E}_{z \sim\mathcal{D}_{xy}} \left[ \ell_{\rm FT}({\phi}, {\theta}, z) \right] \\
		{\rm s.t.}\ \ell_{\rm FT}({\phi}, {\theta}, z) &= - \log \pi_{\theta}(y|x) \exp \left( \frac{1}{\beta}r_{\phi}(x,y) \right)
	\end{split}
\vspace{-0.5cm}
\end{align}

\subsection{Meta-Learning for Out-of-Distribution Preference Learning} \label{sec:metalearn}
\begin{wrapfigure}[13]{r}{0.5\textwidth}
	\vspace{-0.5cm}
	\centering
	\includegraphics[width=1.\linewidth]{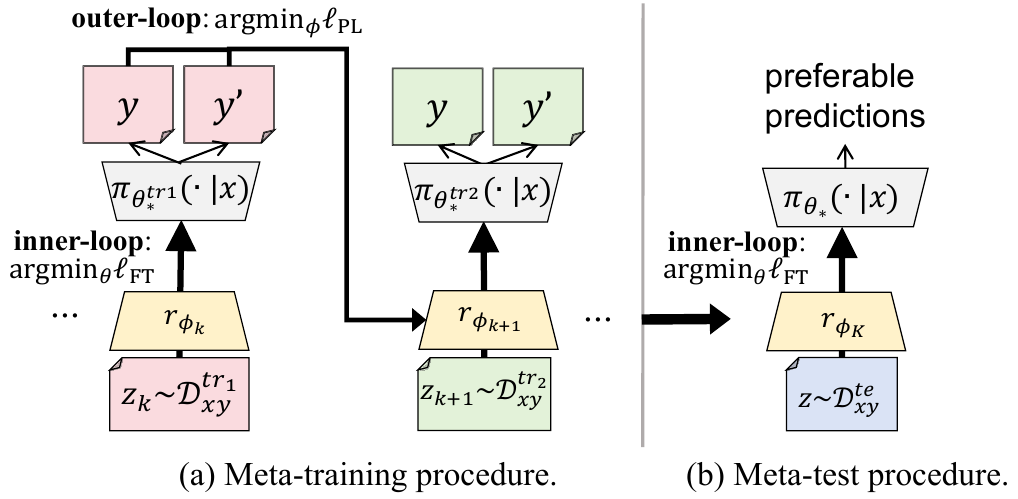}
	\caption{Overall training process. Meta-training optimizes the RM $\Phi$ using $K$ iterations of SGD. During testing, the RM $\Phi_{K}$ is utilized to optimize the test policy $\theta_*$.}
	\label{fig:overall}
\end{wrapfigure}
We generalize the ID bilevel optimization framework to tackle OOD preference learning. To this end, we assume that there exists a meta-distribution $\mathcal{D}_{\mathcal{T}}$ over the set of tasks $\mathcal{T}$. Previous studies on the distribution generalization refer to meta-learning \citep{baxter2000model,finn2017model}.

To learn a generalized reward modeling for RL fine-tuning on unseen test distributions, we utilize a meta-learning framework, where the {\it outer objective} described in the \S \ref{sec:bilevelrm} should be the extent to compute the preference loss across tasks, and the {\it inner objective} can be unchangeable for each distribution. In particular, given a task $T \in \mathcal{T}$, there exists a task-specific distribution $\mathcal{D}^T_{xy}$. For preference learning, each training distribution has a preference distribution $\nu = (x,y,y') \sim \mathcal{D}^T_{\rm PL}$. The optimal variable in {\it inner objective} should be the task-specific policy ${\theta^T_*}$. A key point is that the RM $r_{\phi}$ is shared across all task distributions. With this notation, the inner and outer objectives can be represented as follows,
\begin{align} \label{eq:metaobjouter}
	\vspace{-0.5cm}
	&\min_{\phi} {\bf L}(\phi) = \min_{\phi} \mathbb{E}_{T \sim \mathcal{D}_{\mathcal{T}}} \mathbb{E}_{\nu \sim \mathcal{D}^T_{\rm PL}} \left[ \ell_{\rm PL}\left( \phi, \theta^{T}_*, \nu \right)\right] \\
	&{\rm s.t.}\ \theta^{T}_* \in \mathop{\arg\min}_{{\theta^T}} \mathbb{E}_{z \sim \mathcal{D}^T_{xy}} \left[ \ell_{\rm FT}({\phi}, {\theta^T}, z) \right],\label{eq:metaobjinner}
	\vspace{-0.5cm}
\end{align}
where the objective $\ell_{\rm PL}(\phi, {\theta^T_*},\nu)$, $\nu\sim\mathcal{D}^T_{\rm PL}$ represents the preference learning loss, which has a same formulation as the ID version of Eq. (\ref{eq:outer}), $\ell_{\rm FT}(\phi, \theta^T, z)$, $z\sim \mathcal{D}^T_{xy}$ represents the fine-tuning loss, which has the same formulation as the ID version of Eq. (\ref{eq:innerobj}).

\subsection{Gradient-Based Algorithm} \label{sec:algorithm}
\RestyleAlgo{ruled}

\SetKwComment{Comment}{/* }{ */}
\vspace{-0.5cm}
\begin{algorithm}[h]
	\footnotesize
	\caption{Meta-training procedure}\label{alg:meta-train}
	\KwIn{Meta-training data $\{\mathcal{D}^T_{\rm PL}, \mathcal{D}^T_{xy}\}_{T \in \mathcal{T}_{tr}}$, RM $\phi$, policy $\theta$}
	\KwOut{Optimal RM $\phi_*$ for meta-test in Alg. \ref{alg:meta-test}}
	\For{$k = 0,1,2, \ldots, K-1$}{
		Randomly draw a sample task $T \in \mathcal{T}_{tr}$ from the meta-distribution $\mathcal{D}_{\mathcal{T}}$ \\
		Initialize $\theta$ with SFT parameters on $\mathcal{D}^T_{xy}$ \\
		\For{$t = 0,1,2, \ldots, D-1$}{
			Randomly draw a sample $z_{k,t}$ from $\mathcal{D}^T_{xy}$\\
			Update $\theta_{k,t+1} \gets \theta_{k,t} - \alpha \nabla_{\theta} \ell_{\rm FT} (\phi_{k}, \theta_{k,t}; z_{k,t})$$\hfill \rhd$ minimize the inner obj. of Eq. (\ref{eq:metaobjinner}) \\ 
		}
		Randomly draw a sample $\nu_k$ from $\mathcal{D}^T_{\rm PL}$\\
		Update $\phi_{k+1} \gets \phi_{k} - \eta \nabla_{\phi} \ell_{\rm PL}(\phi_{k}, \theta_{k,D}; \nu_k)$ $\hfill \rhd$ minimize the outer obj. of Eq. (\ref{eq:metaobjouter})
	}
\end{algorithm}
\vspace{-1.5cm}
\begin{algorithm}[h]
	\footnotesize
	\caption{Meta-test procedure}\label{alg:meta-test}
	\KwIn{Meta-test data $\mathcal{D}^{te}_{xy}$; optimized reward function $\phi_*$ from Alg. \ref{alg:meta-train}}
	\KwOut{Optimal policy $\theta_*$ for testing}
	Initialize $\theta$ with SFT parameters on $\mathcal{D}^{te}_{xy}$ \\
	\While{not converge or not reach stopping conditions}{
		Randomly draw a sample $z$ from $\mathcal{D}^{te}_{xy}$\\
		Update $\theta \gets \theta - \alpha \nabla_{\theta} \ell_{\rm FT}(\phi_*, \theta; z)$ $\hfill \rhd$ minimize the inner obj. of Eq. (\ref{eq:metaobjinner})
	}
\end{algorithm}
\vspace{-0.5cm}
Following the previous work on gradient-based bilevel programming \citep{finn2017model,ji2021bilevel}, we use a gradient-based approach to tackle the bilevel optimization objective in Eq. (\ref{eq:metaobjouter}) and Eq. (\ref{eq:metaobjinner}). The whole training process consists of meta-training and meta-test procedures, as shown in Fig. \ref{fig:overall}.

\noindent\textbf{Meta-training.} 
As illustrated in Alg. \ref{alg:meta-train}, for each outer-loop iteration $k \in \{0,1,\ldots, K-1\}$, the inner-loop process runs $D$ steps of stochastic gradient decent (SGD) to obtain $\theta_{k,D}$. Then, the outer-loop process uses SGD optimization w.r.t. $\phi_k$ via computes the gradient $\mathbb{E} \left[ \frac{\partial \ell_{\rm PL}(\phi_k, \theta_{k,D})}{\partial \phi_k} \right]$. Specifically, we compute the explicit form of the $\frac{\partial \ell_{\rm PL}(\phi_k, \theta_{k,D})}{\partial \phi_k}$ in the following proposition. After $K$ steps of meta-training, Alg. \ref{alg:meta-train} outputs optimized parameters of the reward function $\phi_*$.
\begin{proposition}\label{pro:gradient}
	\small
	For any outer-loop step $k \in \{0,1,2,\ldots,K-1\}$, with the outer-loop input $z_k$ and the inner-loop inputs $\{z_{k,t}\}_{t=0}^{D-1}$, the gradient $\frac{\partial \ell_{\rm PL}(\phi_k, \theta_{k,D})}{\partial \phi_k}$ takes the analytical form,
	\begin{align*}\footnotesize
		\vspace{-0.5cm}
		\begin{split}
			\frac{\partial \ell_{\rm PL}(\phi_k, \theta_{k,D})}{\partial {\phi}_k} =
			-\frac{\alpha}{\beta} \nabla_{{\theta}} A_k \left(1 - \sigma \left( A_k \right) \right) \sum_{t=0}^{D-1} \nabla_{\phi} H_{k,t} \nabla_{{\theta}}^{\top} F_{k,t} R_{k,t} \prod_{j=t+1}^{D-1} \left( I + \alpha \nabla_{{\theta}}^2 F_{k,t} R_{k,t} \right),
		\end{split}
		\vspace{-0.5cm}
	\end{align*}
	where $A_k := \log \pi_{\theta_{k,D}}(y_k|x_k) -  \log \pi_{{\theta}_{k,D}}(y'_k|x_k)$, $R_{k,t} := \exp \left( \frac{1}{\beta}r_{\phi_k}(x_{k,t},y_{k,t}) \right)$, $F_{k,t} := \log \pi_{\theta_{k,t}}(y_{k,t}|x_{k,t})$ and $H_{k,t} := r_{\phi_k}(x_{k,t},y_{k,t})$.
\end{proposition}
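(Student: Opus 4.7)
The plan is to invoke the chain rule, exploiting the key structural fact that the outer loss $\ell_{\rm PL}(\phi_k,\theta_{k,D})$ has no explicit $\phi_k$-dependence: it sees $\phi_k$ only through the final inner iterate $\theta_{k,D}$. Hence
\begin{equation*}
\frac{\partial \ell_{\rm PL}(\phi_k,\theta_{k,D})}{\partial \phi_k} \;=\; \left(\frac{\partial \theta_{k,D}}{\partial \phi_k}\right)^{\!\top} \nabla_{\theta}\ell_{\rm PL}(\phi_k,\theta_{k,D}),
\end{equation*}
so the proof splits into (i) computing the direct gradient $\nabla_\theta \ell_{\rm PL}$ at $\theta_{k,D}$ and (ii) computing the Jacobian $\partial \theta_{k,D}/\partial \phi_k$ by unrolling the inner recursion.

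Part (i) is a short calculation: since $\ell_{\rm PL}=-\log\sigma(A_k)$, a standard sigmoid-derivative identity yields $\nabla_\theta \ell_{\rm PL} = -(1-\sigma(A_k))\,\nabla_\theta A_k$. For part (ii), I first rewrite the inner step using $\nabla_\theta \ell_{\rm FT}(\phi_k,\theta;z_{k,t}) = -R_{k,t}\,\nabla_\theta F_{k,t}(\theta)$, so the SGD update reads $\theta_{k,t+1} = \theta_{k,t} + \alpha R_{k,t}\,\nabla_\theta F_{k,t}(\theta_{k,t})$. Differentiating in $\phi_k$, and noting that $R_{k,t}=\exp(H_{k,t}/\beta)$ contributes $\nabla_\phi R_{k,t} = (1/\beta)R_{k,t}\nabla_\phi H_{k,t}$ while $\nabla_\theta F_{k,t}(\theta_{k,t})$ depends on $\phi_k$ only through $\theta_{k,t}$, one obtains the affine recursion
\begin{equation*}
\frac{\partial \theta_{k,t+1}}{\partial \phi_k} \;=\; \bigl(I + \alpha R_{k,t}\nabla_\theta^2 F_{k,t}\bigr)\,\frac{\partial \theta_{k,t}}{\partial \phi_k} \;+\; \frac{\alpha}{\beta}\,R_{k,t}\,\nabla_\theta F_{k,t}\,\nabla_\phi H_{k,t}^{\top}.
\end{equation*}
Since $\theta_{k,0}$ is initialized from the SFT model (independently of $\phi_k$), the initial condition is $\partial \theta_{k,0}/\partial \phi_k = 0$, and unrolling yields a sum $\sum_{t=0}^{D-1}\bigl(\prod_{j=t+1}^{D-1} M_j\bigr)\cdot \tfrac{\alpha}{\beta}R_{k,t}\nabla_\theta F_{k,t}\nabla_\phi H_{k,t}^\top$, with $M_j := I + \alpha R_{k,j}\nabla_\theta^2 F_{k,j}$.

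Finally, I multiply this Jacobian by $\nabla_\theta \ell_{\rm PL}$ via the chain rule. Using symmetry of each Hessian $\nabla_\theta^2 F_{k,j}$ (hence $M_j^\top = M_j$) to transpose the matrix product into the order that appears in the statement, and pulling out the scalar $-(1-\sigma(A_k))$, gives exactly the claimed closed form. The only real obstacle is notational bookkeeping: tracking that the inner recursion's gradient depends on $\phi_k$ along two distinct channels (the weight $R_{k,t}$ and the iterate $\theta_{k,t}$), getting the order of the unrolled matrix product right, and handling the transpose via Hessian symmetry. There is no deep analytic content; this is a routine MAML-style hypergradient computation under the already-introduced inner/outer loss forms.
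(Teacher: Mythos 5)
Your proposal is correct and follows essentially the same route as the paper's proof: chain rule through the final inner iterate, the sigmoid-derivative identity for the outer loss, the affine recursion for $\partial\theta_{k,t+1}/\partial\phi_k$ arising from the two channels of $\phi_k$-dependence, and telescoping from the zero initial Jacobian. The only differences are notational (you place the transposed Jacobian on the right and invoke Hessian symmetry explicitly, whereas the paper works with the Jacobian on the left and leaves the transpose bookkeeping implicit).
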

\begin{proof}
	The major proof steps consist of the chain rule of partial differentiation, and the detailed derivation is available in Appendix \ref{apdx:proproof}.
\end{proof}

\noindent\textbf{Meta-test.}
As illustrated in Alg. \ref{alg:meta-test}, the meta-test procedure takes the optimal RM $\phi_*$ by meta-training to learn a policy for a given test distribution. In particular, the meta-test process computes policy gradients based on the optimal RM $\phi_*$ and uses SGD to optimize the task-specific policy with a sufficient number of iterations for convergence. The meta-test procedure can also be viewed as test-phase adaptation.

\subsection{Convergence Analysis} \label{sec:convergence}
Following previous work on convergence analysis of SGD \citep{drori2020complexity,ji2021bilevel}, we study the convergence rate of the proposed bilevel programming to the stationary point. In particular, we aim to show how the proposed algorithm minimizes $\mathbb{E}[\Vert \nabla_{\phi} {\bf L}(\overline{\phi}) \Vert^2]$, where ${\bf L}(\phi)$ is defined in Eq. (\ref{eq:metaobjouter}) and $\overline{\phi}$ denotes the output of algorithm. 
Following previous work on the convergence rate of gradient-based bilevel optimization \citep{ji2021bilevel}, we focus on the following types of basic loss functions for PL.
\begin{assumption} \label{asp:strong1}
	\small
	The maximum likelihood estimation (MLE) objective for policy learning, $- \log \pi_{\theta}$, is $\mu$-strongly convex. 
\end{assumption}

Besides, we take the following standard Lipschitz assumptions on the basic MLE loss functions of policy $-\log\pi_{\theta}$ and the reward function $r_{\phi}$, which are largely followed by those that have been widely adopted in bilevel optimization \citep{ghadimi2018approximation,ji2021bilevel}. 
\begin{assumption} \label{asp:lip2}
	\small
	The loss function $- \log \pi_{\theta}$ and reward function $r_{\phi}$ satisfy
	(1) $-\log\pi_{\theta}(y|x)$ is $M$-Lipschitz.
	(2)$-\nabla \log \pi_{\theta}(y|x)$ is $L$-Lipschitz.
	(3) $-\nabla^2 \log \pi_{\theta}(y|x)$ is $\rho$-Lipschitz.
	(4) $r_{\phi}$ is $T$-Lipschitz.
	(5) $\nabla r_{\phi}(x,y)$ is $P$-Lipschitz. More specifically,
	\begin{itemize}
		\item $-\log\pi_{\theta}(y|x)$ is $M$-Lipschitz, i.e., for any $\theta, \theta'$ and any $(x,y)$,
		\begin{align}
			\left\vert \log\pi_{\theta}(y|x) - \log\pi_{\theta'}(y|x) \right\vert \leq M \Vert \theta - \theta' \Vert 
		\end{align}
		\item $-\nabla \log \pi_{\theta}(y|x)$ is $L$-Lipschitz, i.e., for any $\theta, \theta'$ and any $(x,y)$,
		\begin{align}
			\Vert \nabla \log \pi_{\theta}(y|x) - \nabla \log \pi_{\theta'}(y|x) \Vert \leq L \Vert \theta - \theta' \Vert
		\end{align}
		\item $-\nabla^2 \log \pi_{\theta}(y|x)$ is $\rho$-Lipschitz, i.e., for any $\theta, \theta'$ and any $(x,y)$,
		\begin{align}
			\Vert \nabla^2 \log \pi_{\theta}(y|x) - \nabla^2 \log \pi_{\theta'}(y|x) \Vert \leq \rho \Vert \theta - \theta' \Vert
		\end{align}
		\item $r_{\phi}$ is $T$-Lipschitz, i.e., for any $\phi, \phi'$ and any $(x,y)$,
		\begin{align}
			\left\vert r_{\phi}(x,y) - r_{\phi'}(x,y) \right\vert \leq T \Vert \phi - \phi' \Vert
		\end{align}
		\item $\nabla r_{\phi}(x,y)$ is $P$-Lipschitz, i.e., for any $\phi, \phi'$ and any $(x,y)$,
		\begin{align}
			\left\Vert \nabla r_{\phi}(x,y) - \nabla r_{\phi'}(x,y) \right\Vert \leq P \Vert \phi - \phi' \Vert	
		\end{align}
	\end{itemize}
\end{assumption}

We also consider the following reasonable assumptions on the boundness of reward function and policy prediction probability.
\begin{assumption} \label{asp:bound3}
	\small
	The function $-\log \pi_{\theta}(y|x)$ and $\exp(r_{\phi}(x,y))$ are bounded, i.e., for any $(x,y)$, there exist $b,B,C > 0$,$b \leq \exp(r_{\phi}(x,y)) \leq B, \vert \log \pi_{\theta}(y|x) \vert \leq C$.
\end{assumption}

As typically adopted in the analysis of stochastic optimization by Ji et al. \cite{ji2021bilevel}, we make the following assumptions of bounded variances for the loss functions.
\begin{assumption} \label{asp:var4}
	\small
	Gradients $\nabla_{\theta} \ell_{\rm FT}({\phi}, {\theta};z)$, $\nabla_{\phi}\nabla_{\theta} \ell_{\rm FT}(r_{\phi}, \pi_{\theta};z)$, $\nabla^2_{\theta} \ell_{\rm FT}(r_{\phi}, \pi_{\theta};z)$ and $\frac{\partial \ell_{\rm PL}(r_{\phi}, \pi_{{\theta}};z)}{\partial \phi}$ have bounded variances of $\sigma$,$\sigma^2_1$,$\sigma^2_2$,$\sigma^2_3$, respectively. More specifically,
	\begin{align}
		\mathbb{E}_{z} \left[ \Vert \nabla_{\theta}\mathcal{L}_{\rm FT}({\phi}, {\theta}) - \nabla_{\theta} \ell_{\rm FT}({\phi}, {\theta};z) \Vert^2 \right] \leq& \sigma \\
		\mathbb{E}_{z} \left[ \Vert \nabla_{\phi}\nabla_{\theta}\mathcal{L}_{\rm FT}({\phi}, {\theta}) - \nabla_{\phi}\nabla_{\theta} \ell_{\rm FT}({\phi}, {\theta};z) \Vert^2 \right] \leq& \sigma^2_1 \\
		\mathbb{E}_{z} \left[ \nabla^2_{\theta} \mathcal{L}_{\rm FT}({\phi}, {\theta}) - \nabla^2_{\theta} \ell_{\rm FT}({\phi}, {\theta};z) \right] \leq& \sigma^2_2 \\
		\mathbb{E}_{\nu}\left[\left\Vert \frac{\partial \mathcal{L}_{\rm PL}({\phi}, {\theta})}{\partial \phi} -  \frac{\partial \ell_{\rm PL}({\phi}, {\theta};\nu)}{\partial \phi}  \right\Vert^2 \right] \leq& \sigma^2_3
	\end{align}
\end{assumption}

Then, we present the main convergence result based on the above assumptions.
\begin{theorem} \label{thm:1}
	\small
	Suppose Assumption \ref{asp:strong1}, \ref{asp:lip2}, \ref{asp:bound3} and \ref{asp:var4} hold, choose the iteration number $D$ in the inner-loop and the reward controlling factor $\beta$ satisfy the conditions: (i) {\small$D \geq - \frac{\log \left(\frac{4M^2}{(2L+M^2)^2}  +  \left\Vert \theta_{0} - \theta_* \right\Vert^2\right)}{\log \left(3\left(1-\frac{\alpha b\mu}{\exp\beta}\right)^{2}\right)}$, and (ii) {\small$\beta \leq \min \Big\{ \log \frac{4\alpha B^2(L^4+M^2\rho^2)}{b\mu (2L+M^2)^2}, \log\left( \alpha b\mu /  1 - \sqrt{\frac{1-\kappa}{3}}\right) \Big\}$}, for some coefficient $\kappa > 0$}, then we have
	\begin{align}
		\vspace{-0.5cm}
		\frac{1}{K}\sum_{k=0}^{K-1} \mathbb{E} \left[ \Vert \nabla {\bf L}(\phi_{k})\Vert^2 \right] \leq \mathcal{O}\left(\frac{1}{K} + \frac{1}{\beta^2} \right) + \tau,
		\vspace{-0.5cm}
	\end{align}
	where $\tau$ denotes a $K$, $\beta$-unrelated term, which has the form of $ \tau = 3\sigma_3^3 + \frac{144 M^2 \alpha^2 \sigma_1^2}{\kappa} $. 
\end{theorem}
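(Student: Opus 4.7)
The plan is to follow the standard two-level descent analysis for gradient-based bilevel SGD (similar to Ji et al.\ 2021), with additional care to track how the reward controlling factor $\beta$ enters through the weight $\exp(r_\phi/\beta)$ in the inner loss $\ell_{\rm FT}$. The algorithm performs stochastic gradient descent on $\mathbf{L}$ using the surrogate $G_k := \partial \ell_{\rm PL}(\phi_k,\theta_{k,D};\nu_k)/\partial \phi$ whose closed form is given by Proposition~\ref{pro:gradient}. I would decompose $G_k = \nabla \mathbf{L}(\phi_k) + b_k + n_k$ into the true hypergradient, a bias $b_k$ coming from (i) the inexactness of the inner solution $\theta_{k,D}$ relative to $\theta_*^{T_k}$ and (ii) the MAML-style unrolled differentiation, and a zero-mean noise $n_k$ whose variance is controlled by Assumption~\ref{asp:var4}. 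A preliminary step is to establish smoothness of $\mathbf{L}$: strong convexity of the inner objective (Assumption~\ref{asp:strong1}, weighted by the positive factor $\exp(r_\phi/\beta)$ whose range is controlled via Assumption~\ref{asp:bound3}) together with the implicit function theorem yields Lipschitz dependence of $\theta_*^T(\phi)$ on $\phi$; combined with Assumption~\ref{asp:lip2}, this gives a polynomial smoothness constant $L_{\mathbf{L}}$ for $\mathbf{L}$.

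Next I would bound the inner-loop error. Strong convexity together with constant-stepsize SGD and bounded variance $\sigma$ yields the standard contraction
\begin{equation*}
\mathbb{E}\!\left\|\theta_{k,D}-\theta_*^{T_k}\right\|^2 \le \Bigl(1-\tfrac{\alpha b\mu}{\exp\beta}\Bigr)^{\!2D}\!\left\|\theta_{k,0}-\theta_*^{T_k}\right\|^2 + \mathcal{O}\!\Bigl(\tfrac{\alpha\exp\beta}{b\mu}\,\sigma\Bigr),
\end{equation*}
and the lower bound on $D$ in hypothesis (i) is exactly the one that makes the contracted term drop below a $(2L+M^2)^{-2}$ threshold that enters the next step. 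For the hypergradient bias, I would plug the closed form of Proposition~\ref{pro:gradient} into a term-by-term comparison with the expression obtained by replacing $\theta_{k,D}$ by $\theta_*^{T_k}$. Assumptions~\ref{asp:lip2}--\ref{asp:bound3} control the logistic factor $(1-\sigma(A_k))$, the gradients $\nabla_\phi H_{k,t}$ and $\nabla_\theta F_{k,t}$, and the exponential weights $R_{k,t}$; the $\alpha/\beta$ prefactor squared then contributes the $1/\beta^2$ scaling in $\|b_k\|^2$. The upper bound on $\beta$ in hypothesis (ii) ensures that the contraction in the previous display is fast enough to absorb the $(2L+M^2)^2/\beta^2$-type terms that arise when differentiating through the unrolled inner loop. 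The variance of $n_k$ splits into an outer part $\sigma_3^2$ and an inner-SGD part of order $M^2\alpha^2\sigma_1^2/\kappa$, producing precisely the residual $\tau = 3\sigma_3^2 + 144M^2\alpha^2\sigma_1^2/\kappa$ in the statement.

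Finally I would apply the descent lemma for the $L_{\mathbf{L}}$-smooth outer loss,
\begin{equation*}
\mathbf{L}(\phi_{k+1}) \le \mathbf{L}(\phi_k) - \eta\,\langle\nabla\mathbf{L}(\phi_k),G_k\rangle + \tfrac{L_{\mathbf{L}}\eta^2}{2}\|G_k\|^2,
\end{equation*}
take conditional expectations, use $2\langle a,b\rangle\le\|a\|^2+\|b\|^2$ on the cross term to peel off half of $\|\nabla\mathbf{L}(\phi_k)\|^2$ from $\|b_k\|^2$, and telescope over $k=0,\dots,K-1$ with $\eta=\Theta(1/L_{\mathbf{L}})$ to recover $\tfrac{1}{K}\sum_{k}\mathbb{E}\|\nabla\mathbf{L}(\phi_k)\|^2 \le \mathcal{O}(1/K + 1/\beta^2)+\tau$. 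The hard part will be the bias analysis: controlling the $D$-fold matrix product $\prod_{j=t+1}^{D-1}(I+\alpha\nabla_\theta^2 F_{k,t}\,R_{k,t})$ uniformly in $D$, since a naive operator-norm estimate yields exponential blow-up. The crucial observation is that strong convexity of $-\log\pi_\theta$ plus positivity of $R_{k,t}$ makes $I+\alpha\nabla_\theta^2 F_{k,t}\,R_{k,t}$ a contraction of rate $1-\alpha b\mu/\exp\beta$ whenever $\beta$ satisfies (ii); this is exactly what causes the combinatorial expansion of $\nabla_\phi(\prod_t(\cdot))$ to telescope into the advertised $\mathcal{O}(1/\beta^2)$ correction rather than diverging with $D$.
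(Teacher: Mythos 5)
Your proposal follows essentially the same route as the paper's proof: establish smoothness of the outer objective via the implicit function theorem and the weighted strong convexity of the inner loss, prove the inner-loop SGD contraction at rate $1-\alpha b\mu/\exp\beta$, control the hypergradient bias by exploiting that each factor $I+\alpha\nabla_\theta^2 F_{k,t}R_{k,t}$ is a contraction (the paper packages this as a joint Lyapunov recursion on $\|\partial\theta_{k,t}/\partial\phi_k-\partial\theta_*/\partial\phi_k\|^2$ plus a weighted iterate error, which is your product bound unrolled), and finish with the descent lemma and telescoping, with conditions (i) and (ii) entering exactly where you place them. The decomposition into true gradient, bias, and $\sigma_3$-variance noise, and the attribution of the $1/\beta^2$ term to the $\frac{1}{\beta}\nabla_\phi r_\phi$ factor from differentiating $\exp(r_\phi/\beta)$, both match the paper.
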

\begin{proof}
	See Appendix \ref{apdx:proofthm1} for detailed derivations.
\end{proof}
\begin{remark}
	Theorem \ref{thm:1} shows that the proposed bilevel optimization algorithm converges w.r.t. the number of outer-loop iteration $\mathcal{O}\left(\frac{1}{K}\right)$ and the controlling factor of reward function $\mathcal{O}\left(\frac{1}{\beta^2}\right)$. This suggests that (a) more iterations of SGD and (b) a larger reward controlling factor under the condition (ii) makes the algorithm converge better.
\end{remark}

\subsection{Experimental Setup}
\noindent\textbf{Tasks.}
For the {\bf controlled sentiment generation (CSG)} task, $x$ represents a prefix of a review from the Amazon Review dataset, which contains four domains \citep{blitzer2007biographies}. The policy generates the remaining review context $y$ with a positive sentiment. We use GPT-4 \citep{achiam2023gpt} to generate pairwise reviews for each prefix. Following Li et al. \cite{li2017deeper}, we adopt a leave-one-out protocol for OOD evaluation. Specifically, each experiment on Amazon Review is trained on three of the four distributions and evaluated on the remaining distribution.
For the {\bf knowledge answer generation (KAG)} task, we utilize the Stanford Human Preferences Dataset (SHP) \citep{ethayarajh2022understanding}, which comprises 18 domains of knowledge question-answering (QA) with preference labels. Accordingly, $x$ is a Reddit question, the policy generates a proper response $y$ to the question. We employ a cross-validation evaluation method, i.e., randomly dividing the 18 domains of the SHP dataset into six splits, one split is used for testing, while the remaining splits are used for training in each experiment. 

\noindent\textbf{Metrics.}
We use three metrics for evaluation. To assess the effectiveness of policy learning in aligning with human preferences, we introduce a metric called PL accuracy $\mathcal{A}_{\rm PL}$, which quantifies the proportion of correctly predicted comparisons by the learned policy in the preference test set.
\begin{align}
	\vspace{-0.5cm}
	\mathcal{A}_{\rm PL}(\theta;\mathcal{D}_{\rm PL}) = \frac{1}{|\mathcal{D}_{\rm PL}|} \sum_{(y, y',x: y \succ y'|x)\in \mathcal{D}_{\rm PL}} \mathbb{I}\left[\frac{\log\pi_{\theta}(y|x)}{|y|} > \frac{\log\pi_{\theta}(y'|x)}{|y'|} \right],
	\vspace{-0.5cm}
\end{align}
where the normalization terms $1/|y|$ and $1/|y'|$ are incorporated to mitigate the influence of generation length on the computation of language probability. In the CSG task, we assess the quality of generation for preference alignment using a {\bf learned RM}. Specifically, we fine-tune a RoBERTa$_{\rm LARGE}$ model \citep{liu2019roberta} on the Amazon Review dataset to serve as the ground truth RM for scoring the generated reviews. In the KAG task, as the ground truth reward function is unknown, we evaluate approaches based on their {\it win rate} against the preferred responses in the test set. We use {\bf GPT-4 judgement} to assess response correctness and usefulness.

\noindent\textbf{Optimization Protocol.}
For model selection, we select the best RM $\phi_*$ learned in the meta-training phase based on the minimum PL accuracy on the evaluation sets, and choose the best policy model $\theta_*$ learned in the meta-test phase based on the minimum training loss. The hyperparameters used in our experiments largely follow previous works \citep{ramamurthy2022reinforcement,ouyang2022training}. Additionally, we select hyperparameters related to bilevel programming from a preset range based on development experiments. For example, the inner-loop learning rate $\alpha$ is chosen from ${ 5e^{-6}, 1e^{-5}, 3e^{-5}, 5e^{-5} }$, the reward controlling factor $\beta$ is selected from ${0.1,1.0,2.0,4.0,8.0}$, and the inner-loop iteration number $D$ is selected from ${ 50, 100, 150, 200 }$.

\noindent\textbf{Approaches.}
Our methods are built upon the {Base Model} Flan-T5 XXL \citep{chung2022scaling}, comprising more than 11B parameters. To expedite the training procedure, we adopt the parameter-efficient fine-tuning method LoRA \citep{hu2021lora}. Specifically, the trainable parameters of an RM consist of a LoRA adapter and a value head, while the trainable parameters of a policy consist of a LoRA adapter and a LM head. We also consider several strong baselines of OOD PL for comparison. For {SFT}, we utilize supervised data from both the training and test distributions for policy fine-tuning. The other approaches use SFT parameters for model initialization. For a fair comparison with PPO \citep{schulman2017proximal}, we use preference data from all training distributions to train the RM with a multi-task learning approach. We also consider two RLHF approaches: {PPO (tr)} and {PPO (te)} use supervised data from the training distributions and test distribution for policy fine-tuning, respectively. For {DPO} \citep{rafailov2024direct}, we use preference data from the training distributions for policy training. Additionally, we also consider a variation of our approach named {HPL}, which directly optimizes the PL objective $\mathcal{L}_{\rm PL}$ using preference data from the training distributions.

\subsection{Evaluation with Preference Data}
\begin{figure}[h]
	\centering
	\subfigure[{\tt book}.]{
		\begin{minipage}[h]{0.23\textwidth}
			\centering
			\includegraphics[width=3cm]{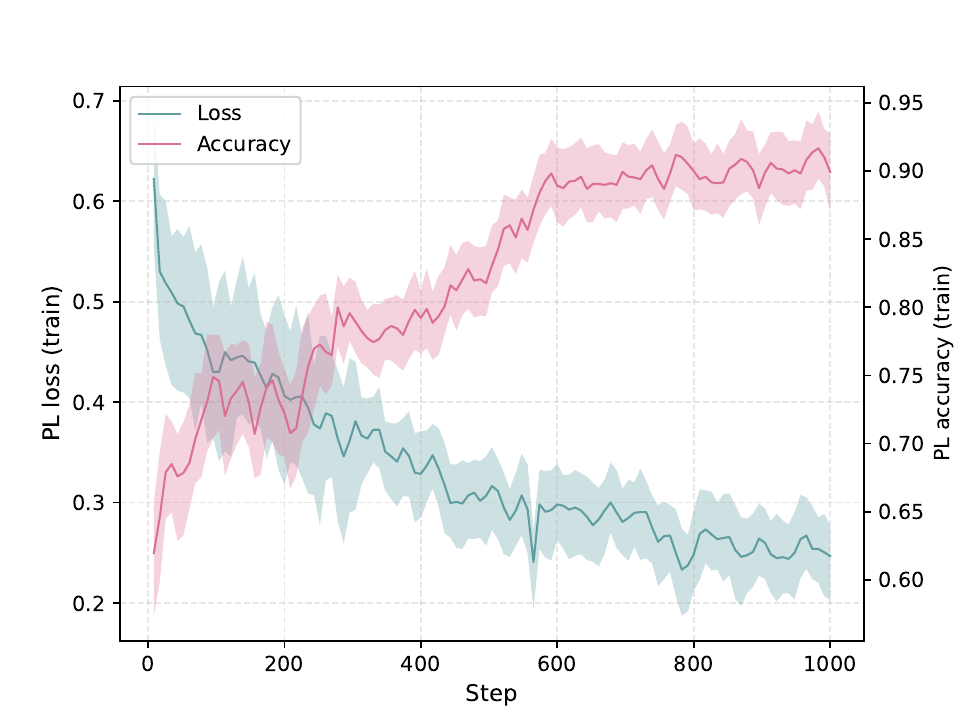}
	\end{minipage}}
	\subfigure[{\tt dvd}.]{
		\begin{minipage}[h]{0.23\textwidth}
			\centering
			\includegraphics[width=3cm]{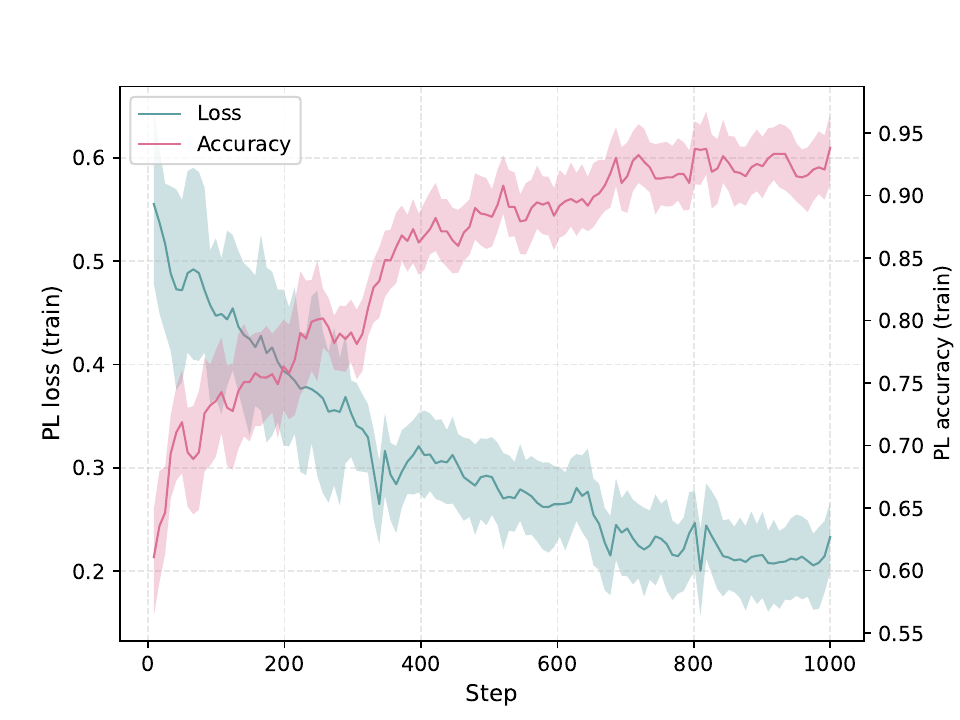}
	\end{minipage}}
	\subfigure[{\tt electronics}.]{
		\begin{minipage}[h]{0.23\textwidth}
			\centering
			\includegraphics[width=3cm]{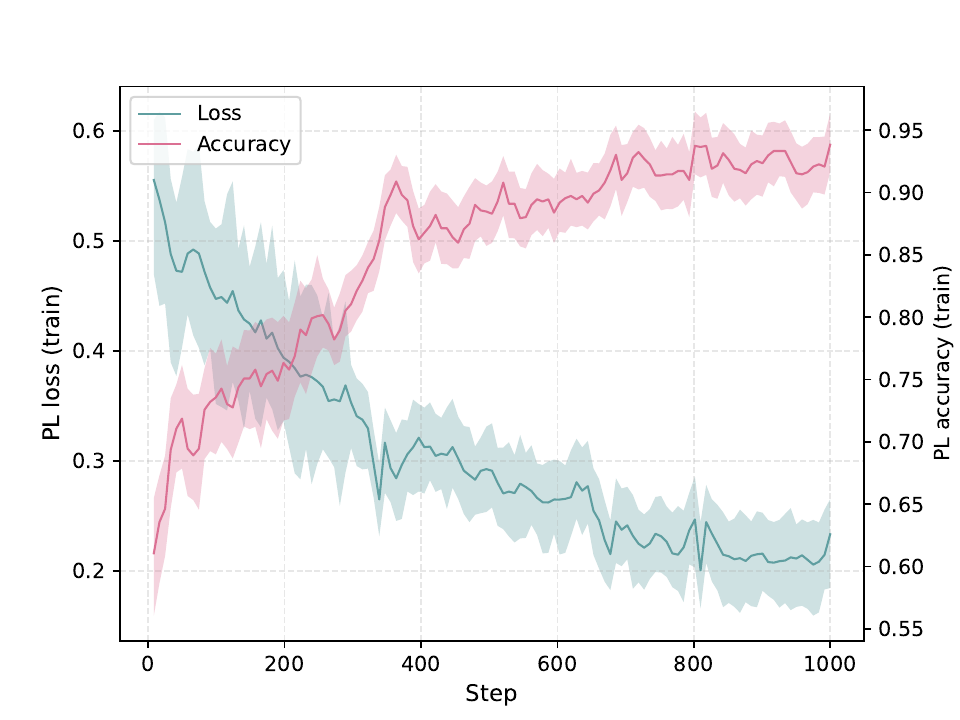}
	\end{minipage}}
	\subfigure[{\tt kitchen}.]{
		\begin{minipage}[h]{0.23\textwidth}
			\centering
			\includegraphics[width=3cm]{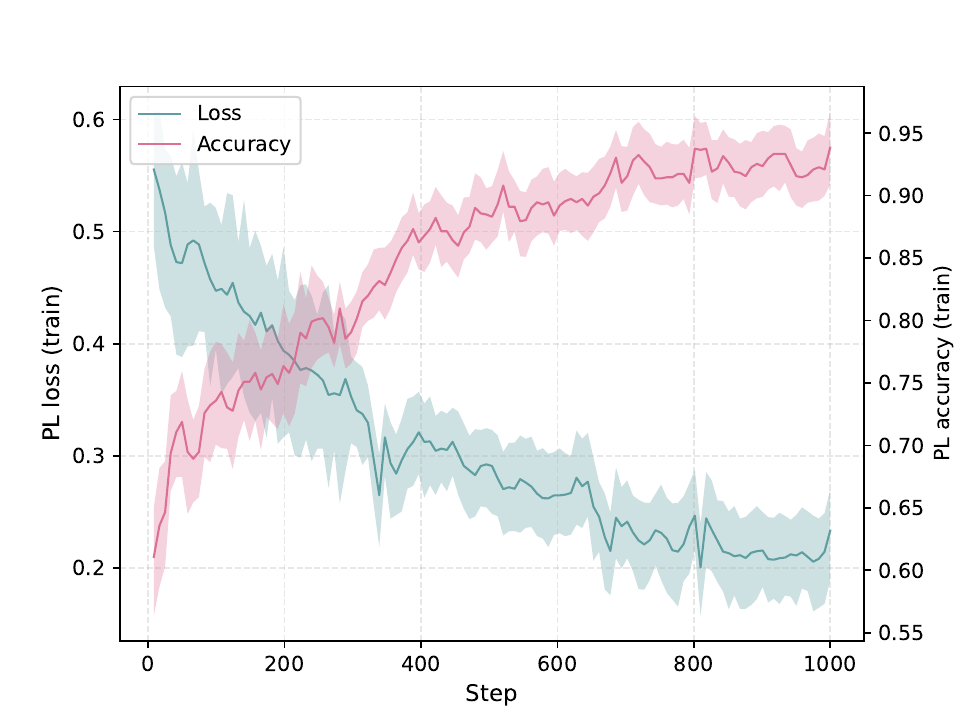}
	\end{minipage}}
	\subfigure[{\tt book}.]{
		\begin{minipage}[h]{0.23\textwidth}
			\centering
			\includegraphics[width=3cm]{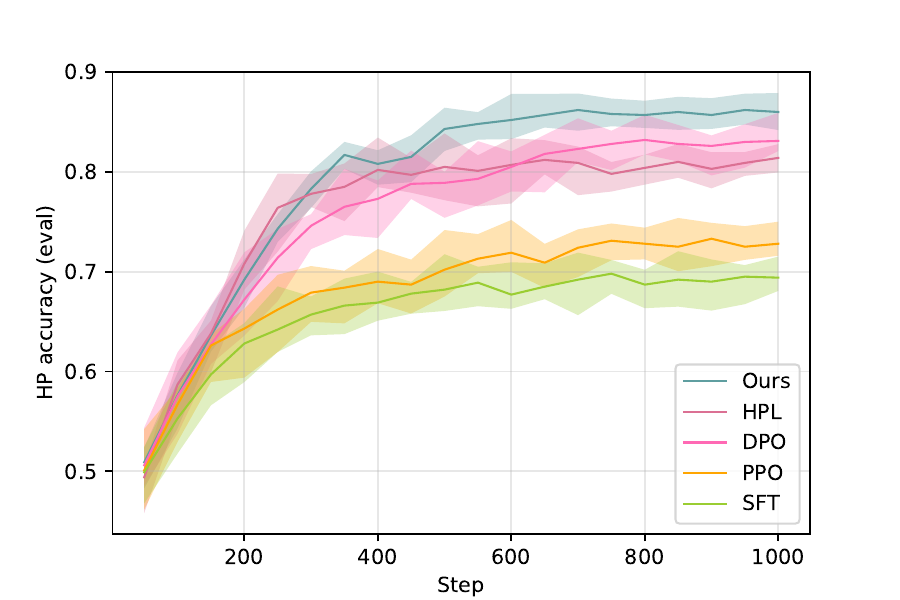}
	\end{minipage}}
	\subfigure[{\tt dvd}.]{
		\begin{minipage}[h]{0.23\textwidth}
			\centering
			\includegraphics[width=3cm]{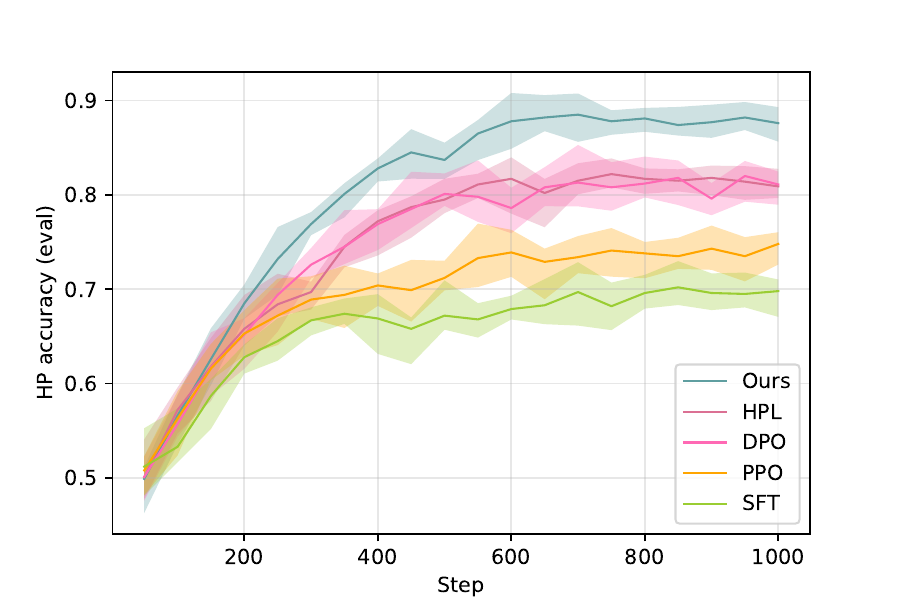}
	\end{minipage}}
	\subfigure[{\tt electronics}.]{
		\begin{minipage}[h]{0.23\textwidth}
			\centering
			\includegraphics[width=3cm]{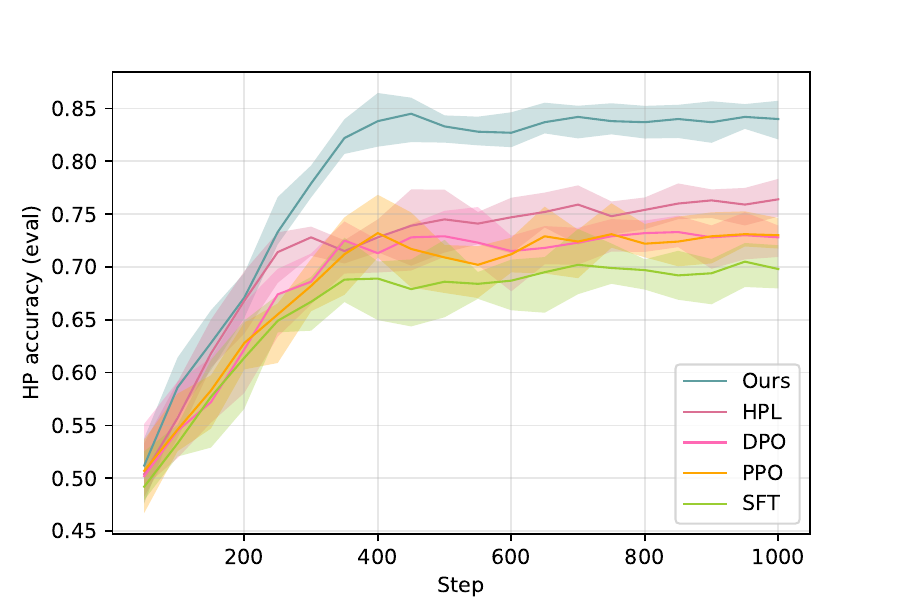}
	\end{minipage}}  
	\subfigure[{\tt kitchen}.]{
		\begin{minipage}[h]{0.23\textwidth}
			\centering
			\includegraphics[width=3cm]{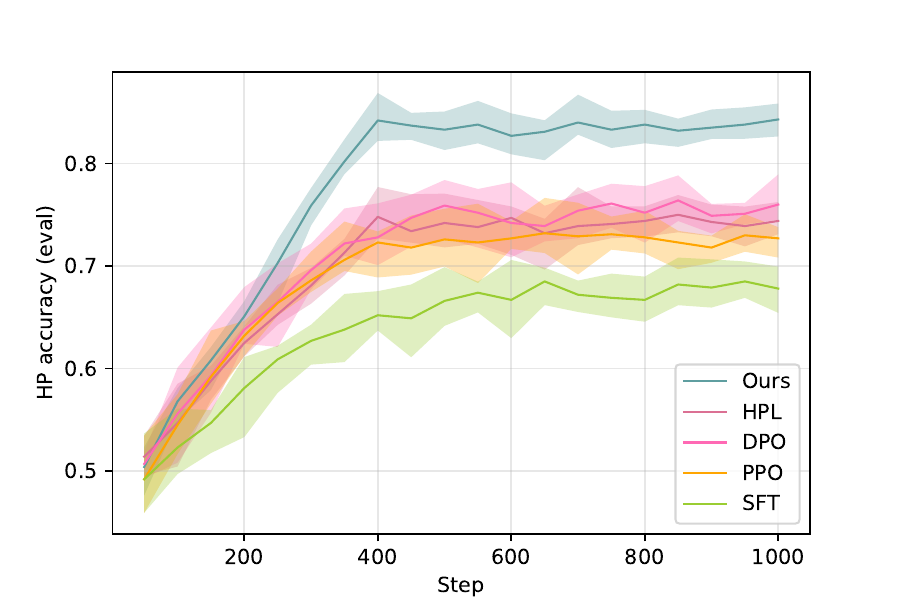}
	\end{minipage}}
	\caption{(a)-(d) illustrate the training loss and PL accuracy $\mathcal{A}_{\rm PL}$ domain, and (e)-(h) illustrate the evaluation accuracy against the training steps for each held-out CSG distribution.} \label{fig:pl-step}
\end{figure}

Preference data from the test distributions is utilized for evaluation. Specifically, we examine the optimization of the outer objective against the training steps, as depicted in Fig. \ref{fig:pl-step}.  (a) and (c) demonstrate that the PL loss $\ell_{\rm PL}$ decreases rapidly within the initial 600 steps for the CSG task and within the initial 3,000 steps for the KAG task, respectively. Subsequently, the PL loss reaches a low value for both tasks, exhibiting only minor fluctuations. The PL accuracy $\mathcal{A}_{\rm PL}$ increases in the opposite trend to the PL loss and eventually surpasses $0.9$ for both tasks. This indicates that our method is effective in conducting ID PL. To compare various methods for OOD PL, Fig. \ref{fig:pl-step} (b) and (d) depict the PL accuracy $\mathcal{A}_{\rm PL}$ on the evaluation set of test distributions for CSG and KAG, respectively. In comparison to the SFT baseline, other methods utilize preference data for training, resulting in higher PL accuracy for both tasks. Moreover, our method optimizes a bilevel optimization objective for OOD PL, significantly outperforming other methods by achieving a PL accuracy of $0.86$ for CSG and $0.75$ for KAG.

\subsection{Evaluation with a Learned RM}
\begin{figure}[h]
	\centering
	\subfigure[{\tt book}.]{
		\begin{minipage}[h]{0.23\textwidth}
			\centering
			\includegraphics[width=3cm]{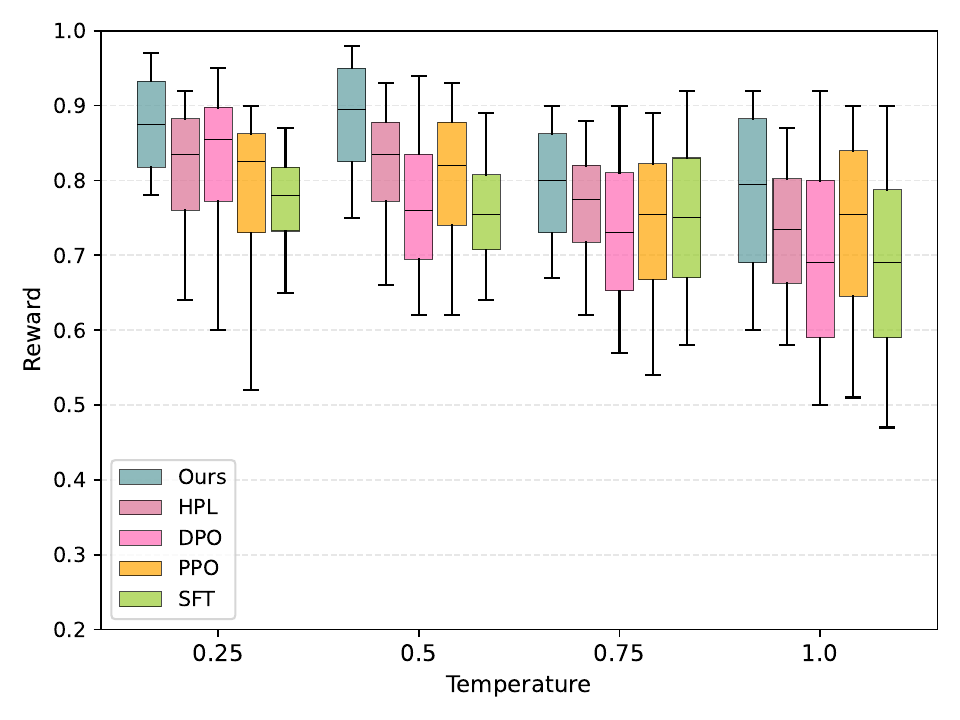}
	\end{minipage}}
	\subfigure[{\tt dvd}.]{
		\begin{minipage}[h]{0.23\textwidth}
			\centering
			\includegraphics[width=3cm]{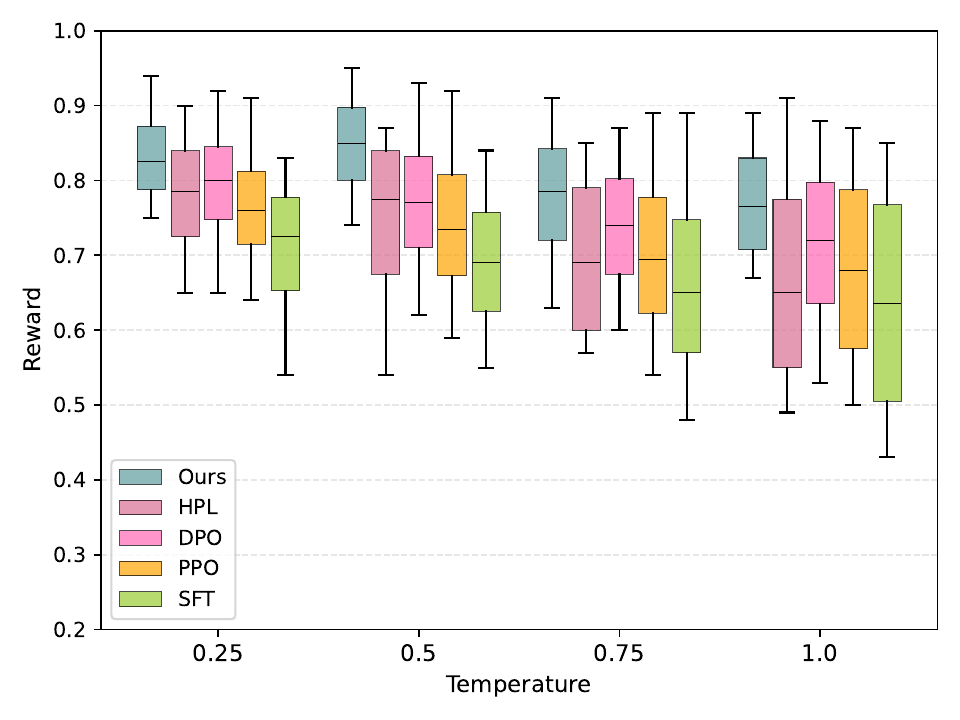}
	\end{minipage}}
	\subfigure[{\tt electronics}.]{
		\begin{minipage}[h]{0.23\textwidth}
			\centering
			\includegraphics[width=3cm]{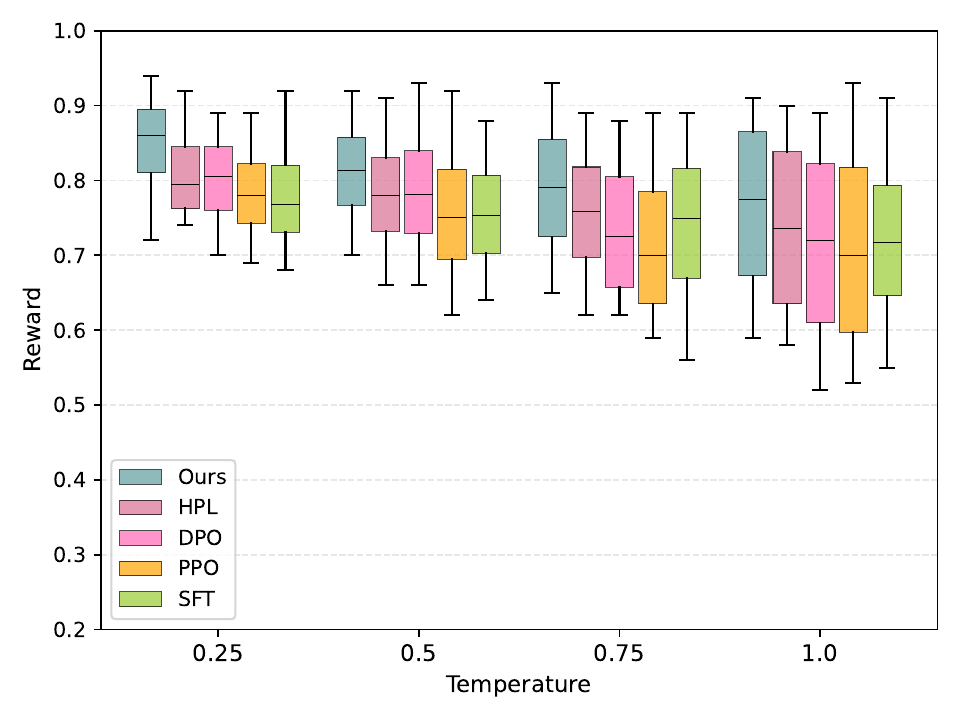}
	\end{minipage}}
	\subfigure[{\tt kitchen}.]{
		\begin{minipage}[h]{0.23\textwidth}
			\centering
			\includegraphics[width=3cm]{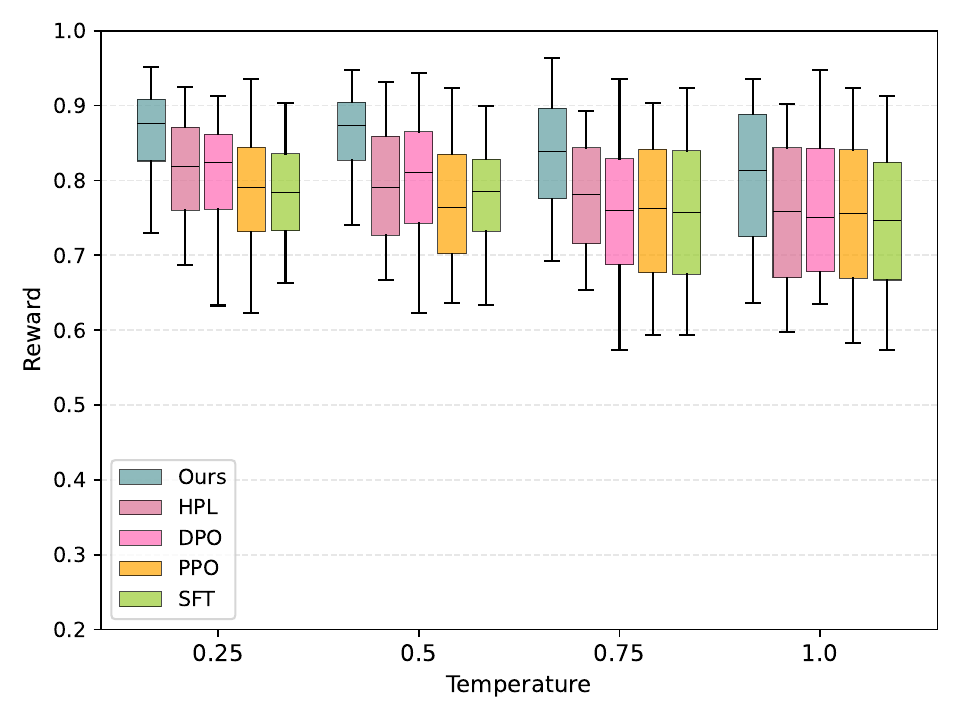}
	\end{minipage}}    
	\caption{Reward on four held-out Amazon Review distributions.} \label{fig:rewardamazon}
\end{figure}
Following the approach of Rafailov et al. \cite{rafailov2024direct}, we employ a learned RM to evaluate PL in CSG. Specifically, we fine-tune a RoBERTa$_{\rm LARGE}$ model on four domains of the Amazon Review dataset to serve as the reference reward function. As depicted in Fig. \ref{fig:rewardamazon}, we compare various OOD PL methods across four held-out domains, using different temperatures ranging from ${0.25, 0.5, 0.75, 1.0}$ to control the randomness of generation. The results for each temperature represent the average of three repetitions of generation. From Fig. \ref{fig:rewardamazon}, it is observed that as the temperature increases from $0.25$ to $1.0$, the variances of rewards for all methods tend to increase across the four held-out domains. Despite the wide range of rewards observed for all methods, our method consistently outperforms other baselines in terms of median reward across all temperatures. This demonstrates the effectiveness of our proposed approach in optimizing a general RM for OOD PL.

\subsection{Evaluation with GPT-4 Judgement}
\begin{table*}[h] 
	\centering
	\caption{GPT-4 evaluation of win rate on the KAG. We format {\bf the best}, \underline{the second best} and \textcolor{gray}{worse than SFT} results.}
	\scalebox{.69}{
		\begin{tabular}{lcccccccccccccccccccccccc}
			\toprule
			\multirow{2}{*}{\textbf{Method}}& \multicolumn{4}{c}{{\tt ac,an,ba}} & \multicolumn{4}{c}{{\tt ca,cu,do}} & \multicolumn{4}{c}{{\tt ne,hi,hr}} & \multicolumn{4}{c}{{\tt phi,phy,sci}} & \multicolumn{4}{c}{{\tt scif,so,ve}} & \multicolumn{4}{c}{{\tt ch,ex,le}} \\
			\cmidrule(lr){2-5}\cmidrule(lr){6-9}\cmidrule(lr){10-13}\cmidrule(lr){14-17}\cmidrule(lr){18-21}\cmidrule(lr){22-25}
			&{\tt ac}&{\tt an}&{\tt ba}&{\bf avg.}&{\tt ca}&{\tt cu}&{\tt do}&{\bf avg.}&{\tt ne}&{\tt hi}&{\tt hr}&{\bf avg.} &{\tt phi}&{\tt phy}&{\tt sci}&{\bf avg.}&{\tt scif}&{\tt so}&{\tt ve}&{\bf avg.}&{\tt ch}&{\tt ex}&{\tt le} & {\bf avg.} \\
			\midrule
			{Base Model} & \textcolor{gray}{.153} & \textcolor{gray}{.234} & \textcolor{gray}{.198} & \textcolor{gray}{.195} & \textcolor{gray}{.225} & \textcolor{gray}{.124} & \textcolor{gray}{.181} & \textcolor{gray}{.177} & \textcolor{gray}{.176} & \textcolor{gray}{.206} & \textcolor{gray}{.134} & \textcolor{gray}{.132}& \textcolor{gray}{.152} & \textcolor{gray}{.237} & \textcolor{gray}{.167} & \textcolor{gray}{.185} & \textcolor{gray}{.188} & \textcolor{gray}{.193} & \textcolor{gray}{.215} & \textcolor{gray}{.199} & \textcolor{gray}{.203} & \textcolor{gray}{.113} & \textcolor{gray}{.124} & \textcolor{gray}{.147} \\
			{SFT}  & .627 & .583 & .535 & .582 & .498 & .541 & .596 & .545 & .598 & .545 & .476 & .540 & .487 & .526 & .476 & .496 & .526 & .497 & .502 & .508 & .646 & .587 & \underline{.668} & .634 \\
			{PPO (tr)}  & .646 & .586 & .557 & .596 & .507 & .604 & \textcolor{gray}{.558} & .556 & .660 & .557 & .502 & .573& .538 & \textcolor{gray}{.515} & .533 & .528 & .602 & .507 & \textcolor{gray}{.478} & .529 & \underline{.654} & .604 & \textcolor{gray}{.625} & \textcolor{gray}{.628} \\
			{PPO (te)}  & \underline{.682} & \underline{.594} & .567 & \underline{.614} & \underline{.557} & .614 & \textcolor{gray}{.583} & \underline{.585} & .672 & \underline{.587} & \underline{.536} & \underline{.598} & \underline{.601} & .552 & \underline{.597} & \underline{.583} & \underline{.640} & \underline{.528} & \underline{.601} & \underline{.590} & .647 & \underline{.654} & \textcolor{gray}{.639} &\underline{.647}  \\
			{DPO}  & {.657} & \textcolor{gray}{.554} & .565 & .592 & .525 & \textcolor{gray}{.537} &  \underline{.612} & .558 & .648 & \textcolor{gray}{.506} & .527 & .560 & .507 & \underline{.582} & .557 & .549 & .580 & .527 & .554 & .554 & \textcolor{gray}{.636} & \textcolor{gray}{.452} & \textcolor{gray}{.657} & \textcolor{gray}{.582} \\
			{HPL} & \textcolor{gray}{.536} & \textcolor{gray}{.497} & \underline{.624} & \textcolor{gray}{.552} & .556 & \underline{.625} & \textcolor{gray}{.542} & .574 & \underline{.682} & \textcolor{gray}{.486} & .478 &.549& \textcolor{gray}{.342} & \textcolor{gray}{.478} & .572 & \textcolor{gray}{.467} & .582 & \textcolor{gray}{.486} & .557  & .542 & \textcolor{gray}{.486} & .627 & \textcolor{gray}{.586} & \textcolor{gray}{.566} \\
			\textbf{Ours}  & \textbf{.724} & \textbf{.607} & \textbf{.724} & \textbf{.685} & \textbf{.686} & \textbf{.687} & \textbf{.702} &\textbf{.692}& \textbf{.686} & \textbf{.652} & \textbf{.586} & \textbf{.642}& \textbf{.604} & \textbf{.686} & \textbf{.724} & \textbf{.671} & \textbf{.656} & \textbf{.567} & \textbf{.652} & \textbf{.625} & \textbf{.672} & \textbf{.680} & \textbf{.749} & \textbf{.700} \\
			\bottomrule
	\end{tabular}}
	\label{tbl:gpt4eval}
\end{table*}

Building upon prior research \citep{rafailov2024direct}, we employ GPT-4 to assess whether the model-generated answers are better than the preferred responses from the SHP dataset. Subsequently, we evaluate the win rates of all approaches against the preferred answers in the test set. This underscores the significance of fine-tuning a pretrained language model using supervised data for KAG. Compared to the {SFT} baseline, {PPO (tr)} additionally utilizes the training distributions for RL fine-tuning but struggles to enhance the results on the test distribution. This illustrates the challenge of OOD PL with RLHF. Additionally, DPO and HPL fine-tune the SFT model with PL data from the training distributions but also struggle to enhance the OOD results. The limitations are evident in the distribution bias of RM training. In contrast, we employ a meta-learning approach to learn a general RM that can be utilized for OOD policy tuning. Consequently, our method achieves superior results across all experiments.

\subsection{Effects of $\beta$ on Reward Learning}
\begin{wrapfigure}[15]{r}{0.5\textwidth}
	\vspace{-1.0cm}
	\centering
	\subfigure[Reward.]{
		\begin{minipage}[h]{0.15\textwidth}
			\centering
			\includegraphics[width=2.2cm]{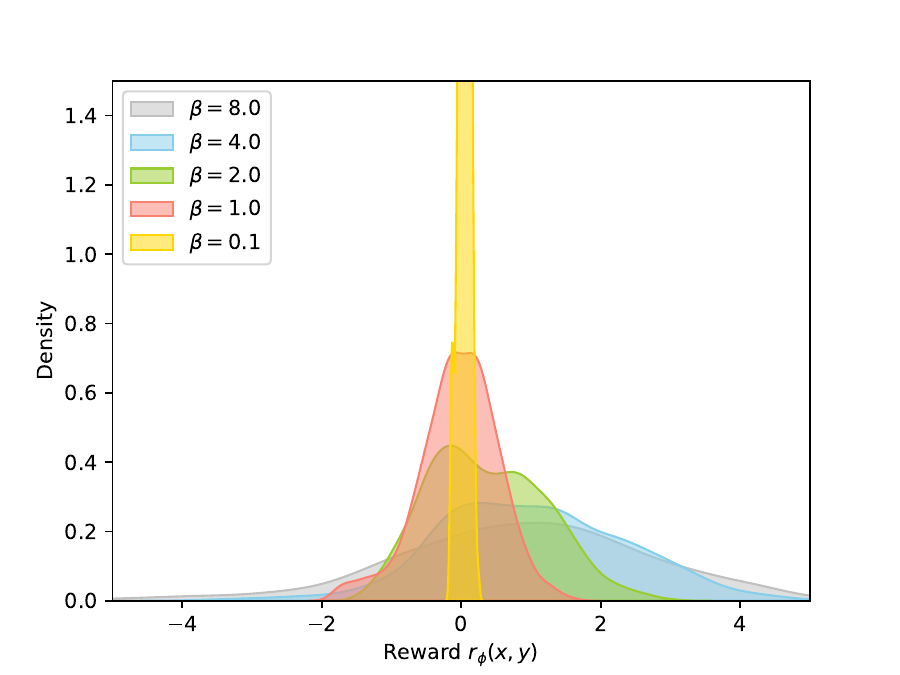}
	\end{minipage}}
	\subfigure[log. com.]{
		\begin{minipage}[h]{0.15\textwidth}
			\centering
			\includegraphics[width=2cm]{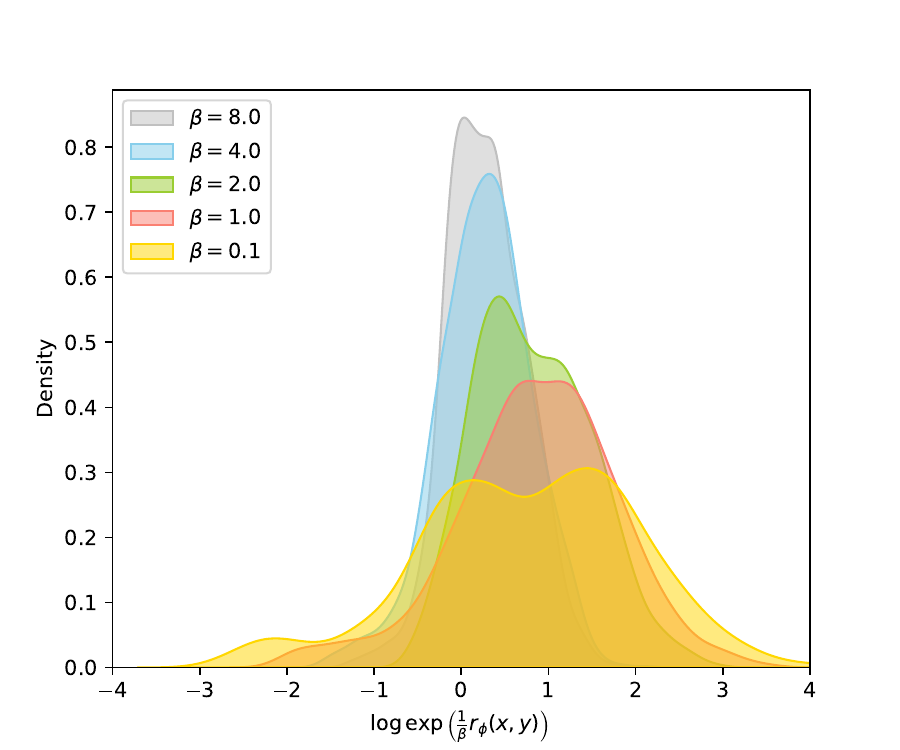}
	\end{minipage}}
	\subfigure[PL acc.]{
		\begin{minipage}[h]{0.15\textwidth}
			\centering
			\includegraphics[width=2.3cm]{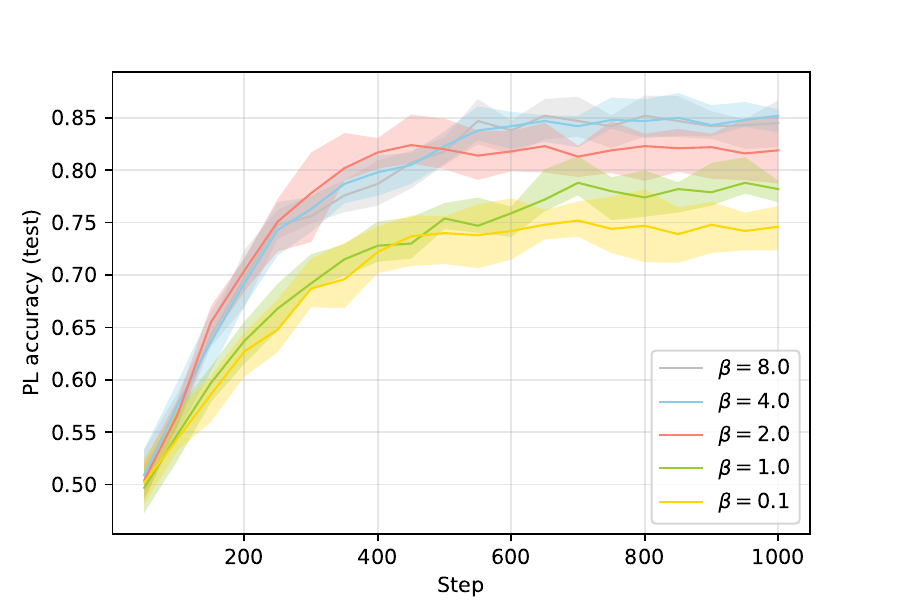}
	\end{minipage}}
	\subfigure[Reward.]{
		\begin{minipage}[h]{0.15\textwidth}
			\centering
			\includegraphics[width=2.2cm]{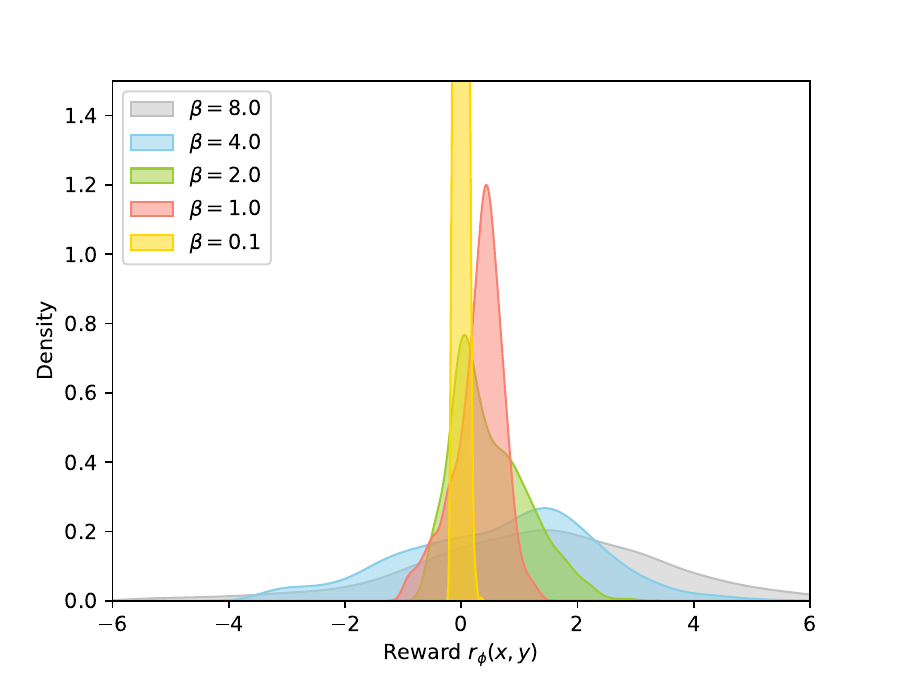}
	\end{minipage}}
	\subfigure[log. com.]{
		\begin{minipage}[h]{0.15\textwidth}
			\centering
			\includegraphics[width=2cm]{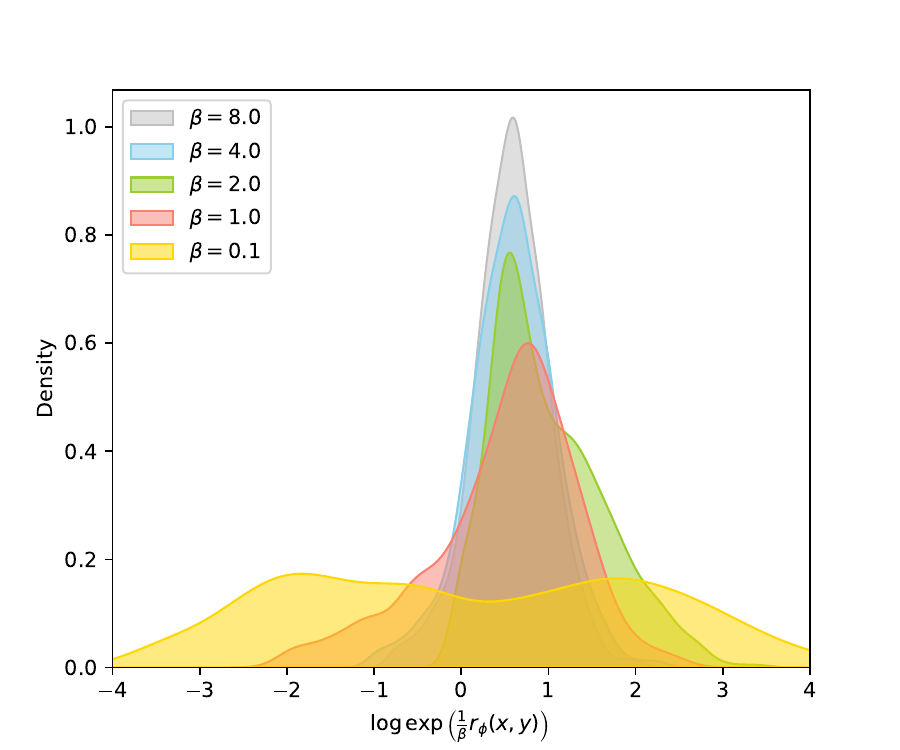}
	\end{minipage}}
	\subfigure[PL acc.]{
		\begin{minipage}[h]{0.15\textwidth}
			\centering
			\includegraphics[width=2.3cm]{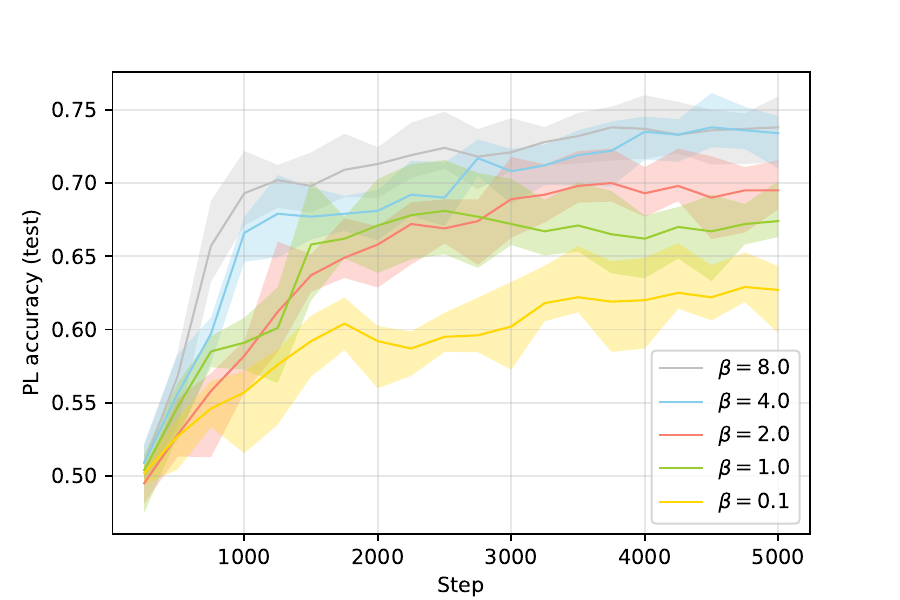}
	\end{minipage}}
	\caption{Effects of reward learning w.r.t. the reward controlling factor $\beta$ on the {\tt book} distribution (a-c) and on the {\tt ac,an,ba} distribution (d-f).} \label{fig:beta}
	\vspace{-0.5cm}
\end{wrapfigure}
Fig. \ref{fig:beta} illustrates our investigation into various values of $\beta \in {0.1,1.0,2.0,4.0,8.0}$ for reward learning. From Fig. \ref{fig:beta} (a) and (d), it is evident that when the reward controlling factor is too small, such as $\beta=0.1$, the reward distribution becomes very sharp and approaches $0$, providing limited information for policy fine-tuning. As $\beta$ increases, the reward distribution becomes flatter, which better represents the preferences of inputs for guiding policy fine-tuning. Additionally, we explore the learning of the combination term $\exp \left( \frac{1}{\beta}r_{\phi}(x,y) \right)$ for cross-entropy in the inner objective. Figures \ref{fig:beta} (b) and (e) demonstrate that as $\beta$ increases, the distribution of the combination term approaches a standard normal distribution, facilitating faster convergence of policy fine-tuning. Furthermore, we examine how $\beta$ influences PL by presenting the PL accuracy against the training steps for various values of $\beta$ in Fig. \ref{fig:beta} (c) and (f). We observe that for each value of $\beta$, as the number of training steps increases, the PL accuracy improves and eventually converges to a high plateau. Moreover, a larger value of $\beta$ yields better PL accuracy for both tasks. The experimental findings align with the theoretical result of Theorem \ref{thm:1}, suggesting that a larger number of iterations $K$ and a larger value of $\beta$ contribute to the algorithm's convergence to a more accurate stationary point.

\section{Conclusion}
Preference learning (PL) is a powerful framework for training language models to align with human values. Instead of discretely training a reward model solely for in-distribution (ID) PL, this work investigated an out-of-distribution (OOD) PL algorithm. The aim is to improve the generalization of reward modeling across various distributions. To address OOD PL, we proposed a meta-learning approach conducted through a bilevel optimization algorithm. We derived an upper bound for the convergence rate of the proposed algorithm. We conducted experiments on both controlled sentiment generation and knowledge answer generation. Results demonstrate that the generalized reward modeling achieved by our meta-learning approach effectively guides policy learning to align with human preferences.

\appendix
	\begin{footnotesize}
		\section{Parameter Estimation for Policy Learning} \label{apdx:paraest}
		Based on the objective function of policy in Eq. (\ref{eq:policyobj}), we have
		\begin{align*}
			& \mathop{\arg\max}_{\pi} \mathbb{E}_{x \sim \mathcal{D}_x} \left[ \mathbb{E}_{y\sim\pi(\cdot|x)} \left[ r_{\phi}(x,y) \right] - \beta\mathrm{D}_{\rm KL} \left( \pi(\cdot|x) || \mathcal{D}_{y|x}(\cdot) \right) \right] \\
			&= \mathop{\arg\max}_{\pi} \int_{x} \mathcal{D}_{x}(x) \left( \int_{y} \pi(y|x) r_{\phi}(x,y) dy - \beta \mathrm{D}_{\rm KL} \left( \pi(\cdot|x) || \mathcal{D}_{y|x}(\cdot) \right) \right) dx
		\end{align*}
		
		Then, we can formulate the following constrained policy optimization problem,
		\begin{align*}
			\mathop{\arg\max}_{\pi} &\int_{x} \mathcal{D}_{x}(x) \left( \int_{y} \pi(y|x) r_{\phi}(x,y) dy - \beta \mathrm{D}_{\rm KL} \left( \pi(\cdot|x) || \mathcal{D}_{y|x}(\cdot) \right) \right) dx \\
			{\rm s.t.}\ &\int_{y} \pi(y|x) dy = 1,\quad \forall x
		\end{align*}
		
		Next, we can form the Lagrangian,
		\begin{align*}
			\mathcal{L}(\pi, \alpha) =& \int_{x} \mathcal{D}_{x}(x) \left( \int_{y} \pi(y|x) r_{\phi}(x,y) dy -  \beta \mathrm{D}_{\rm KL} \left( \pi(\cdot|x) || \mathcal{D}_{y|x}(\cdot) \right) \right) dx\\
			& + \int_{x} \alpha_x \left( 1 - \int_{y} \pi(y|x) dy \right) dx,
		\end{align*}
		with $\alpha=\{\alpha_x: \forall x \in \mathcal{X} \}$ corresponding to the Lagrange multipliers.
		Now, we can differentiate $\mathcal{L}(\pi, \alpha)$ w.r.t. $\pi(y|x)$ as
		\begin{align*}
			&\frac{\partial \mathcal{L}(\pi, \alpha)}{\partial \pi(y|x)} \\
			=& \frac{\partial}{\partial \pi(y|x)} \int_{x} \int_y \pi(y|x) \left( \mathcal{D}_{x}(x) \left( r_{\phi}(x,y) - \beta \log \pi(y|x) + \beta \log \mathcal{D}_{y|x}(y)\right) - \alpha_x \right) dy + \alpha_x  dx \\
			=& \int_x \int_y \frac{\partial}{\partial \pi(y|x)} \pi(y|x) \left( \mathcal{D}_{x}(x) \left( r_{\phi}(x,y) - \beta \log \pi(y|x) + \beta \log \mathcal{D}_{y|x}(y)\right) - \alpha_x \right) dy dx \\
			=& \int_x \int_y \left( \mathcal{D}_{x}(x)r_{\phi}(x,y) - \beta\mathcal{D}_{x}(x)\log\pi(y|x) - \beta\mathcal{D}_{x}(x) + \beta\mathcal{D}_{x}(x)\log \mathcal{D}_{y|x}(y) - \alpha_x \right) dy dx
		\end{align*}
		
		Setting the above integrated term to zero, we can obtain a sufficient condition on $\frac{\partial \mathcal{L}(\pi, \alpha)}{\partial \pi(y|x)} = 0$,
		\begin{align} \label{eq:30}
			\pi_*(y|x) = \mathcal{D}_{y|x}(y) \exp\left( \frac{1}{\beta}r_{\phi}(x,y)\right) \exp\left( -\frac{\alpha_x}{\beta \mathcal{D}(x)} - 1 \right), \quad \forall y \forall x
		\end{align}
		
		Besides, the strict negativeness of the sencond-order partial derivative of $\mathcal{L}(\pi, \alpha)$ is shown by
		\begin{align*}
			\frac{\partial^2 \mathcal{L}(\pi, \alpha)}{\partial [\pi(y|x)]^2} = \int_x \int_y - \frac{\beta\mathcal{D}_{x}(x)}{\pi(y|x)} dydx < 0,
		\end{align*}
		since $\beta, \mathcal{D}_{x}(x), \pi(y|x) > 0$. This indicates the strict concaveness of $\mathcal{L}(\pi, \alpha)$. Therefore, Eq. (\ref{eq:30}) indicates a global optimization solution $\pi_* = \mathop{\arg\max}_{\pi} \mathcal{L}(\pi, \alpha)$.
		
		Since $\int_{y} \pi_*(y|x) dy = 1$, the second exponential term can be rewritten as a partition function $Z(x)$ that normalizes the conditional action distribution,
		\begin{align*}
			Z(x) = \exp\left( \frac{\alpha_x}{\beta \mathcal{D}_{x}(x)} + 1 \right) = \int_y \mathcal{D}_{y|x}(y) \exp\left( \frac{1}{\beta}r_{\phi}(x,y) \right) dy
		\end{align*}
		
		The optimal policy is therefore given by,
		\begin{align} \label{eq:normpi}
			\pi_*(y|x) = \frac{1}{Z(x)} \mathcal{D}_{y|x}(y) \exp\left( \frac{1}{\beta}r_{\phi}(x,y)\right), \quad \forall y \forall x
		\end{align}
		
		To obtain such a global optimization solution for the policy, we use a parameter estimation method, i.e., using a machine learning method to train a parameter estimated policy $\pi_{\theta}$ close to the globally optimal solution $\pi_*$ of Eq. (\ref{eq:normpi}) by solving the following supervised regression problem:
		\begin{align*}
			&\mathop{\arg\min}_{{\theta}} \mathbb{E}_{x \sim \mathcal{D}_x} \left[ \mathrm{D}_{\rm KL} \left( \pi_*(\cdot|x) || \pi_{\theta}(\cdot|x) \right) \right] \\
			=& \mathop{\arg\min}_{{\theta}} \mathbb{E}_{x \sim \mathcal{D}_x} \left[ \mathrm{D}_{\rm KL} \left( \frac{1}{Z(x)} \mathcal{D}_{y|x}(\cdot) \exp\left( \frac{1}{\beta}r_{\phi}(x,\cdot)\right) \bigg\Vert \pi_{\theta}(\cdot|x) \right) \right] \\
			=&\mathop{\arg\min}_{\theta} - \mathbb{E}_{x \sim \mathcal{D}_x} \mathbb{E}_{y \sim \mathcal{D}_{y|x}} \left[  \log \pi_{\theta}(y|x) \exp \left( \frac{1}{\beta}r_{\phi}(x,y) \right) \right]
		\end{align*}
		
		\section{Proof of Proposition \ref{pro:gradient}} \label{apdx:proproof}
		Based on the formulation of outer objective $\ell_{\rm PL}$ in Eq. (\ref{eq:outer}), we have for any $k\in\{0,1,2,\ldots,K-1\}$ and the input data $(x_k,y_k,y'_k)$,
		\begin{align*} \label{eq:pro1}
			&\frac{\partial \ell_{\rm PL}({\phi_k}, {\theta_{k,D}})}{\partial {\phi}_k} \\
			=&  - \frac{\partial}{\partial {\phi}_k} \log \sigma \left( \log \pi_{\theta_{k,D}}(y_k|x_k) - \log \pi_{\theta_{k,D}}(y'_k|x_k) \right) \\
			=& - \frac{\partial {\theta}_{k,D}}{\partial {\phi}_k} \left( \nabla_{{\theta}} \log \pi_{\theta_{k,D}}(y_k|x_k) - \nabla_{{\theta}} \log \pi_{{\theta}_{k,D}}(y'_k|x_k) \right) \\
			&\cdot \left(1 - \sigma \left( \log \pi_{\theta_{k,D}}(y_k|x_k) - \log \pi_{\theta_{k,D}}(y'_k|x_k) \right) \right)
		\end{align*}
		
		Then, let $A_k = \log \pi_{\theta_{k,D}}(y_k|x_k) -  \log \pi_{{\theta}_{k,D}}(y'_k|x_k)$, we have
		\begin{align}
			\frac{\partial \ell_{\rm PL}({\phi_k}, {\theta_{k,D}})}{\partial {\phi}_k} = - \frac{\partial {\theta}_{k,D}}{\partial {\phi}_k} \nabla_{{\theta}} A_k \left(1 - \sigma \left( A_k \right) \right)
		\end{align}
		
		Based on the formulation of inner objective $\ell_{\rm FT}$ in Eq. (\ref{eq:innerobj}), the iterative update for any $t\in \{0,1.2,\ldots,D-1\}$ based on the input data $(x_{k,y}, y_{k,t})$ takes the form 
		\begin{align*}
			{\theta}_{k,t+1} =& {\theta}_{k,t} - \alpha\nabla_{{\theta}}\ell_{\rm FT}({\phi_k}, \pi_{{k,t}}) \\
			=& {\theta}_{k,t} + \alpha\nabla_{\theta} \log \pi_{\theta_{k,t}}(y_{k,t}|x_{k,t}) \exp \left( \frac{1}{\beta}r_{\phi_k}(x_{k,t},y_{k,t}) \right)
		\end{align*}
		
		Then, we have
		\begin{align*}
			\frac{\partial {\theta}_{k,t+1}}{\partial {\phi_k}} = \frac{\partial {\theta}_{k,t}}{\partial {\phi}_k} &+ \alpha \frac{\partial {\theta}_{k,t}}{\partial {\phi}_k} \nabla^2_{{\theta}} \log \pi_{\theta_{k,t}}(y_{k,t}|x_{k,t}) \exp \left( \frac{1}{\beta}r_{\phi_k}(x_{k,t},y_{k,t}) \right)\\
			& + \frac{\alpha}{\beta} \nabla_{\phi} r_{\phi_k}(x_{k,t},y_{k,t}) \nabla_{{\theta}}^{\top} \log \pi_{\theta_{k,t}}(y_{k,t}|x_{k,t}) \exp \left( \frac{1}{\beta}r_{\phi_k}(x_{k,t},y_{k,t}) \right)
		\end{align*}
		
		To simplify the presentation, let $R_{k,t} := \exp \left( \frac{1}{\beta}r_{\phi_k}(x_{k,t},y_{k,t}) \right)$, $F_{k,t} := \log \pi_{\theta_{k,t}}(y_{k,t}|x_{k,t})$ and $H_{k,t} := r_{\phi_k}(x_{k,t},y_{k,t})$, we have
		\begin{align*}
			\frac{\partial {\theta}_{k,t+1}}{\partial {\phi}_k} =& \frac{\partial {\theta}_{k,t}}{\partial {\phi}_k} \left( I + \alpha \nabla_{{\theta}}^2 F_{k,t} R_{k,t} \right) + \frac{\alpha}{\beta} \nabla_{\phi} H_{k,t} \nabla_{\theta}^{\top} F_{k,t} R_{k,t}
		\end{align*}
		
		Telescoping the above equality over $t$ from $0$ to $D-1$ yields
		\begin{align} \label{eq:pro2}
			\begin{split}
				\frac{\partial {\theta}_{k,D}}{\partial {\phi_k}} =& \frac{\partial {\theta}_{k,0}}{\partial {\phi_k}} \prod_{t=0}^{D-1}\left( I + \alpha \nabla_{{\theta}}^2 F_{k,t} R_{k,t} \right) + \frac{\alpha}{\beta}\sum_{t=0}^{D-1} \nabla_{\phi} H_{k,t} \nabla_{{\theta}}^{\top} F_{k,t} R_{k,t} \prod_{j=t+1}^{D-1} \left( I + \alpha \nabla_{{\theta}}^2 F_{k,t} R_{k,t} \right)\\
				=& \frac{\alpha}{\beta}\sum_{t=0}^{D-1} \nabla_{\phi} H_{k,t} \nabla_{{\theta}}^{\top} F_{k,t} R_{k,t} \prod_{j=t+1}^{D-1} \left( I + \alpha \nabla_{{\theta}}^2 F_{k,t} R_{k,t} \right),
			\end{split}
		\end{align}
		
		where the second equation follows from the fact that $\frac{\partial {\theta}_{k,0}}{\partial \phi_k} = 0$. Combining Eq. (\ref{eq:pro1}) finishes the proof.
		
		\section{Supporting Lemmas}
		\begin{lemma} \label{lemma:smoothstrong}
			Based on Assumption \ref{asp:strong1}, \ref{asp:lip2} and \ref{asp:bound3}, $\ell_{\rm FT}(\phi, \theta)$ ($\mathcal{L}_{\rm FT}(\phi, \theta)$) is $\frac{BL}{\exp\beta}$-smoothness and $\frac{b\mu}{\exp\beta}$-strong convexity w.r.t. $\theta$.
		\end{lemma}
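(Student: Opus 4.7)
The plan is to prove the lemma by a direct second-derivative computation, exploiting the multiplicative structure of $\ell_{\rm FT}$ and sandwiching each factor using the stated assumptions.

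The starting observation is that in $\ell_{\rm FT}(\phi,\theta,z) = -\log\pi_\theta(y|x)\cdot\exp\!\bigl(\tfrac{1}{\beta} r_\phi(x,y)\bigr)$ the scalar factor $\exp(r_\phi(x,y)/\beta)$ is strictly positive and, crucially, independent of $\theta$. Differentiating twice in $\theta$ therefore gives
\begin{align*}
\nabla^2_\theta \ell_{\rm FT}(\phi,\theta,z) \;=\; \exp\!\bigl(\tfrac{1}{\beta} r_\phi(x,y)\bigr) \cdot \bigl(-\nabla^2_\theta \log \pi_\theta(y|x)\bigr),
\end{align*}
so everything reduces to bounding each of the two factors from above and below.

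Next I would invoke Assumptions \ref{asp:strong1} and \ref{asp:lip2}(2) to control the matrix factor. The $\mu$-strong convexity of $-\log\pi_\theta$ gives $-\nabla^2_\theta \log\pi_\theta(y|x) \succeq \mu I$, and the $L$-Lipschitzness of $-\nabla\log\pi_\theta$ gives the operator-norm bound $\Vert -\nabla^2_\theta \log\pi_\theta(y|x)\Vert \le L$, hence $-\nabla^2_\theta \log\pi_\theta(y|x) \preceq L I$. Assumption \ref{asp:bound3} then pins down the scalar factor $\exp(r_\phi(x,y)/\beta)$ between the constants $b/\exp\beta$ and $B/\exp\beta$ in the paper's convention. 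Multiplying the two sandwiches gives
\begin{align*}
\tfrac{b\mu}{\exp\beta}\, I \;\preceq\; \nabla^2_\theta \ell_{\rm FT}(\phi,\theta,z) \;\preceq\; \tfrac{BL}{\exp\beta}\, I,
\end{align*}
which is exactly $\frac{b\mu}{\exp\beta}$-strong convexity and $\frac{BL}{\exp\beta}$-smoothness of $\ell_{\rm FT}$ as a function of $\theta$.

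The extension to $\mathcal{L}_{\rm FT}(\phi,\theta) = \mathbb{E}_{z\sim\mathcal{D}_{xy}}[\ell_{\rm FT}(\phi,\theta,z)]$ is immediate: both bounds in the Hessian sandwich hold uniformly over $z$, so taking expectations preserves them and yields the same two constants for $\mathcal{L}_{\rm FT}$. The only real bookkeeping obstacle is Assumption \ref{asp:bound3}, which is stated for $\exp(r_\phi)$ rather than for $\exp(r_\phi/\beta)$; one has to be careful about how the $1/\beta$ scaling interacts with the bounds $b,B$ in order to land on the exact constants $b/\exp\beta$ and $B/\exp\beta$ claimed in the lemma. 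Once that translation is fixed, the rest is a short chain of matrix inequalities.
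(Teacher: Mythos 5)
Your proof is correct and takes essentially the same route as the paper's: factor the strictly positive, $\theta$-independent scalar $\exp(r_\phi(x,y)/\beta)$ out of the derivative and apply the $\mu$-strong convexity and $L$-smoothness of $-\log\pi_\theta(y|x)$, the only cosmetic difference being that you sandwich the Hessian $\nabla^2_\theta\ell_{\rm FT}$ between $\tfrac{b\mu}{\exp\beta}I$ and $\tfrac{BL}{\exp\beta}I$, whereas the paper establishes the same two constants via a gradient-difference bound and gradient monotonicity, which is an equivalent characterization for twice-differentiable losses. The bookkeeping worry you flag is real --- Assumption \ref{asp:bound3} bounds $\exp(r_\phi(x,y))$ rather than $\exp(r_\phi(x,y)/\beta)$, so the constants $b/\exp\beta$ and $B/\exp\beta$ do not literally follow without an extra argument relating the two --- but the paper's own proof makes exactly the same silent substitution, so this is a shared imprecision in the constants rather than a gap specific to your argument.
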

		
		\begin{proof}
			The $\frac{BL}{\exp\beta}$-smoothness of $\mathcal{L}_{\rm FT}(\phi, \theta)$ follows from
			\begin{align*}
				&\left\Vert \nabla_{\theta} \mathcal{L}_{\rm FT}(\phi, \theta) - \nabla_{\theta} \mathcal{L}_{\rm FT}(\phi, \theta') \right\Vert = \left\Vert \mathbb{E}_z\left[ \nabla_{\theta} \ell_{\rm FT}(\phi, \theta;z) \right] - \mathbb{E}_z \left[ \ell_{\rm FT}(\phi, \theta';z) \right]\right\Vert \\
				\overset{(i)}{\leq}& \mathbb{E}_z\left[ \left\Vert \nabla_{\theta} \log\pi_{\theta'}(y|x)\exp\left(\frac{1}{\beta}r_{\phi}(x,y) \right) - \nabla_{\theta} \log\pi_{\theta}(y|x)\exp\left(\frac{1}{\beta}r_{\phi}(x,y) \right) \right\Vert \right] \\
				{=}& \mathbb{E}_z\left[\exp\left(\frac{1}{\beta}r_{\phi}(x,y) \right) \left\Vert  \nabla_{\theta} \log\pi_{\theta'}(y|x) -  \nabla_{\theta} \log\pi_{\theta}(y|x)  \right\Vert\right] \\
				\overset{(ii)}{\leq}& \frac{BL}{\exp\beta} \Vert \theta - \theta' \Vert,
			\end{align*}
			where $(i)$ follows from Jensen's inequality and the definition of $\mathcal{L}_{\rm FT}(\phi, \theta)$ $(ii)$ follows from the $L$-smoothness of $-\log\pi_\theta(y|x)$ in Assumption \ref{asp:lip2} and the $B$-boundness of $\exp r_\phi(x,y)$ in Asssumption \ref{asp:bound3}.
			
			The $\frac{b\mu}{\exp\beta}$-strong convexity of $\mathcal{L}_{\rm FT}(\phi, \theta)$ follows from
			\begin{align*}
				&\left\langle \nabla_{\theta}\mathcal{L}_{\rm FT}(\phi, \theta) - \nabla_{\theta}\mathcal{L}_{\rm FT}(\phi, \theta'), \theta-\theta'  \right\rangle = \mathbb{E}_z \left[  \left\langle \nabla_{\theta}\ell_{\rm FT}(\phi, \theta) - \nabla_{\theta}\ell_{\rm FT}(\phi, \theta'), \theta-\theta'  \right\rangle \right] \\
				\overset{(i)}{=}& \mathbb{E}_z \left[ \left\langle  \nabla_{\theta}\log\pi_{\theta'}(y|x) \exp\left(\frac{1}{\beta}r_{\phi}(x,y)\right) -  \nabla_{\theta}\log\pi_{\theta}(y|x) \exp\left(\frac{1}{\beta}r_{\phi}(x,y)\right), \theta-\theta'  \right\rangle \right] \\
				\overset{(ii)}{\geq}& \mathbb{E}_z \left[ \exp\left(\frac{1}{\beta}r_{\phi}(x,y)\right) \left\Vert \theta - \theta' \right\Vert^2 \right] \overset{(iii)}{\geq} \frac{b\mu}{\exp\beta} \Vert \theta - \theta' \Vert^2,
			\end{align*} 
			where $(i)$ follows from the definition of $\mathcal{L}_{\rm FT}(\phi, \theta)$, $(ii)$ follows from the $\mu$-strong convexity of $-\log\pi_{\theta}(y|x)$ in Assumption \ref{asp:strong1}, $(iii)$ follows from the boundness assumption of $\exp(r_{\theta}(x,y))$ in Assumption \ref{asp:bound3}. 
		\end{proof}
		
		\begin{lemma} \label{lemma:lip}
			Based on Assumption \ref{asp:lip2} and \ref{asp:bound3}, we have the Lipschitz properties of $\nabla_{\phi}\nabla_{\theta} \ell_{\rm FT}(\phi, \theta)$ and $\nabla_{\theta}^2 \ell_{\rm FT}(\phi, \theta)$,
			\begin{align}
				\left\Vert \nabla_{\phi}\nabla_{\theta} \ell_{\rm FT}(\phi, \theta) - \nabla_{\phi}\nabla_{\theta} \ell_{\rm FT}(\phi, \theta') \right\Vert &\leq \frac{BTL}{\beta\exp\beta} \Vert \theta - \theta' \Vert \\
				\left\Vert \nabla_{\theta}^2 \ell_{\rm FT}(\phi, \theta) - \nabla_{\theta}^2 \ell_{\rm FT}(\phi, \theta') \right\Vert  &\leq  \frac{B\rho}{\exp\beta}\Vert \theta - \theta' \Vert \\
				\left\Vert \nabla_{\phi}\nabla_{\theta} \ell_{\rm FT}(\phi, \theta) - \nabla_{\phi}\nabla_{\theta} \ell_{\rm FT}(\phi', \theta) \right\Vert &\leq \frac{MB(P+T^2)}{\beta\exp\beta} \Vert \phi-\phi' \Vert \\
				\left\Vert \nabla_{\theta}^2 \ell_{\rm FT}(\phi, \theta) - \nabla_{\theta}^2 \ell_{\rm FT}(\phi', \theta) \right\Vert &\leq \frac{BTL}{\beta\exp\beta} \Vert \phi-\phi' \Vert
			\end{align}
		\end{lemma}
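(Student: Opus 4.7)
\textbf{Proof proposal for Lemma \ref{lemma:lip}.} The plan is to start from the explicit forms of the mixed second derivatives. Differentiating $\ell_{\rm FT}(\phi,\theta,z)=-\log\pi_\theta(y|x)\exp(r_\phi(x,y)/\beta)$ gives the factorizations
\begin{align*}
\nabla_\theta^2\ell_{\rm FT}(\phi,\theta,z) &= -\nabla_\theta^2\log\pi_\theta(y|x)\cdot \exp\!\left(\tfrac{1}{\beta}r_\phi(x,y)\right),\\
\nabla_\phi\nabla_\theta\ell_{\rm FT}(\phi,\theta,z) &= -\tfrac{1}{\beta}\nabla_\theta\log\pi_\theta(y|x)\,\nabla_\phi^\top r_\phi(x,y)\cdot \exp\!\left(\tfrac{1}{\beta}r_\phi(x,y)\right).
\end{align*}
Each derivative is a product of a $\theta$-dependent factor, a $\phi$-dependent factor, and the scalar $\exp(r_\phi/\beta)$, which is what allows a clean triangle-inequality decomposition.

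For the two bounds in $\theta$ (inequalities one and two), the $\phi$-dependent factor is held fixed, so only the $\theta$-factor changes. For inequality two I would apply the $\rho$-Lipschitzness of $-\nabla^2\log\pi_\theta$ (Assumption \ref{asp:lip2}(3)) and bound $\exp(r_\phi/\beta)$ by $B/\exp\beta$ exactly as in the proof of Lemma \ref{lemma:smoothstrong}. For inequality one I use the $L$-Lipschitzness of $-\nabla\log\pi_\theta$ (Assumption \ref{asp:lip2}(2)) together with $\Vert\nabla_\phi r_\phi\Vert\le T$ from the $T$-Lipschitzness of $r_\phi$, picking up the explicit $1/\beta$ prefactor from the gradient formula. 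Both calculations are routine applications of Cauchy-Schwarz and the boundedness assumption.

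For the two bounds in $\phi$ (inequalities three and four), the key device is to write $\exp(r_\phi/\beta)-\exp(r_{\phi'}/\beta)$ via the mean-value theorem as $\tfrac{1}{\beta}\exp(\xi/\beta)(r_\phi-r_{\phi'})$ for some intermediate $\xi$, then bound the exponential by $B/\exp\beta$ and $\vert r_\phi-r_{\phi'}\vert$ by $T\Vert\phi-\phi'\Vert$ using Assumption \ref{asp:lip2}(4). Inequality four follows immediately by combining this estimate with $\Vert\nabla_\theta^2\log\pi_\theta\Vert\le L$, which comes from the $L$-Lipschitzness of $-\nabla\log\pi_\theta$. Inequality three requires an extra add-and-subtract, since both $\nabla_\phi r_\phi$ and $\exp(r_\phi/\beta)$ depend on $\phi$: write
\begin{equation*}
\nabla_\phi^\top r_\phi\,e^{r_\phi/\beta}-\nabla_\phi^\top r_{\phi'}\,e^{r_{\phi'}/\beta}=(\nabla_\phi^\top r_\phi-\nabla_\phi^\top r_{\phi'})e^{r_\phi/\beta}+\nabla_\phi^\top r_{\phi'}(e^{r_\phi/\beta}-e^{r_{\phi'}/\beta}),
\end{equation*}
then use $P$-Lipschitzness of $\nabla r_\phi$ (Assumption \ref{asp:lip2}(5)) on the first term and the MVT estimate plus $\Vert\nabla r_{\phi'}\Vert\le T$ on the second, finally multiplying by $\Vert\nabla_\theta\log\pi_\theta\Vert\le M$ (Assumption \ref{asp:lip2}(1)) and the $1/\beta$ from the mixed derivative formula.

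The main obstacle is essentially bookkeeping rather than anything deep: inequality three is the only step where two distinct $\phi$-dependencies interact, so I would be most careful there to ensure the $P+T^2$ combination emerges correctly and that the factors of $B$, $M$, $1/\beta$, and $1/\exp\beta$ are tracked consistently with the convention used in Lemma \ref{lemma:smoothstrong}. Everything else reduces to a single application of a Lipschitz assumption together with a uniform bound on $\exp(r_\phi/\beta)$.
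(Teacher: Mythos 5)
Your proposal follows essentially the same route as the paper's proof: factor each second derivative into its $\theta$-part, $\phi$-part, and the scalar $\exp(r_\phi/\beta)$, apply the relevant Lipschitz assumption to the factor that varies, bound the rest uniformly via Assumptions \ref{asp:lip2} and \ref{asp:bound3}, and use the same add-and-subtract split for the third inequality. The only point worth noting is that your mean-value-theorem estimate for $\exp(r_\phi/\beta)-\exp(r_{\phi'}/\beta)$ carries an extra $1/\beta$ that the paper's derivation silently drops, so your third bound comes out as $\frac{MB(P+T^2/\beta)}{\beta\exp\beta}$ rather than $\frac{MB(P+T^2)}{\beta\exp\beta}$ --- this matches the stated constant only for $\beta\geq 1$, a bookkeeping looseness already present in the paper rather than a flaw in your argument.
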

         
		\begin{proof}
			Based on the definition of $\ell_{\rm FT}(\phi, \theta)$, we have for any $(x,y)$ and any $\theta, \theta'$,
			\begin{align*}
				&\left\Vert \nabla_{\phi}\nabla_{\theta} \ell_{\rm FT}(\phi, \theta) - \nabla_{\phi}\nabla_{\theta} \ell_{\rm FT}(\phi, \theta') \right\Vert\\
				=& \left\Vert \frac{1}{\beta} \exp\left(\frac{r_\phi(x,y)}{\beta}\right) \nabla_{\phi}^{\top} r_{\phi}(x,y) \nabla_{\theta}\log\pi_{\theta'}(y|x) - \frac{1}{\beta} \exp\left(\frac{r_\phi(x,y)}{\beta}\right) \nabla_{\phi}^{\top} r_{\phi}(x,y) \nabla_{\theta}\log\pi_{\theta}(y|x) \right\Vert \\
				{=}& \left\Vert \frac{1}{\beta} \exp\left(\frac{r_\phi(x,y)}{\beta}\right) \nabla_{\phi}^{\top} r_{\phi}(x,y) \right\Vert \left\Vert  \nabla_{\theta}\log\pi_{\theta'}(y|x) - \nabla_{\theta}\log\pi_{\theta}(y|x) \right\Vert \\
				\overset{(i)}{\leq}& \frac{BTL}{\beta\exp\beta} \left\Vert \theta - \theta' \right\Vert,
			\end{align*}
			where $(i)$ follows from Assumption \ref{asp:lip2} and \ref{asp:bound3}.
			
			Similarly, we also have
			\begin{align*}
				&\left\Vert \nabla_{\theta}^2 \ell_{\rm FT}(\phi, \theta) - \nabla_{\theta}^2 \ell_{\rm FT}(\phi, \theta') \right\Vert  \\
				=& \left\Vert \nabla_{\theta}^2 \log\pi_{\theta'}(y|x)\exp\left( \frac{r_\phi(x,y)}{\beta} \right) - \nabla_{\theta}^2 \log\pi_{\theta}(y|x)\exp\left( \frac{r_\phi(x,y)}{\beta} \right) \right\Vert \\
				\leq& \exp\left( \frac{r_\phi(x,y)}{\beta} \right) \left\Vert \nabla_{\theta}^2 \log\pi_{\theta'}(y|x) - \nabla_{\theta}^2 \log\pi_{\theta}(y|x) \right\Vert \\
				\overset{(i)}{\leq}& \frac{B\rho}{\exp\beta}\Vert \theta - \theta' \Vert,
			\end{align*}
			where $(i)$ follows from Assumption \ref{asp:lip2} and \ref{asp:bound3}.
			
		    Similarly, we have
			\begin{align*}
				&\left\Vert \nabla_{\phi}\nabla_{\theta} \ell_{\rm FT}(\phi, \theta) - \nabla_{\phi}\nabla_{\theta} \ell_{\rm FT}(\phi', \theta) \right\Vert\\
                =& \left\Vert \frac{1}{\beta} \exp\left(\frac{r_{\phi'}(x,y)}{\beta}\right) \nabla_{\phi}^{\top} r_{\phi'}(x,y) \nabla_{\theta}\log\pi_{\theta}(y|x) - \frac{1}{\beta} \exp\left(\frac{r_\phi(x,y)}{\beta}\right) \nabla_{\phi}^{\top} r_{\phi}(x,y) \nabla_{\theta}\log\pi_{\theta}(y|x) \right\Vert \\
                \leq& \frac{1}{\beta}\left\Vert \nabla_{\theta}\log\pi_{\theta}(y|x) \right\Vert \left\Vert \exp\left(\frac{r_{\phi'}(x,y)}{\beta}\right) \nabla_{\phi}^{\top} r_{\phi'}(x,y) - \exp\left(\frac{r_\phi(x,y)}{\beta}\right) \nabla_{\phi}^{\top} r_{\phi}(x,y) \right\Vert \\
                \leq& \frac{MB}{\beta\exp\beta} \left\Vert \nabla_{\phi}^{\top} r_{\phi'}(x,y) - \nabla_{\phi}^{\top} r_{\phi}(x,y) \right\Vert + \frac{MT}{\beta} \left\Vert \exp\left(\frac{r_{\phi'}(x,y)}{\beta}\right) - \exp\left(\frac{r_{\phi}(x,y)}{\beta}\right) \right\Vert \\
                \leq& \frac{MB(P+T^2)}{\beta\exp\beta} \Vert \phi-\phi' \Vert
			\end{align*}
		
		    Similarly, we have
			\begin{align*}
				&\left\Vert \nabla_{\theta}^2 \ell_{\rm FT}(\phi, \theta) - \nabla_{\theta}^2 \ell_{\rm FT}(\phi', \theta) \right\Vert  \\
                =& \left\Vert \nabla_{\theta}^2 \log\pi_{\theta}(y|x)\exp\left( \frac{r_{\phi'}(x,y)}{\beta} \right) - \nabla_{\theta}^2 \log\pi_{\theta}(y|x)\exp\left( \frac{r_\phi(x,y)}{\beta} \right) \right\Vert \\
                \leq& \Vert \nabla_{\theta}^2 \log\pi_{\theta}(y|x) \Vert \left\Vert \exp\left(\frac{r_{\phi'}(x,y)}{\beta}\right) - \exp\left(\frac{r_{\phi}(x,y)}{\beta}\right) \right\Vert \\
                \leq& \frac{BTL}{\beta\exp\beta} \Vert \phi-\phi' \Vert
		    \end{align*}
		\end{proof}
        
		\begin{lemma} \label{lemma3}
			Based on Assumption \ref{asp:lip2}, we have the Lipschitz property of the preference learning loss $\ell_{\rm PL}$, $\nabla_{\theta}\ell_{\rm  PL}(\phi, \theta)$,
			\begin{align}
				\Vert \ell_{\rm PL}(\phi, \theta) - \ell_{\rm PL}(\phi, \theta') \Vert \leq& 2M \Vert \theta - \theta' \Vert  \\ \label{eq:losspllip1}
				\left\Vert \nabla_{\theta}\ell_{\rm  PL}(\phi, \theta) - \nabla_{\theta}\ell_{\rm  PL}(\phi, \theta') \right\Vert \leq& (2L + M^2)\Vert \theta - \theta' \Vert 
			\end{align}\label{eq:losspllip2}
		\end{lemma}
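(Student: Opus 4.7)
The plan is to reduce both inequalities to bounds on the scalar quantity $A(\theta) := \log\pi_\theta(y|x) - \log\pi_\theta(y'|x)$, since the definition in Eq.~(\ref{eq:outer}) yields $\ell_{\rm PL}(\phi,\theta,\nu) = -\log\sigma(A(\theta))$, so the dependence on $\theta$ enters only through $A$. Two elementary analytic facts about the logistic function do the heavy lifting: $-\log\sigma$ has derivative $\sigma(\cdot)-1 \in [-1,0]$ and is therefore $1$-Lipschitz, while $\sigma$ itself is $\tfrac{1}{4}$-Lipschitz because $\sigma'(\cdot) = \sigma(\cdot)(1-\sigma(\cdot)) \leq 1/4$. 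Neither bound involves the reward $r_\phi$, so Assumptions~\ref{asp:strong1} and~\ref{asp:bound3} play no role; only the Lipschitz hypotheses on $-\log\pi_\theta$ and $\nabla\log\pi_\theta$ from Assumption~\ref{asp:lip2} are required.

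For the first inequality I would chain
\[ |\ell_{\rm PL}(\phi,\theta) - \ell_{\rm PL}(\phi,\theta')| \leq |A(\theta) - A(\theta')| \leq 2M\,\Vert \theta - \theta' \Vert, \]
using the $1$-Lipschitzness of $-\log\sigma$ for the first step and the triangle inequality together with the $M$-Lipschitz bound on $-\log\pi_\theta(y|x)$ (Assumption~\ref{asp:lip2}, item~(1)) applied to each of the two summands defining $A(\theta) - A(\theta')$ for the second.

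For the gradient bound, differentiating through the composition gives
\[ \nabla_\theta \ell_{\rm PL}(\phi,\theta) = -\bigl(1-\sigma(A(\theta))\bigr)\, G(\theta), \qquad G(\theta) := \nabla_\theta\log\pi_\theta(y|x) - \nabla_\theta\log\pi_\theta(y'|x). \]
I would then split $\nabla_\theta\ell_{\rm PL}(\phi,\theta) - \nabla_\theta\ell_{\rm PL}(\phi,\theta')$ via the standard ``product of a bounded scalar and a Lipschitz vector'' identity into two pieces. The first pairs $|1-\sigma(A(\theta))| \leq 1$ with $\Vert G(\theta) - G(\theta') \Vert \leq 2L\Vert\theta-\theta'\Vert$ (from the $L$-Lipschitzness of $\nabla\log\pi_\theta$ in Assumption~\ref{asp:lip2}, item~(2), applied to both $y$ and $y'$), contributing $2L\Vert\theta-\theta'\Vert$. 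The second pairs $\Vert G(\theta') \Vert \leq 2M$ (from item~(1) of Assumption~\ref{asp:lip2}, which upper-bounds $\Vert\nabla\log\pi_\theta\Vert$ by $M$) with $|\sigma(A(\theta))-\sigma(A(\theta'))| \leq \tfrac{1}{4}|A(\theta)-A(\theta')| \leq \tfrac{M}{2}\Vert\theta-\theta'\Vert$, contributing $2M \cdot \tfrac{M}{2}\Vert\theta-\theta'\Vert = M^2\Vert\theta-\theta'\Vert$. Summing the two contributions yields the claimed constant $2L + M^2$.

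The argument is essentially bookkeeping and I anticipate no serious obstacle. The only point demanding care is how the multiplicative factors in the cross term combine: the factor $2$ from splitting $A$ into its $y$/$y'$ parts, the factor $\tfrac{1}{4}$ from $\sup\sigma'$, and the factor $2M$ from the gradient magnitude must multiply to exactly $M^2$, and indeed $2M \cdot \tfrac{1}{4} \cdot 2M = M^2$. Thus the stated constants are tight under the assumed Lipschitz hypotheses.
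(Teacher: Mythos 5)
Your proof is correct and follows essentially the same route as the paper's: both arguments reduce everything to the scalar $A(\theta)$, use the $1$-Lipschitzness of $-\log\sigma$ plus the triangle inequality for the first bound, and for the gradient bound split the product into a $\sigma\le 1$ term (yielding $2L$) and a $\tfrac{1}{4}$-Lipschitz-of-$\sigma$ cross term paired with $\Vert\nabla\log\pi_\theta\Vert\le M$ (yielding $M^2$). The constant accounting you flag at the end matches the paper's $\tfrac{M}{2}\cdot 2M = M^2$ step exactly.
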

         
		\begin{proof}
			For Eq. (\ref{eq:losspllip1}), we have
			\begin{align*}
				&\Vert \ell_{\rm PL}(\phi, \theta) - \ell_{\rm PL}(\phi, \theta') \Vert \\
				=& \Vert \log \sigma \left( \log\pi_{\theta}(y|x) - \log\pi_{\theta}(y'|x) \right) - \log \sigma \left( \log\pi_{\theta'}(y|x) - \log\pi_{\theta'}(y'|x) \right) \Vert \\
				\overset{(i)}{\leq} & \Vert \log\pi_{\theta}(y|x) - \log\pi_{\theta}(y'|x) - \log\pi_{\theta'}(y|x) + \log\pi_{\theta'}(y'|x)  \Vert \\
				\overset{(ii)}{\leq} & \Vert \log\pi_{\theta}(y|x) - \log\pi_{\theta'}(y|x)\Vert + \Vert \log\pi_{\theta'}(y'|x) -  \log\pi_{\theta}(y'|x) \Vert \\
				\overset{(iii)}{\leq} & 2M \Vert \theta - \theta' \Vert,
			\end{align*}
			where $(i)$ follows from the $1$-Lipschitz of $\log\sigma(\cdot)$, $(ii)$ follows from the triangle inequality and $(iii)$ follows from Assumption \ref{asp:lip2}.
			
			For Eq. (\ref{eq:losspllip2}), based on the triangle inequality, we have
			\begin{align*}
				&\left\Vert \nabla_{\theta}\ell_{\rm  PL}(\phi, \theta) - \nabla_{\theta}\ell_{\rm  PL}(\phi, \theta') \right\Vert \\
				=& \Vert \sigma( \log\pi_{\theta}(y'|x) - \log\pi_{\theta}(y|x) ) \left( \nabla_{\theta} \log \pi_{\theta}(y|x) -  \nabla_{\theta} \log \pi_{\theta}(y'|x) \right) \\
				& \quad - \sigma( \log\pi_{\theta'}(y'|x) - \log\pi_{\theta'}(y|x) ) \left( \nabla_{\theta} \log \pi_{\theta'}(y|x) -  \nabla_{\theta} \log \pi_{\theta'}(y'|x) \right) \Vert \\
				\leq& \Vert  \sigma( \log\pi_{\theta}(y'|x) - \log\pi_{\theta}(y|x) ) \left( \nabla_{\theta} \log \pi_{\theta}(y|x) -  \nabla_{\theta} \log \pi_{\theta}(y'|x) - \nabla_{\theta} \log \pi_{\theta'}(y|x)  +  \nabla_{\theta} \log \pi_{\theta'}(y'|x) \right) \Vert \\
				& + \Vert \left(\sigma( \log\pi_{\theta}(y'|x) - \log\pi_{\theta}(y|x) ) - \sigma( \log\pi_{\theta'}(y'|x) - \log\pi_{\theta'}(y|x) )\right) \left( \nabla_{\theta} \log \pi_{\theta'}(y|x) -  \nabla_{\theta} \log \pi_{\theta'}(y'|x) \right)  \Vert \\
				\overset{(i)}{\leq}& \Vert  \nabla_{\theta} \log \pi_{\theta}(y|x) - \nabla_{\theta} \log \pi_{\theta'}(y|x) \Vert + \Vert \nabla_{\theta} \log \pi_{\theta}(y'|x) -  \nabla_{\theta} \log \pi_{\theta'}(y'|x) \Vert \\
				& + \frac{M}{2} \left(\Vert \log\pi_{\theta}(y'|x) - \log\pi_{\theta'}(y'|x) \Vert + \Vert \log\pi_{\theta}(y|x) - \log\pi_{\theta'}(y|x) \Vert \right) \\
				\overset{(ii)}{\leq}& (2L + M^2)\Vert \theta - \theta' \Vert,
			\end{align*}
			where the first term of $(i)$ follows from the fact w.r.t. logistic funtion that $\sigma(\cdot) < 1$, the second term of $(i)$ follows from the $\frac{1}{4}$-lipschitz of logistic funtion $\sigma$ and Assumption \ref{asp:lip2}, $(ii)$ follows from Assumption \ref{asp:lip2}.
		\end{proof}
		
		\begin{lemma}\label{lemma4}
			Based on Assumption \ref{asp:lip2}, for any $k \in \{ 0,1,2,\ldots, K-1 \}$, we have
			\begin{align}
				\left\Vert \frac{\partial {\theta_*}}{\partial \phi_k}  \right\Vert \leq \frac{MT}{\beta \mu}
			\end{align}
		\end{lemma}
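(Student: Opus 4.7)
The plan is to apply the implicit function theorem to the first-order optimality condition characterizing $\theta_*$, then bound the two resulting factors separately using strong convexity and Lipschitz continuity. Because Assumption \ref{asp:strong1} together with Lemma \ref{lemma:smoothstrong} guarantees that $\mathcal{L}_{\rm FT}(\phi, \cdot)$ is strongly convex, the minimizer $\theta_*$ is unique and may be viewed as a function $\theta_*(\phi_k)$, so the stationarity condition reads
\begin{equation*}
    \nabla_{\theta}\mathcal{L}_{\rm FT}\bigl(\phi_k, \theta_*(\phi_k)\bigr) = 0.
\end{equation*}

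Next, I would differentiate this identity with respect to $\phi_k$ (via the chain rule) to obtain
\begin{equation*}
    \nabla_{\phi}\nabla_{\theta}\mathcal{L}_{\rm FT}(\phi_k, \theta_*) \;+\; \nabla_{\theta}^{2}\mathcal{L}_{\rm FT}(\phi_k, \theta_*)\,\frac{\partial \theta_*}{\partial \phi_k} \;=\; 0,
\end{equation*}
which, after solving for the Jacobian and taking operator norms, yields
\begin{equation*}
    \left\Vert \frac{\partial \theta_*}{\partial \phi_k} \right\Vert \;\leq\; \left\Vert \bigl(\nabla_{\theta}^{2}\mathcal{L}_{\rm FT}(\phi_k, \theta_*)\bigr)^{-1} \right\Vert \cdot \left\Vert \nabla_{\phi}\nabla_{\theta}\mathcal{L}_{\rm FT}(\phi_k, \theta_*) \right\Vert.
\end{equation*}

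The first factor is controlled by strong convexity: Assumption \ref{asp:strong1} (combined with the positivity of $\exp(r_\phi/\beta)$ from Assumption \ref{asp:bound3}) yields a lower bound of the form $\mu$ (modulo the exponential factor of $r_\phi$) on the smallest eigenvalue of the Hessian, so the inverse Hessian has operator norm at most $1/\mu$ (up to the same factor). For the second factor, I would directly compute
\begin{equation*}
    \nabla_{\phi}\nabla_{\theta}\mathcal{L}_{\rm FT}(\phi, \theta) \;=\; -\,\mathbb{E}_{z}\!\left[\tfrac{1}{\beta}\,\nabla_{\phi} r_{\phi}(x,y)\,\nabla_{\theta}^{\!\top}\log\pi_{\theta}(y|x)\,\exp\!\left(\tfrac{r_{\phi}(x,y)}{\beta}\right)\right]\!,
\end{equation*}
and bound the three pieces using Assumption \ref{asp:lip2}: $\Vert\nabla_{\theta}\log\pi_{\theta}\Vert \leq M$, $\Vert\nabla_{\phi} r_{\phi}\Vert \leq T$, with a $1/\beta$ coming from the chain rule through the exponential. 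The exponential factor, bounded via Assumption \ref{asp:bound3}, cancels against the same factor appearing in the strong-convexity constant from Lemma \ref{lemma:smoothstrong}, leaving the clean bound $MT/(\beta\mu)$.

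The main obstacle is the bookkeeping of the exponential prefactor $\exp(r_\phi/\beta)$: it appears in both the Hessian (via strong convexity) and the mixed partial derivative (directly from differentiation), and one must be careful that these factors cancel rather than compound in order to obtain the stated bound without residual constants $B$ or $b$. Once this cancellation is made explicit, the rest is a straightforward product of Lipschitz constants and a single factor of $1/\beta$ from differentiating $\exp(r_\phi/\beta)$ in $\phi$.
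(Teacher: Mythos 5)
Your proposal follows essentially the same route as the paper's proof: implicit differentiation of the stationarity condition $\nabla_\theta\mathcal{L}_{\rm FT}(\phi_k,\theta_*)=0$, solving for the Jacobian, and bounding the mixed partial by $MT/\beta$ and the inverse Hessian by $1/\mu$ via Assumptions \ref{asp:strong1} and \ref{asp:lip2}, with the $\exp(r_\phi/\beta)$ prefactors cancelling between the two factors. You are in fact more explicit than the paper about the bookkeeping of those exponential prefactors, which the paper's own derivation disposes of rather tersely in its step $(i)$.
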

		
		\begin{proof}
			Based on the optimality of ${\theta_*}$, we have $\nabla_{{\theta}} \mathcal{L}_{\rm FT}({\phi_k}, {\theta_*}) = 0$, which in conjunction the implicit differentiation theorem, yields
			\begin{align} \label{eq:2ordopt}
				\nabla_{\phi}\nabla_{{\theta}} \mathcal{L}_{\rm FT}({\phi_k}, {\theta_*}) + \frac{\partial {\theta_*}}{\partial \phi_k}\nabla_{{\theta}}^2 \mathcal{L}_{\rm FT}({\phi_k}, {\theta_*}) = 0,
			\end{align}
			then, we have 
			\begin{align*}
				&\left\Vert \frac{\partial {\theta_*}}{\partial \phi_k}  \right\Vert\\
				=& \left\Vert \nabla_{\phi}\nabla_{{\theta}} \mathcal{L}_{\rm FT}({\phi_k}, {\theta_*})\left[\nabla_{{\theta}}^2 \mathcal{L}_{\rm FT}({\phi_k}, {\theta_*})\right]^{-1} \right\Vert \\
				{=}& \left\Vert  \mathbb{E} \left[\frac{1}{\beta} \nabla_{\phi}^{\top} r_{\phi_k}(x,y) \nabla_{{\theta}} \log\pi_{\theta_*}(y|x) \exp\left(\frac{1}{\beta}r_{\phi_{k}}(x,y)\right)\right] \mathbb{E}\left[ \exp\left(\frac{1}{\beta}r_{\phi_{k}}(x,y)\right)\nabla_{{\theta}}^2 \log\pi_{\theta_*}(y|x) \right]^{-1} \right\Vert \\
				\overset{(i)}{\leq}& \frac{1}{\beta}\mathbb{E} \left[\left\Vert \nabla_{\phi}^{\top} r_{\phi_k}(x,y) \right\Vert \left\Vert  \nabla_{{\theta}} \log\pi_{\theta_*}(y|x) \right\Vert \right] \mathbb{E} \left[\left\Vert \nabla^2_{{\theta}} \log\pi_{\theta_*}(y|x) \right\Vert^{-1}\right] \overset{(ii)}{\leq} \frac{MT}{\beta \mu},
			\end{align*}
			where $(i)$ follows from Jensen's inequality, $(ii)$ follows from Assumption \ref{asp:strong1}, \ref{asp:lip2}.
		\end{proof}

		\section{Proof of the Main Theorem} \label{apdx:proofthm1}
		\subsection{Useful Lemmas}
		\begin{lemma} \label{lemma:smooth-phi}
			Based on Assumption \ref{asp:strong1}, \ref{asp:lip2} and \ref{asp:bound3}, $\ell_{\rm PL}(\phi,\theta)$ ($\mathcal{L}_{\rm PL}(\phi,\theta)$) is $P_{\phi}$-smoothness w.r.t. $\phi$, where
			\begin{align}
				P_{\phi} = \frac{4M^3B(P+T^2)}{\beta b \mu} + \frac{(6L+M^2)B^2T^2M^2}{\beta^2 b^2 \mu^2} + \frac{2\rho B^3 T^2 M^3}{\beta^2 b^3  \mu^3}
			\end{align}
		\end{lemma}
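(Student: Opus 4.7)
The strategy is to treat $\mathcal{L}_{\rm PL}(\phi) := \mathcal{L}_{\rm PL}(\phi, \theta_*(\phi))$ as a composition and control its gradient via implicit differentiation. Since $\ell_{\rm PL}$ in \eqref{eq:outer} depends on $\phi$ only through the inner minimizer $\theta_*$, the chain rule gives $\nabla_\phi \mathcal{L}_{\rm PL}(\phi) = [\partial \theta_*(\phi)/\partial \phi]^\top \nabla_\theta \mathcal{L}_{\rm PL}(\phi, \theta_*(\phi))$; by the implicit-function identity \eqref{eq:2ordopt} (as used in the proof of Lemma \ref{lemma4}), $\partial \theta_*(\phi)/\partial \phi = -\nabla_\phi\nabla_\theta \mathcal{L}_{\rm FT}(\phi,\theta_*)[\nabla_\theta^2 \mathcal{L}_{\rm FT}(\phi,\theta_*)]^{-1}$, where the Hessian inverse is well-defined with operator-norm bound $\exp\beta/(b\mu)$ by the strong convexity established in Lemma \ref{lemma:smoothstrong}.

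To bound $\|\nabla_\phi \mathcal{L}_{\rm PL}(\phi_1) - \nabla_\phi \mathcal{L}_{\rm PL}(\phi_2)\|$ by $P_\phi \|\phi_1-\phi_2\|$, I would abbreviate $J_i := \partial\theta_*/\partial\phi\big|_{\phi_i}$ and $g_i := \nabla_\theta \mathcal{L}_{\rm PL}(\phi_i,\theta_*(\phi_i))$, and split $J_1^\top g_1 - J_2^\top g_2 = J_1^\top (g_1-g_2) + (J_1-J_2)^\top g_2$. The first summand is controlled using Lemma \ref{lemma4} for $\|J_1\|$, the Lipschitz constant $2L+M^2$ of $\nabla_\theta \ell_{\rm PL}$ from Lemma \ref{lemma3}, and the Lipschitz continuity of $\theta_*(\cdot)$ with constant $MT/(\beta\mu)$ obtained by integrating Lemma \ref{lemma4} along a segment. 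These factors combine (up to Jensen/triangle slack that accounts for the gap between $2L+M^2$ and $6L+M^2$) to produce the middle $(6L+M^2)B^2T^2M^2/(\beta^2 b^2\mu^2)$ contribution to $P_\phi$.

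The harder piece is $\|J_1-J_2\|$. Using the resolvent identity $A^{-1}-B^{-1}=A^{-1}(B-A)B^{-1}$ with $A := \nabla_\theta^2 \mathcal{L}_{\rm FT}(\phi_1,\theta_*^{(1)})$ and $B := \nabla_\theta^2 \mathcal{L}_{\rm FT}(\phi_2,\theta_*^{(2)})$, $J_1-J_2$ decomposes into the variation of the cross-Hessian $\nabla_\phi\nabla_\theta \mathcal{L}_{\rm FT}$ times $A^{-1}$, plus $\nabla_\phi\nabla_\theta \mathcal{L}_{\rm FT}(\phi_2,\theta_*^{(2)}) A^{-1}(B-A)B^{-1}$. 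Combining the mixed $(\phi,\theta)$-Lipschitz bounds on $\nabla_\phi\nabla_\theta \ell_{\rm FT}$ from Lemma \ref{lemma:lip}, the Lipschitz continuity of $\theta_*$, $\|A^{-1}\| \leq \exp\beta/(b\mu)$, and the $O(M)$ bound on $\|g_2\|$ from Lemma \ref{lemma3} yields the first term $4M^3B(P+T^2)/(\beta b\mu)$ of $P_\phi$. The second piece uses the $\theta$- and $\phi$-Lipschitz constants of $\nabla_\theta^2 \mathcal{L}_{\rm FT}$ from Lemma \ref{lemma:lip} (producing the $\rho$-factor), sandwiched between $\|A^{-1}\|\|B^{-1}\| \leq (\exp\beta/(b\mu))^2$, multiplied by $\|\nabla_\phi\nabla_\theta \mathcal{L}_{\rm FT}\| \leq BMT/\beta$ and $\|g_2\| \leq 2M$, producing the third $2\rho B^3 T^2 M^3/(\beta^2 b^3\mu^3)$ term.

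The main obstacle will be the constant bookkeeping: tracking every appearance of $B,b,\beta,M,T,P,L,\rho$ through the nested chain rule and resolvent expansion and matching each to the correct monomial of $P_\phi$, while passing from pointwise bounds on $\ell_{\rm PL},\ell_{\rm FT}$ to their expectations $\mathcal{L}_{\rm PL},\mathcal{L}_{\rm FT}$ via Jensen's inequality. Once the three contributions are isolated and bounded, the triangle inequality assembles them into the claimed $P_\phi$-smoothness of $\mathcal{L}_{\rm PL}(\phi,\theta)$ with respect to $\phi$.
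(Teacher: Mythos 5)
Your plan is correct and follows essentially the same route as the paper: the paper's proof of Lemma \ref{lemma:smooth-phi} simply invokes Lemma 2.2 of Ghadimi and Wang together with Lemmas \ref{lemma:smoothstrong}, \ref{lemma:lip} and \ref{lemma3} and then applies Jensen's inequality, and your implicit-differentiation, Jacobian-splitting and resolvent-identity derivation is precisely the argument underlying that cited lemma. The only caveat is the constant bookkeeping you flag yourself --- neither you nor the paper verifies that the three contributions assemble into exactly the stated $P_{\phi}$ --- but that is not a gap relative to the paper's own one-line, citation-based proof.
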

		
		\begin{proof}
			The smoothness of $\ell_{\rm PL}(\phi,\theta)$ is adopted from Lemma 2.2 in \cite{ghadimi2018approximation} with Lemma \ref{lemma:smoothstrong}, \ref{lemma:lip} and \ref{lemma3}. Then, the smoothness of $\mathcal{L}_{\rm PL}(\phi,\theta)$ follows from Jensen's inequality.
		\end{proof}

		\begin{lemma} \label{lemma5}
			Based on Assumption \ref{asp:strong1}, \ref{asp:lip2} and \ref{asp:bound3}, we have for any $k \in \{ 0,1,\ldots, K-1 \}$ and any $t \in \{ 0,1,\ldots,D-1 \}$, 
			\begin{align} \label{eq:lemma7}
				\mathbb{E} \left[ \left\Vert \theta_{k,t+1} - \theta_* \right\Vert^2 \right] \leq \left( 1 -  \frac{2\alpha b\mu BL}{(b\mu + BL)\exp\beta} \right) \mathbb{E}\left[ \left\Vert \theta_{k,t} - \theta_* \right\Vert \right] + \alpha^2\sigma^2
			\end{align}
		\end{lemma}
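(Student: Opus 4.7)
The plan is to carry out a standard one-step SGD contraction analysis, exploiting the fact that Lemma \ref{lemma:smoothstrong} makes $\mathcal{L}_{\rm FT}(\phi_k,\theta)$ both $\tfrac{BL}{\exp\beta}$-smooth and $\tfrac{b\mu}{\exp\beta}$-strongly convex in $\theta$, together with the optimality condition $\nabla_\theta \mathcal{L}_{\rm FT}(\phi_k,\theta_*)=0$. The stochastic-gradient variance bound from Assumption \ref{asp:var4} will handle the noise term, and Assumption \ref{asp:strong1} together with Assumption \ref{asp:bound3} will be invoked through Lemma \ref{lemma:smoothstrong} rather than directly.

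First, I would expand the squared distance using the update rule $\theta_{k,t+1}=\theta_{k,t}-\alpha\nabla_\theta \ell_{\rm FT}(\phi_k,\theta_{k,t};z_{k,t})$, obtaining the three standard pieces: $\|\theta_{k,t}-\theta_*\|^2$, a cross term $-2\alpha\langle \nabla_\theta \ell_{\rm FT}(\phi_k,\theta_{k,t};z_{k,t}),\theta_{k,t}-\theta_*\rangle$, and a squared-gradient term $\alpha^2\|\nabla_\theta \ell_{\rm FT}(\phi_k,\theta_{k,t};z_{k,t})\|^2$. Taking the conditional expectation with respect to the sample $z_{k,t}$ and using unbiasedness converts the cross term into one involving the full gradient $\nabla_\theta \mathcal{L}_{\rm FT}(\phi_k,\theta_{k,t})$; for the squared-gradient term I would apply the variance decomposition $\mathbb{E}\|X\|^2=\|\mathbb{E}X\|^2+\mathrm{Var}(X)$ and absorb the variance into $\sigma^2$ from Assumption \ref{asp:var4}.

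Second, I would invoke the classical co-coercivity inequality for a function that is both $L_1$-smooth and $\mu_1$-strongly convex (with $L_1=BL/\exp\beta$ and $\mu_1=b\mu/\exp\beta$):
\begin{equation*}
\langle \nabla \mathcal{L}_{\rm FT}(\phi_k,\theta_{k,t}),\theta_{k,t}-\theta_*\rangle \;\geq\; \frac{\mu_1 L_1}{\mu_1+L_1}\|\theta_{k,t}-\theta_*\|^2 + \frac{1}{\mu_1+L_1}\|\nabla \mathcal{L}_{\rm FT}(\phi_k,\theta_{k,t})\|^2,
\end{equation*}
using $\nabla \mathcal{L}_{\rm FT}(\phi_k,\theta_*)=0$. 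Substituting $\mu_1 L_1/(\mu_1+L_1)=\tfrac{b\mu BL}{(b\mu+BL)\exp\beta}$ produces exactly the contraction coefficient $1-\tfrac{2\alpha b\mu BL}{(b\mu+BL)\exp\beta}$ that appears in the statement.

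The main obstacle, and the reason this argument works cleanly at all, is choosing the step size implicitly so that the negative $-\tfrac{2\alpha}{\mu_1+L_1}\|\nabla\mathcal{L}_{\rm FT}\|^2$ contribution from co-coercivity dominates the positive $\alpha^2\|\nabla\mathcal{L}_{\rm FT}\|^2$ term from the expanded quadratic; this requires $\alpha\le 2/(\mu_1+L_1)$, which in turn imposes the tacit step-size condition $\alpha\le 2\exp\beta/(b\mu+BL)$. Under this condition the gradient-norm contributions cancel, leaving only $\alpha^2\sigma^2$ as the residual noise, yielding the claimed bound.
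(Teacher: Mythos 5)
Your proposal is correct and follows essentially the same route as the paper's own proof: expand the squared distance via the SGD update, take conditional expectation with the variance decomposition from Assumption \ref{asp:var4}, apply the strong-convexity/smoothness co-coercivity inequality (Nesterov, Theorem 2.1.12) with the constants from Lemma \ref{lemma:smoothstrong}, and cancel the gradient-norm terms under the step-size condition $\alpha \le 2\exp\beta/(b\mu+BL)$, which the paper likewise invokes only at the final step. No substantive differences.
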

		
		\begin{proof}
			Based on the updating rule of $\theta_{k,t+1} = \theta_{k,t} - \alpha \nabla_{\theta} \ell_{\rm FT} \left( {\phi_{k}}, \theta_{k,t}; z_{k,t} \right)$ and algebraic manipulations, we have
			\begin{align*} 
				\left\Vert \theta_{k,t+1} - \theta_* \right\Vert^2 =& \left\Vert \theta_{k,t+1} - \theta_{k,t} \right\Vert^2 + 2\langle \theta_{k,t+1} - \theta_{k,t}, \theta_{k,t} - \theta_* \rangle + \left\Vert \theta_{k,t} - \theta_* \right\Vert^2 \\
				=& \alpha^2 \left\Vert \nabla_{\theta} \ell_{\rm FT} \left( {\phi_{k}}, \theta_{k,t}; z_{k,t} \right) \right\Vert^2 + 2\alpha \langle \nabla_{\theta} \ell_{\rm FT} \left( {\phi_{k}}, \theta_{k,t}; z_{k,t} \right), \theta_{k,t} - \theta_* \rangle + \left\Vert \theta_{k,t} - \theta_* \right\Vert^2
			\end{align*}
			
			Taking expectation on both sides conditioning on $\theta_{k,t}$, we have
			\begin{align*}
				&\mathbb{E} \left[ \left\Vert \theta_{k,t+1} - \theta_* \right\Vert^2 \right] \\
				\overset{(i)}{\leq}& \alpha^2\left( \sigma^2 + \left\Vert \nabla_{\theta} \mathcal{L}_{\rm FT} \left( {\phi_{k}}, \theta_{k,t} \right) \right\Vert^2 \right) - 2\alpha\langle \nabla_{\theta} \mathcal{L}_{\rm FT} \left( {\phi_{k}}, \theta_{k,t} \right), \theta_{k,t} - \theta_* \rangle + \left\Vert \theta_{k,t} - \theta_* \right\Vert^2 \\
				\overset{(ii)}{\leq}& \alpha^2 \sigma^2 + \alpha^2\left\Vert \nabla_{\theta} \mathcal{L}_{\rm FT} \left( {\phi_{k}}, \theta_{k,t} \right) \right\Vert^2 \\
				&- 2\alpha \left( \frac{b\mu BL}{(b\mu + BL)\exp\beta}\left\Vert \theta_{k,t} - \theta_* \right\Vert^2 + \frac{\exp\beta \left\Vert \nabla_{\theta} \mathcal{L}_{\rm FT} \left( {\phi_{k}}, \theta_{k,t} \right) \right\Vert^2}{b\mu + BL}\right) + \left\Vert \theta_{k,t} - \theta_* \right\Vert^2 \\
				=& \alpha^2 \sigma^2 - \alpha \left( \frac{2\exp\beta}{b\mu + BL} - \alpha \right) \left\Vert \nabla_{\theta} \mathcal{L}_{\rm FT} \left( {\phi_{k}}, \theta_{k,t} \right) \right\Vert^2 + \left( 1 -  \frac{2\alpha b\mu BL}{(b\mu + BL)\exp\beta} \right) \left\Vert \theta_{k,t} - \theta_* \right\Vert^2,
			\end{align*}
			where $(i)$ follows from the definition of the variance ${\rm Var[X]} = \mathbb{E}[X^2] - \mathbb{E}[X]^2$, $(ii)$ follows from the fact that the $\frac{b\mu}{\exp\beta}$-strong convexity and $\frac{BL}{\exp\beta}$-smoothness of $\mathcal{L}_{\rm FT}({\phi}, {\theta})$ w.r.t. $\theta$ in Lemma \ref{lemma:smoothstrong} deduces that $\langle \nabla_{\theta} \mathcal{L}_{\rm FT} \left( {\phi_{k}}, \theta_{k,t} \right) - \nabla_{\theta} \mathcal{L}_{\rm FT} \left( {\phi_{k}}, \theta_* \right), \theta_{k,t} - \theta_* \rangle \geq \frac{b\mu BL}{(b\mu + BL)\exp\beta} \Vert \theta_{k,t} - \theta_* \Vert^2 + \frac{\exp\beta}{b\mu + BL}\Vert \nabla_{\theta} \mathcal{L}_{\rm FT} \left( {\phi_{k}}, \theta_{k,t} \right) \Vert^2$ (Theorem 2.1.12 in \citep{nesterov2018lectures}), where $\nabla_{\theta} \mathcal{L}_{\rm FT} \left( {\phi_{k}}, \theta_* \right) = 0$. 
			
			Using the condition $\alpha \leq \frac{2\exp\beta}{b\mu + BL}$ and unconditioning on $\theta_{k,t-1}$, we have Eq. (\ref{eq:lemma7}).
		\end{proof}
		
		\begin{lemma} \label{lemma:4}
			Based on Assumption \ref{asp:strong1}, \ref{asp:lip2} and \ref{asp:bound3}, we have for any $k \in \{0,1,2,\ldots, K-1\}$,
			\begin{align}\label{eq:lemma4}
				\begin{split}
					&\mathbb{E} \left[\left\Vert \frac{\partial \theta_{k,D}}{\partial \phi_k} - \frac{\partial \theta_*}{\partial \phi_k} \right\Vert^2\right] + \frac{T^2(2L+M^2)^2}{4\beta^2\mu^2} \mathbb{E}\left[ \left\Vert \theta_{k,D} - \theta_* \right\Vert^2\right] \\
					\leq& \left(3\left(1-\frac{\alpha b\mu}{\exp\beta}\right)^{2}\right)^D \frac{T^2(2L+M^2)^2}{4\beta^2\mu^2} \left(\frac{4M^2}{(2L+M^2)^2}  +  \left\Vert \theta_{0} - \theta_* \right\Vert^2\right) \\
					& + \left( \frac{\alpha^2\sigma^2T^2(2L+M^2)^2}{4\beta^2\mu^2} +  6\alpha^2 \left( \sigma_1^2 + \frac{M^2 T^2}{\beta^2 \mu^2}\sigma_2^2 \right) \right)\sum_{t=0}^{D-1} \left(3\left(1-\frac{\alpha b\mu}{\exp\beta}\right)^{2}\right)^t
				\end{split}
			\end{align}
		\end{lemma}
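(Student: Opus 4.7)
The strategy is to track a coupled quantity $a_t+\gamma b_t$, where $a_t:=\mathbb{E}\bigl\|\partial\theta_{k,t}/\partial\phi_k-\partial\theta_*/\partial\phi_k\bigr\|^2$, $b_t:=\mathbb{E}\|\theta_{k,t}-\theta_*\|^2$, and the weight $\gamma:=T^2(2L+M^2)^2/(4\beta^2\mu^2)$ is chosen specifically so that a single scalar recursion in $a_t+\gamma b_t$ closes. Telescoping that recursion from $t=0$ to $D-1$ and applying the geometric-sum formula will yield~\eqref{eq:lemma4}.

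First I would establish a recursion for $b_t$. Using the SGD update $\theta_{k,t+1}=\theta_{k,t}-\alpha\nabla_\theta\ell_{\rm FT}(\phi_k,\theta_{k,t};z_{k,t})$ together with the $(b\mu/\exp\beta)$-strong convexity and $(BL/\exp\beta)$-smoothness of $\mathcal{L}_{\rm FT}$ in $\theta$ (Lemma~\ref{lemma:smoothstrong}), Assumption~\ref{asp:var4}, and a step-size condition on $\alpha$, the same argument that produces Lemma~\ref{lemma5} gives the sharper form $b_{t+1}\le(1-\alpha b\mu/\exp\beta)^2 b_t+\alpha^2\sigma^2$. Next I would differentiate the SGD update with respect to $\phi_k$ to obtain
\begin{align*}
\tfrac{\partial\theta_{k,t+1}}{\partial\phi_k}
=\tfrac{\partial\theta_{k,t}}{\partial\phi_k}\bigl[I-\alpha\nabla_\theta^2\ell_{\rm FT}(\phi_k,\theta_{k,t};z_{k,t})\bigr]-\alpha\nabla_\phi\nabla_\theta\ell_{\rm FT}(\phi_k,\theta_{k,t};z_{k,t}),
\end{align*}
subtract $\partial\theta_*/\partial\phi_k$ from both sides, and insert the implicit-function identity~\eqref{eq:2ordopt} to rewrite the residual as three pieces: (i) a contracting term $(\partial\theta_{k,t}/\partial\phi_k-\partial\theta_*/\partial\phi_k)[I-\alpha\nabla_\theta^2\ell_{\rm FT}(\phi_k,\theta_{k,t};z_{k,t})]$; (ii) a bias term driven by $\theta_{k,t}-\theta_*$ through the Lipschitz bounds of Lemma~\ref{lemma:lip} and the norm bound $\|\partial\theta_*/\partial\phi_k\|\le MT/(\beta\mu)$ of Lemma~\ref{lemma4}; and (iii) a zero-mean stochastic term whose variance is controlled by $\sigma_1^2$ and $\sigma_2^2$.

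Applying $\|u+v+w\|^2\le 3(\|u\|^2+\|v\|^2+\|w\|^2)$ and taking expectation yields a recursion of the form $a_{t+1}\le 3(1-\alpha b\mu/\exp\beta)^2 a_t+c_1\,b_t+6\alpha^2\bigl(\sigma_1^2+\tfrac{M^2T^2}{\beta^2\mu^2}\sigma_2^2\bigr)$, with $c_1$ computed from the Lipschitz constants in Lemma~\ref{lemma:lip}. The weight $\gamma$ is calibrated so that $c_1\,b_t$ is absorbed by a fraction of $3\gamma(1-\alpha b\mu/\exp\beta)^2 b_t$ coming from the $b_t$-recursion; this calibration is precisely what produces the factor $(2L+M^2)^2/(4\beta^2\mu^2)$ in $\gamma$. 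Adding $\gamma$ times the $b_t$-recursion to the $a_t$-recursion then gives the clean contraction
\begin{align*}
a_{t+1}+\gamma b_{t+1}\le 3\Bigl(1-\tfrac{\alpha b\mu}{\exp\beta}\Bigr)^2\bigl(a_t+\gamma b_t\bigr)+\Bigl(\alpha^2\sigma^2\gamma+6\alpha^2\bigl(\sigma_1^2+\tfrac{M^2T^2}{\beta^2\mu^2}\sigma_2^2\bigr)\Bigr).
\end{align*}

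A standard unrolling turns this into the geometric sum in~\eqref{eq:lemma4}. For the initial state, the SFT initialization in Alg.~\ref{alg:meta-train} makes $\theta_{k,0}$ independent of $\phi_k$, so $\partial\theta_{k,0}/\partial\phi_k=0$ and hence $a_0=\|\partial\theta_*/\partial\phi_k\|^2\le M^2T^2/(\beta^2\mu^2)$ by Lemma~\ref{lemma4}; therefore $a_0+\gamma b_0\le\gamma\bigl(4M^2/(2L+M^2)^2+\|\theta_0-\theta_*\|^2\bigr)$, matching the geometric-decay prefactor exactly. The main obstacle is the bookkeeping around the choice of $\gamma$: one must verify that the cross-term $c_1\,b_t$ arising from the Lipschitz-based control of $\nabla_\theta^2\ell_{\rm FT}(\phi_k,\theta_{k,t};z_{k,t})-\nabla_\theta^2\mathcal{L}_{\rm FT}(\phi_k,\theta_*)$ and $\nabla_\phi\nabla_\theta\ell_{\rm FT}(\phi_k,\theta_{k,t};z_{k,t})-\nabla_\phi\nabla_\theta\mathcal{L}_{\rm FT}(\phi_k,\theta_*)$ is exactly dominated by the $b_t$-recursion, and that the residual noise constants collapse into precisely the factor of $6$ and the combination $\sigma_1^2+(M^2T^2/\beta^2\mu^2)\sigma_2^2$ shown in the statement. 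Everything else (telescoping, geometric summation, initialization) is routine once the joint recursion is in hand.
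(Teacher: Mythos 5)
Your proposal follows essentially the same route as the paper's proof: the same coupled Lyapunov quantity $a_t+\gamma b_t$ with $\gamma=T^2(2L+M^2)^2/(4\beta^2\mu^2)$, the same three-way decomposition of the derivative residual via the implicit-function identity at $\theta_*$, the same $\|u+v+w\|^2\le 3(\cdot)$ step with Lemmas \ref{lemma:smoothstrong}, \ref{lemma:lip} and \ref{lemma4}, and the same initialization argument $\partial\theta_{k,0}/\partial\phi_k=0$. The only cosmetic difference is in how the cross-term is absorbed: you posit a sharpened $b_t$-recursion with factor $(1-\alpha b\mu/\exp\beta)^2$, whereas the paper keeps Lemma \ref{lemma5} as stated and instead invokes the constraint $\beta\le\log\frac{4\alpha B^2(L^2\mu^2+M^2\rho^2)}{b\mu(2L+M^2)^2}$ (condition (ii) of Theorem \ref{thm:1}) to dominate the coefficient of $b_t$ by $3\gamma(1-\alpha b\mu/\exp\beta)^2$ — a bookkeeping choice, not a different argument.
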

		
		\begin{proof}
			Based on the updating rule of SGD $\theta_{k,t+1} = \theta_{k,t} - \alpha \nabla_{{\theta}} \ell_{\rm FT} ({\phi_k}, {\theta_{k,t}}; z_{k,t})$, for any $t \in \{0,1,2,\ldots,D-1 \}$, we have
			\begin{align}\label{eq:sgdft}
				\frac{\partial \theta_{k,t+1}}{\partial \phi_k} = \frac{\partial \theta_{k,t}}{\partial \phi_k} - \alpha \left( \nabla_{\phi} \nabla_{{\theta}} \ell_{\rm FT} ({\phi_k}, {\theta_{k,t}}; z_{k,t}) - \frac{\partial \theta_{k,t}}{\partial \phi_k} \nabla^2_{{\theta}} \ell_{\rm FT} ({\phi_k}, {\theta_{k,t}}; z_{k,t}) \right)
			\end{align}
			
			Based on the optimality of ${\theta_*}$, we have $\nabla_{{\theta}} \mathcal{L}_{\rm FT}({\phi_k}, {\theta_*}) = 0$, which in conjunction the implicit differentiation theorem, yields
			\begin{align} \label{eq:2ordopt}
				\nabla_{\phi}\nabla_{{\theta}} \mathcal{L}_{\rm FT}({\phi_k}, {\theta_*}) + \frac{\partial {\theta_*}}{\partial \phi_k}\nabla_{{\theta}}^2 \mathcal{L}_{\rm FT}({\phi_k}, {\theta_*}) = 0
			\end{align}
			
			Based on Eq. (\ref{eq:sgdft}) and Eq. (\ref{eq:2ordopt}), we have
			\begin{align*}
				\frac{\partial \theta_{k,t+1}}{\partial \phi_k} - \frac{\partial \theta_*}{\partial \phi_k} =& \frac{\partial \theta_{k,t}}{\partial \phi_k} - \frac{\partial \theta_*}{\partial \phi_k} - \alpha \left( \nabla_{\phi} \nabla_{{\theta}} \ell_{\rm FT} ({\phi_k}, {\theta_{k,t}}; z_{k,t}) - \frac{\partial \theta_{k,t}}{\partial \phi_k} \nabla^2_{{\theta}} \ell_{\rm FT} ({\phi_k}, {\theta_{k,t}}; z_{k,t}) \right) \\
				&+ \alpha \left( \nabla_{\phi}\nabla_{{\theta}} \mathcal{L}_{\rm FT}({\phi_k}, {\theta_*}) + \frac{\partial {\theta_*}}{\partial \phi_k}\nabla_{{\theta}}^2 \mathcal{L}_{\rm FT}({\phi_k}, {\theta_*}) \right) \\
				=& \frac{\partial \theta_{k,t}}{\partial \phi_k} - \frac{\partial \theta_*}{\partial \phi_k} - \alpha \left( \nabla_{\phi} \nabla_{{\theta}} \ell_{\rm FT} ({\phi_k}, {\theta_{k,t}}; z_{k,t}) - \nabla_{\phi}\nabla_{{\theta}} \mathcal{L}_{\rm FT}({\phi_k}, {\theta_*}) \right) \\
				&- \alpha \left(\frac{\partial \theta_{k,t}}{\partial \phi_k} - \frac{\partial \theta_*}{\partial \phi_k} \right) \nabla^2_{{\theta}} \ell_{\rm FT} ({\phi_k}, {\theta_{k,t}}; z_{k,t}) \\
				&+ \alpha \frac{\partial \theta_*}{\partial \phi_k} \left( \nabla_{{\theta}}^2 \mathcal{L}_{\rm FT}({\phi_k}, {\theta_*}) - \nabla^2_{{\theta}} \ell_{\rm FT} ({\phi_k}, {\theta_{k,t}}; z_{k,t}) \right)
			\end{align*}
			
			Thus, taking the $\ell_2$-norm and expectation, we have
			\begin{align*}
				&\mathbb{E} \left[\left\Vert \frac{\partial \theta_{k,t+1}}{\partial \phi_k} - \frac{\partial \theta_*}{\partial \phi_k} \right\Vert^2\right] \\
				\leq& 3\mathbb{E}\left[\left\Vert I - \alpha \nabla^2_{{\theta}} \ell_{\rm FT} ({\phi_k}, {\theta_{k,t}}; z_{k,t}) \right\Vert^2 \left\Vert \frac{\partial \theta_{k,t}}{\partial \phi_k} - \frac{\partial \theta_*}{\partial \phi_k} \right\Vert^2 \right]\\
				&+ 3\alpha^2 \mathbb{E} \left[ \left\Vert \nabla_{\phi} \nabla_{{\theta}} \ell_{\rm FT} ({\phi_k}, {\theta_{k,t}}; z_{k,t}) - \nabla_{\phi}\nabla_{{\theta}} \mathcal{L}_{\rm FT}({\phi_k}, {\theta_*}) \right\Vert^2 \right]\\
				&+ 3\alpha^2 \mathbb{E} \left[ \left\Vert \frac{\partial \theta_*}{\partial \phi_k}\right\Vert^2 \left\Vert \nabla_{{\theta}}^2 \mathcal{L}_{\rm FT}({\phi_k}, {\theta_*}) - \nabla^2_{{\theta}} \ell_{\rm FT} ({\phi_k}, {\theta_{k,t}}; z_{k,t}) \right\Vert^2 \right] \\
				\overset{(i)}{\leq}& 3\left(1-\frac{\alpha b\mu}{\exp\beta}\right)^2 \mathbb{E}\left[\left\Vert \frac{\partial \theta_{k,t}}{\partial \phi_k} - \frac{\partial \theta_*}{\partial \phi_k} \right\Vert^2\right] + \frac{6\alpha^2B^2T^2}{\beta^2\exp2\beta}\left(L^2 +\frac{M^2\rho^2}{\mu^2}\right) \mathbb{E}\left[ \left\Vert \theta_{k,t} - \theta_* \right\Vert^2\right] \\
				&+ 6\alpha^2 \left( \sigma_1^2 + \frac{M^2 T^2}{\beta^2 \mu^2}\sigma_2^2 \right),
			\end{align*}
			
			where $(i)$ uses Lemma \ref{lemma:smoothstrong} and the fact that
			\begin{align*}
				&\mathbb{E}\left[ \left\Vert \nabla_{\phi} \nabla_{{\theta}} \ell_{\rm FT} ({\phi_k}, {\theta_{k,t}}; z) - \nabla_{\phi}\nabla_{{\theta}} \mathcal{L}_{\rm FT}({\phi_k}, {\theta_*}) \right\Vert^2 \right] \\
				\leq& 2\mathbb{E}\left[ \left\Vert \nabla_{\phi} \nabla_{{\theta}} \ell_{\rm FT} ({\phi_k}, {\theta_{k,t}}; z) - \nabla_{\phi}\nabla_{{\theta}} \ell_{\rm FT}({\phi_k},  {\theta_*};z) \right\Vert^2 \right] \\
				&+ 2\mathbb{E}\left[\left\Vert \nabla_{\phi}\nabla_{{\theta}} \ell_{\rm FT}({\phi_k},  {\theta_*};z) - \nabla_{\phi}\nabla_{{\theta}} \mathcal{L}_{\rm FT}({\phi_k}, {\theta_*}) \right\Vert^2 \right] \\
				\overset{(i)}{\leq}& 2\frac{B^2T^2L^2}{\beta^2\exp2\beta}\mathbb{E}\left[ \left\Vert \theta_{k,t} - \theta_* \right\Vert^2 \right] + 2\sigma_1^2;
			\end{align*}
			\begin{align*}
				& \mathbb{E}\left[\left\Vert \nabla_{{\theta}}^2 \mathcal{L}_{\rm FT}({\phi_k}, {\theta_*}) - \nabla^2_{{\theta}} \ell_{\rm FT} ({\phi_k}, {\theta_{k,t}}; z_{k,t}) \right\Vert^2\right] \\
				\leq& 2\mathbb{E}\left[ \left\Vert \nabla^2_{{\theta}} \ell_{\rm FT} ({\phi_k}, {\theta_{k,t}}; z_{k,t}) -  \nabla^2_{{\theta}} \ell_{\rm FT} ({\phi_k}, {\theta_*}; z_{k,t}) \right\Vert^2 + \left\Vert \nabla^2_{{\theta}} \ell_{\rm FT} ({\phi_k}, {\theta_*}; z_{k,t}) - \nabla^2_{{\theta}}\mathcal{L}_{\rm FT}({\phi_k}, {\theta_*}) \right\Vert^2 \right] \\
				\overset{(ii)}{\leq}& 2\frac{B^2\rho^2}{\exp2\beta}\mathbb{E}\left[ \left\Vert \theta_{k,t} - \theta_* \right\Vert^2 \right] + 2\sigma_2^2,
			\end{align*}
			where $(i)$ and $(ii)$ follow from Lemma \ref{lemma:lip}.
			
			Thus, we have
			\begin{align*}
				&\mathbb{E} \left[\left\Vert \frac{\partial \theta_{k,t+1}}{\partial \phi_k} - \frac{\partial \theta_*}{\partial \phi_k} \right\Vert^2\right] + \frac{T^2(2L+M^2)^2}{4\beta^2\mu^2} \mathbb{E}\left[ \left\Vert \theta_{k,t+1} - \theta_* \right\Vert^2\right]\\
				\leq& 3\left(1-\frac{\alpha b\mu}{\exp\beta}\right)^2\mathbb{E}\left[\left\Vert \frac{\partial \theta_{k,t}}{\partial \phi_k} - \frac{\partial \theta_*}{\partial \phi_k} \right\Vert^2\right] + \frac{6\alpha^2B^2T^2}{\beta^2\exp2\beta}\left(L^2 +\frac{M^2\rho^2}{\mu^2}\right) \mathbb{E}\left[ \left\Vert \theta_{k,t} - \theta_* \right\Vert^2\right] \\
				&+ \frac{T^2(2L+M^2)^2}{4\beta^2\mu^2} \mathbb{E}\left[ \left\Vert \theta_{k,t+1} - \theta_* \right\Vert^2\right] + 6\alpha^2 \left( \sigma_1^2 + \frac{M^2 T^2}{\beta^2 \mu^2}\sigma_2^2 \right) \\
				\overset{(i)}{\leq}& 3\left(1-\frac{\alpha b\mu}{\exp\beta}\right)^2 \mathbb{E}\left[\left\Vert \frac{\partial \theta_{k,t}}{\partial \phi_k} - \frac{\partial \theta_*}{\partial \phi_k} \right\Vert^2\right]  + \frac{\alpha^2\sigma^2T^2(2L+M^2)^2}{4\beta^2\mu^2} +  6\alpha^2 \left( \sigma_1^2 + \frac{M^2 T^2}{\beta^2 \mu^2}\sigma_2^2 \right)  \\
				&+ \left( \frac{6\alpha^2B^2T^2}{\beta^2\exp2\beta}\left(L^2 +\frac{M^2\rho^2}{\mu^2}\right) +  \frac{T^2(2L+M^2)^2}{4\beta^2\mu^2} \left( 1 - \frac{2\alpha b\mu BL}{(b\mu + BL)\exp\beta} \right) \right) \mathbb{E}\left[ \left\Vert \theta_{k,t} - \theta_* \right\Vert^2\right] \\
				\overset{(ii)}{\leq}& 3\left(1-\frac{\alpha b\mu}{\exp\beta}\right)^2 \left( \mathbb{E}\left[\left\Vert \frac{\partial \theta_{k,t}}{\partial \phi_k} - \frac{\partial \theta_*}{\partial \phi_k} \right\Vert^2\right] + \frac{T^2(2L+M^2)^2}{4\beta^2\mu^2} \mathbb{E}\left[ \left\Vert \theta_{k,t} - \theta_* \right\Vert^2\right] \right)\\
				&+ \frac{\alpha^2\sigma^2T^2(2L+M^2)^2}{4\beta^2\mu^2} +  6\alpha^2 \left( \sigma_1^2 + \frac{M^2 T^2}{\beta^2 \mu^2}\sigma_2^2 \right),
			\end{align*}
			where $(i)$ follows from Lemma \ref{lemma5}, $(ii)$ follows from a constrain on $\beta$ such that $\beta \leq \log \frac{4\alpha B^2(L^2\mu^2 + M^2\rho^2)}{b\mu (2L+M^2)^2}$.

			Telescoping over $t$ from $0$ to $D-1$, we have
			\begin{align}
				\begin{split}
					&\mathbb{E} \left[\left\Vert \frac{\partial \theta_{k,t+1}}{\partial \phi_k} - \frac{\partial \theta_*}{\partial \phi_k} \right\Vert^2\right] + \frac{T^2(2L+M^2)^2}{4\beta^2\mu^2} \mathbb{E}\left[ \left\Vert \theta_{k,t+1} - \theta_* \right\Vert^2\right] \\
					{\leq}& \left(3\left(1-\frac{\alpha b\mu}{\exp\beta}\right)^{2}\right)^D \left(\left\Vert \frac{\partial \theta_{0,k}}{\partial \phi_k} - \frac{\partial \theta_*}{\partial \phi_k} \right\Vert^2 + \frac{T^2(2L+M^2)^2}{4\beta^2\mu^2}  \left\Vert \theta_{k,0} - \theta_* \right\Vert^2 \right) \\
					& + \left( \frac{\alpha^2\sigma^2T^2(2L+M^2)^2}{4\beta^2\mu^2} +  6\alpha^2 \left( \sigma_1^2 + \frac{M^2 T^2}{\beta^2 \mu^2}\sigma_2^2 \right) \right)\sum_{t=0}^{D-1} \left(3\left(1-\frac{\alpha b\mu}{\exp\beta}\right)^{2}\right)^t
				\end{split}
			\end{align}
			
			Applying the initialization $\theta_{k,0} = \theta_{0}$ at each inner-loop updating interation, we have the result in Eq. (\ref{eq:lemma4}).
			
		\end{proof}

		\begin{lemma} \label{lemma:3}
			Based on Assumption \ref{asp:strong1}, \ref{asp:lip2} and \ref{asp:bound3}, we have for any $k \in \{ 0,1,2,\ldots,K-1\}$,
			\begin{align} \label{eq:lemma3}
				\begin{split}
					&\mathbb{E} \left[\left\Vert \frac{\partial \mathcal{L}_{\rm PL}({\phi_k}, {\theta_{k,D}})}{\partial \phi_k} - \frac{\partial \mathcal{L}_{\rm PL} ({\phi_k}, {\theta_*})}{\partial \phi_{k}} \right\Vert^2\right] \\
					\leq&\frac{2M^2T^2(2L+M^2)^2}{\beta^2\mu^2} \left(3\left(1-\frac{\alpha b\mu}{\exp\beta}\right)^{2}\right)^D \left(\frac{4M^2}{(2L+M^2)^2}  +  \left\Vert \theta_{0} - \theta_* \right\Vert^2\right) \\
					& + 8M^2 \left( \frac{\alpha^2\sigma^2T^2(2L+M^2)^2}{4\beta^2\mu^2} +  6\alpha^2 \left( \sigma_1^2 + \frac{M^2 T^2}{\beta^2 \mu^2}\sigma_2^2 \right) \right)\sum_{t=0}^{D-1} \left(3\left(1-\frac{\alpha b\mu}{\exp\beta}\right)^{2}\right)^t
				\end{split}
			\end{align}
		\end{lemma}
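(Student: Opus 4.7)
The plan is to reduce the left-hand side to an affine combination of the two quantities controlled by Lemma \ref{lemma:4}, namely $\mathbb{E}\left[\left\Vert \partial\theta_{k,D}/\partial\phi_k - \partial\theta_*/\partial\phi_k \right\Vert^2\right]$ and $\mathbb{E}\left[\left\Vert \theta_{k,D}-\theta_*\right\Vert^2\right]$. The starting point is the observation that in the outer objective $\mathcal{L}_{\rm PL}(\phi_k, \theta)$ the parameter $\phi_k$ enters only through the inner-loop variable $\theta$, so the chain rule gives $\frac{\partial \mathcal{L}_{\rm PL}(\phi_k, \theta)}{\partial \phi_k} = \frac{\partial \theta}{\partial \phi_k}\, \nabla_{\theta}\mathcal{L}_{\rm PL}(\phi_k, \theta)$ evaluated at $\theta = \theta_{k,D}$ or $\theta = \theta_*$ (this is consistent with the explicit derivation of Proposition \ref{pro:gradient}, where $-\nabla_{\theta}A_k(1-\sigma(A_k))$ is precisely $\nabla_{\theta}\ell_{\rm PL}$).

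Next I would add and subtract the cross term $\frac{\partial \theta_*}{\partial \phi_k}\nabla_{\theta}\mathcal{L}_{\rm PL}(\phi_k, \theta_{k,D})$ to decompose
\begin{align*}
\frac{\partial \mathcal{L}_{\rm PL}(\phi_k, \theta_{k,D})}{\partial \phi_k} - \frac{\partial \mathcal{L}_{\rm PL}(\phi_k, \theta_*)}{\partial \phi_k} =& \left(\frac{\partial \theta_{k,D}}{\partial \phi_k} - \frac{\partial \theta_*}{\partial \phi_k}\right)\nabla_{\theta}\mathcal{L}_{\rm PL}(\phi_k, \theta_{k,D}) \\
& + \frac{\partial \theta_*}{\partial \phi_k}\left(\nabla_{\theta}\mathcal{L}_{\rm PL}(\phi_k, \theta_{k,D}) - \nabla_{\theta}\mathcal{L}_{\rm PL}(\phi_k, \theta_*)\right).
\end{align*}
Taking the squared norm and applying $\Vert a+b\Vert^2 \leq 2\Vert a\Vert^2 + 2\Vert b\Vert^2$ together with submultiplicativity splits the bound into two product terms, each of which can be controlled by the supporting lemmas: Lemma \ref{lemma3} gives $\Vert \nabla_{\theta}\mathcal{L}_{\rm PL}(\phi_k, \theta_{k,D})\Vert \leq 2M$ and $\Vert \nabla_{\theta}\mathcal{L}_{\rm PL}(\phi_k, \theta_{k,D}) - \nabla_{\theta}\mathcal{L}_{\rm PL}(\phi_k, \theta_*)\Vert \leq (2L+M^2)\Vert \theta_{k,D}-\theta_*\Vert$ (via Jensen applied to the pointwise Lipschitz estimates for $\ell_{\rm PL}$), while Lemma \ref{lemma4} supplies $\Vert \partial \theta_*/\partial \phi_k\Vert \leq MT/(\beta\mu)$.

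Collecting constants then yields
\begin{align*}
\mathbb{E}\left[\left\Vert \frac{\partial \mathcal{L}_{\rm PL}(\phi_k, \theta_{k,D})}{\partial \phi_k} - \frac{\partial \mathcal{L}_{\rm PL}(\phi_k, \theta_*)}{\partial \phi_k}\right\Vert^2\right] \leq 8M^2\, \mathbb{E}\left[\left\Vert \frac{\partial \theta_{k,D}}{\partial \phi_k} - \frac{\partial \theta_*}{\partial \phi_k}\right\Vert^2\right] + \frac{2M^2 T^2 (2L+M^2)^2}{\beta^2\mu^2}\, \mathbb{E}\left[\Vert \theta_{k,D}-\theta_*\Vert^2\right].
\end{align*}
The key algebraic observation, and the step that makes the final constants line up, is that the coefficient ratio $\frac{2M^2 T^2 (2L+M^2)^2}{\beta^2\mu^2} \big/ (8M^2) = \frac{T^2(2L+M^2)^2}{4\beta^2\mu^2}$ is exactly the weight appearing on $\mathbb{E}\left[\Vert \theta_{k,D}-\theta_*\Vert^2\right]$ in the left-hand side of Lemma \ref{lemma:4}. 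Therefore the right-hand side above equals $8M^2$ times the LHS of Lemma \ref{lemma:4}, and invoking that lemma directly produces the stated bound in Eq. \eqref{eq:lemma3}.

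\textbf{Main obstacle.} The mechanics are routine applications of the supporting lemmas; the only nontrivial step is the bookkeeping that ensures the two terms generated by $\Vert a+b\Vert^2 \leq 2\Vert a\Vert^2 + 2\Vert b\Vert^2$ combine in the exact ratio needed to factor as $8M^2$ multiplied by the composite quantity bounded in Lemma \ref{lemma:4}. If that coincidence were off by a constant one would have to match the two telescoping terms separately, which would still work but would produce messier constants; as it stands, a single invocation of Lemma \ref{lemma:4} closes the argument.
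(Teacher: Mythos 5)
Your proposal is correct and follows essentially the same route as the paper's own proof: the same chain-rule factorization, the same add-and-subtract decomposition with the cross term $\frac{\partial\theta_*}{\partial\phi_k}\nabla_{\theta}\mathcal{L}_{\rm PL}(\phi_k,\theta_{k,D})$, the same invocations of Lemmas \ref{lemma3} and \ref{lemma4} for the $2M$, $(2L+M^2)$, and $MT/(\beta\mu)$ bounds, and the same observation that after applying $(a+b)^2\leq 2(a^2+b^2)$ the two coefficients factor as $8M^2$ times exactly the composite quantity bounded in Lemma \ref{lemma:4}. Nothing is missing.
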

		
		\begin{proof}
			Based on $\frac{\partial \mathcal{L}_{\rm PL} (\phi_k, \theta_*)}{\partial \phi_{k}} = \frac{\partial \mathcal{L}_{\rm PL}(\theta_*)}{\partial \phi_{k}} = \frac{\partial \theta_*}{\partial \phi_k} \nabla_{{\theta}}  \mathcal{L}_{\rm PL}(\phi_k, \theta_*)$, and similarly $\frac{\partial \mathcal{L}_{\rm PL}(\phi_k, \theta_{k,D})}{\partial \phi_k} = \frac{\partial \theta_{k,D}}{\partial \phi_k} \nabla_{{\theta}}  \mathcal{L}_{\rm PL}({\phi_k}, {\theta_{k,D}})$, using the triangle inequality and Jensen's inequality, we have
			\begin{align*}
				&\left\Vert \frac{\partial \mathcal{L}_{\rm PL}({\phi_k}, {\theta_{k,D}})}{\partial \phi_k} - \frac{\partial \mathcal{L}_{\rm PL} ({\phi_k}, {\theta_*})}{\partial \phi_{k}} \right\Vert = \left\Vert \frac{\partial {\theta_{k,D}}}{\partial \phi_k} \nabla_{{\theta}}  \mathcal{L}_{\rm PL}({\phi_k}, {\theta_{k,D}}) - \frac{\partial {\theta_*}}{\partial \phi_k} \nabla_{{\theta}}  \mathcal{L}_{\rm PL}({\phi_k}, {\theta_*}) \right\Vert \\
				\leq& \left\Vert  \frac{\partial {\theta_{k,D}}}{\partial \phi_k} - \frac{\partial {\theta_*}}{\partial \phi_k} \right\Vert \left\Vert \nabla_{{\theta}}  \mathcal{L}_{\rm PL}({\phi_k}, {\theta_{k,D}}) \right\Vert + \left\Vert \frac{\partial {\theta_*}}{\partial \phi_k}  \right\Vert \left\Vert \nabla_{{\theta}}  \mathcal{L}_{\rm PL}({\phi_k}, {\theta_{k,D}}) - \nabla_{{\theta}}  \mathcal{L}_{\rm PL}({\phi_k}, {\theta_*}) \right\Vert \\
				\overset{(i)}{\leq}& 2M \left\Vert \frac{\partial {\theta_{k,D}}}{\partial \phi_k} - \frac{\partial {\theta_*}}{\partial \phi_k} \right\Vert + \left\Vert \frac{\partial {\theta_*}}{\partial \phi_k}  \right\Vert \left\Vert \nabla_{{\theta}}  \mathcal{L}_{\rm PL}({\phi_k}, {\theta_{k,D}}) - \nabla_{{\theta}}  \mathcal{L}_{\rm PL}({\phi_k}, {\theta_*}) \right\Vert \\
				\overset{(ii)}{\leq}& 2M \left\Vert \frac{\partial {\theta_{k,D}}}{\partial \phi_k} - \frac{\partial {\theta_*}}{\partial \phi_k} \right\Vert + \frac{MT(2L+M^2)}{\beta \mu} \Vert \theta_{k,D} - {\theta_*} \Vert,
			\end{align*}
			where $(i)$ follows from Jensen's inequality and Lemma \ref{lemma3}, $(ii)$ follows from Jensen's inequality and Lemma \ref{lemma3}, \ref{lemma4}.
			
			Using the fact $(a+b)^2 \leq 2(a^2 + b^2)$ and taking expectation on both sides, we have
			\begin{align*}
				&\mathbb{E}\left[ \left\Vert \frac{\partial \mathcal{L}_{\rm PL}({\phi_k}, {\theta_{k,D}})}{\partial \phi_k} - \frac{\partial \mathcal{L}_{\rm PL} ({\phi_k}, {\theta_*})}{\partial \phi_{k}} \right\Vert^2 \right] \\
				\leq& 8M^2 \left(  \mathbb{E}\left[\left\Vert \frac{\partial {\theta_{k,D}}}{\partial \phi_k} - \frac{\partial {\theta_*}}{\partial \phi_k} \right\Vert^2\right] + \frac{T^2(2L+M^2)^2}{4\beta^2\mu^2} \mathbb{E}\left[\Vert \theta_{k,D} - {\theta_*} \Vert^2\right] \right),
			\end{align*}
			and using Lemma \ref{lemma:4}, we have Eq. (\ref{eq:lemma3}).
		\end{proof}
		
		\subsection{Proof of Theorem \ref{thm:1}}
		
		\begin{proof}
			Let $\mathbf{L}(\phi_{k}) = \mathbb{E}_{T\sim \mathcal{D}_{\mathcal{T}}}[\mathcal{L}_{\rm PL}(\phi_k, \theta^{T}_*)]$ and let $\widetilde{\nabla}\mathbf{L}(\phi_{k}) = \frac{\partial \mathbb{E}_{T \sim \mathcal{D}_{\mathcal{T}}}[\mathcal{L}_{\rm PL}(\phi_k, {\theta}_{k,D}^T)]}{\partial \phi_{k}}$, $\widehat{\nabla}\mathbf{L}(\phi_{k}) = \frac{\partial \ell_{\rm PL}(\phi_k, {\theta}_{k,D}^T; z_k)}{\partial \phi_{k}}$. Adopting Lemma \ref{lemma:smooth-phi} with Jensen's inequality, we have the $P_{\phi}$-smoothness of $\mathbf{L}(\phi_{k})$, and thus
			\begin{align*}
				&{\bf L}(\phi_{k+1}) \\
				\leq& {\bf L}(\phi_{k}) + \left\langle \nabla {\bf L}(\phi_{k}), \phi_{k+1}-\phi_{k} \right\rangle + \frac{P_{\phi}}{2}\Vert \phi_{k+1}-\phi_{k} \Vert^2 \\
				=& {\bf L}(\phi_{k}) - \eta\left\langle \nabla {\bf L}(\phi_{k}), 
				\widehat{\nabla}{\bf L}(\phi_{k}) \right\rangle + \frac{\eta^2P}{2} \left\Vert \widehat{\nabla}{\bf L}(\phi_{k}) \right\Vert^2 \\
				\leq& {\bf L}(\phi_{k}) - \eta \left\langle \nabla {\bf L}(\phi_{k}), \widehat{\nabla} {\bf L}(\phi_{k}) - \nabla {\bf L}(\phi_{k}) \right\rangle - \eta\left\Vert {\nabla}{\bf L}(\phi_{k}) \right\Vert^2 \\
				& + \eta^2 P\Vert \nabla {\bf L}(\phi_{k}) \Vert^2 + \eta^2P_{\phi}\Vert  \nabla {\bf L}(\phi_{k}) -  \widehat{\nabla} {\bf L}(\phi_{k}) \Vert^2 \\
				\leq& {\bf L}(\phi_{k}) - \left( \frac{\eta}{2} - \eta^2P_{\phi} \right) \Vert \nabla {\bf L}(\phi_{k})\Vert^2 + \left( \frac{\eta}{2} + \eta^2P_{\phi} \right)  \Vert {\nabla}{\bf L}(\phi_{k}) - \widehat{\nabla}{\bf L}(\phi_{k}) \Vert^2 \\
				\overset{(i)}{\leq}& {\bf L}(\phi_{k}) - \left( \frac{\eta}{2} - \eta^2P_{\phi} \right) \Vert \nabla {\bf L}(\phi_{k})\Vert^2 + \left( \frac{\eta}{2} + \eta^2P_{\phi} \right) \mathbb{E}_{T \sim \mathcal{D}_{\mathcal{T}}} \left[ \left\Vert \frac{\partial \mathcal{L}_{\rm PL}(\phi_k, {\theta}^{T}_*)}{\partial \phi_{k}} - \widehat{\nabla}\mathbf{L}(\phi_{k}) \right\Vert^2\right],
			\end{align*}
			where $(i)$ follows from Jensen's inequality.
			Taking expectation on both sides and rearranging the terms conditioning on $\eta(\frac{1}{2}-\eta P_{\phi}) \geq 0$, we have
			\begin{align*}
				&\left( \frac{\eta}{2} - \eta^2P_{\phi} \right) \mathbb{E} \left[ \Vert \nabla {\bf L}(\phi_{k})\Vert^2 \right] \\
				\leq& \mathbb{E}\left[ {\bf L}(\phi_{k}) \right] - \mathbb{E}\left[ {\bf L}(\phi_{k+1}) \right] + \left( \frac{\eta}{2} + \eta^2P_{\phi} \right) \mathbb{E}_{T\sim \mathcal{D}_{\mathcal{T}}} \mathbb{E} \left[ \left\Vert \frac{\partial \mathcal{L}_{\rm PL}(\phi_k, {\theta}^{T}_*)}{\partial \phi_{k}} - \widehat{\nabla}\mathbf{L}(\phi_{k}) \right\Vert^2\right] \\
				\leq& \mathbb{E}\left[ {\bf L}(\phi_{k}) \right] - \mathbb{E}\left[ {\bf L}(\phi_{k+1}) \right] + \left( \frac{\eta}{2} + \eta^2P_{\phi} \right) \mathbb{E}_{T\sim \mathcal{D}_{\mathcal{T}}}\mathbb{E} \left[ \left\Vert \frac{\partial \mathcal{L}_{\rm PL}(\phi_k, \theta_{k,D}^T)}{\partial \phi_k} - \widehat{\nabla}\mathbf{L}(\phi_{k}) \right\Vert^2\right] \\
				&+ \left( \frac{\eta}{2} + \eta^2P_{\phi} \right) \mathbb{E}_{T\sim \mathcal{D}_{\mathcal{T}}}\mathbb{E} \left[\left\Vert \frac{\partial \mathcal{L}_{\rm PL}(\phi_k, \theta_{k,D}^T)}{\partial \phi_k} - \frac{\partial \mathcal{L}_{\rm PL}(\phi_k, {\theta}^{T}_*)}{\partial \phi_{k}} \right\Vert^2\right] \\
				\overset{(ii)}{\leq}& \mathbb{E}\left[ {\bf L}(\phi_{k}) \right] - \mathbb{E}\left[ {\bf L}(\phi_{k+1}) \right] + \left( \frac{\eta}{2} + \eta^2P_{\phi} \right) \sigma_3 \\
				& +\!\! 8M^2\!\!\left( \frac{\eta}{2} + \eta^2P_{\phi} \right)\!\!\left( \frac{\alpha^2\sigma^2T^2(2L+M^2)^2}{4\beta^2\mu^2}\!\! +\!\!  6\alpha^2\!\! \left( \sigma_1^2 + \frac{M^2 T^2}{\beta^2 \mu^2}\sigma_2^2 \right)\!\! \right)\!\!\sum_{t=0}^{D-1}\!\! \left(3\left(1-\frac{\alpha b\mu}{\exp\beta}\right)^{2}\right)^t\\
				&+ \frac{2M^2T^2(2L+M^2)^2}{\beta^2\mu^2}\left( \frac{\eta}{2} + \eta^2P_{\phi} \right)   \left(3\left(1-\frac{\alpha b\mu}{\exp\beta}\right)^{2}\right)^D \left(\frac{4M^2}{(2L+M^2)^2} +  \left\Vert \theta_{0} - \theta_* \right\Vert^2\right),
			\end{align*}
			where $(ii)$ follows from Lemma \ref{lemma:3}.
			
			Telescoping the above inequality over $k$ from $0$ to $K-1$ yields
			\begin{align*}
				&\left( \frac{\eta}{2} - \eta^2P_{\phi} \right)\frac{1}{K}\sum_{k=0}^{K-1} \mathbb{E} \left[ \Vert \nabla {\bf L}(\phi_{k})\Vert^2 \right] \\
				\leq& \frac{{\bf L}(\phi_0) - \inf_{\phi}{\bf L}(\phi) }{K} + \left( \frac{\eta}{2} + \eta^2P_{\phi} \right) \sigma_3^2  \\
				& +\!\! 8M^2\!\!\left( \frac{\eta}{2} \!\!+\!\! \eta^2P_{\phi} \right)\!\!\left(  \frac{\alpha^2\sigma^2T^2(2L+M^2)^2}{4\beta^2\mu^2} \!\!+\!\!  6\alpha^2 \!\!\left( \sigma_1^2 + \frac{M^2 T^2}{\beta^2 \mu^2}\sigma_2^2 \right)\!\! \right)\!\!\sum_{t=0}^{D-1}\!\! \left(3\left(1-\frac{\alpha b\mu}{\exp\beta}\right)^{2}\right)^t \\
				&+ \frac{2M^2T^2(2L+M^2)^2}{\beta^2\mu^2}\left( \frac{\eta}{2} + \eta^2P_{\phi} \right) \left(3\left(1-\frac{\alpha b\mu}{\exp\beta}\right)^{2}\right)^D \left(\frac{4M^2}{(2L+M^2)^2} +  \left\Vert \theta_{0} - \theta_* \right\Vert^2\right) \\
				\overset{(i)}{\leq}&  \frac{{\bf L}(\phi_0) - \inf_{\phi}{\bf L}(\phi) }{K} + \left( \frac{\eta}{2} + \eta^2P_{\phi} \right) \sigma_3^2 \\ 
				& +\!\!\frac{8M^2}{\kappa}\!\!\left( \frac{\eta}{2} + \eta^2P_{\phi} \right)\!\! \left(\!\!  \frac{\alpha^2\sigma^2T^2(2L+M^2)^2}{4\beta^2\mu^2}\!\! +\!\!  6\alpha^2\!\! \left( \sigma_1^2 + \frac{M^2 T^2}{\beta^2 \mu^2}\sigma_2^2 \right)\!\! \right) \\
				&+ \frac{2M^2T^2(2L+M^2)^2}{\beta^2\mu^2} \left( \frac{\eta}{2} + \eta^2P_{\phi} \right),
			\end{align*}
			where $(i)$ follows from that $D \geq - \frac{\log \left(\frac{4M^2}{(2L+M^2)^2}  +  \left\Vert \theta_{0} - \theta_* \right\Vert^2\right)}{\log \left(3\left(1-\frac{\alpha b\mu}{\exp\beta}\right)^{2}\right)}$ implies $\left(3\left(1-\frac{\alpha b\mu}{\exp\beta}\right)^{2}\right)^D\!\! \left(\frac{4M^2}{(2L+M^2)^2}  +  \left\Vert \theta_{0} - \theta_* \right\Vert^2\right) \leq 1$, $\beta \leq \log\left( \alpha b\mu /  1 - \sqrt{\frac{1-\kappa}{3}}\right)$ implies $\sum_{t=0}^{D-1} \left(3\left(1-\frac{\alpha b\mu}{\exp\beta}\right)^{2}\right)^t \leq \frac{1}{1-3\left(1-\frac{\alpha b\mu}{\exp\beta}\right)^2} \leq \frac{1}{\kappa}$, for some positive constant $\kappa < 1$.
			
			Then, setting $\eta = \frac{1}{4P_{\phi}}$, we have
			\begin{align*}
				&\frac{1}{K}\sum_{k=0}^{K-1} \mathbb{E} \left[ \Vert \nabla {\bf L}(\phi_{k})\Vert^2 \right] \\ \leq& \frac{16P_{\phi}\left({\bf L}(\phi_0) - \inf_{\phi}{\bf L}(\phi)\right) }{K} + 3\sigma_3^2 + \frac{6M^2T^2(2L+M^2)^2}{\beta^2\mu^2} \\
				& + \frac{24M^2}{\kappa}\left(  \frac{\alpha^2\sigma^2T^2(2L+M^2)^2}{4\beta^2\mu^2} +  6\alpha^2 \left( \sigma_1^2 + \frac{M^2 T^2}{\beta^2 \mu^2}\sigma_2^2 \right) \right) \\
				=& \frac{16P_{\phi}\left({\bf L}(\phi_0) - \inf_{\phi}{\bf L}(\phi)\right) }{K} + \left( (2L+M^2)^2 + \frac{4\alpha^2\sigma^2(2L+M^2)^2}{\kappa } + \frac{24M^2\sigma_2^2}{\kappa} \right)\frac{6M^2T^2}{\beta^2 \mu^2}\\
				& + 3\sigma_3^3 + \frac{144 M^2 \alpha^2 \sigma_1^2}{\kappa}
			\end{align*}
			
			Then, denoting $ \tau = 3\sigma_3^3 + \frac{144 M^2 \alpha^2 \sigma_1^2}{\kappa} $, we obtain the result of Theorem \ref{thm:1}.
		\end{proof} 
	\end{footnotesize}
		
		\section{Further Details on the Experimental Setup}
		
		\subsection{Statistics of datasets} \label{apdx:datastatic}
		Tab. \ref{tbl:datastatic} lists statistics of the used training and evaluation data for CSG and KAG across different domains.
		
		\begin{table*}[h] 
			\centering
			\caption{Dataset statistics.}
			\scalebox{.9}{
				\begin{tabular}{lcccc}
					\toprule
					\multirow{2}{*}{\bf Domain} & \multicolumn{2}{c}{ Training Data} &
					\multicolumn{2}{c}{Evaluation Data} \\
					\cmidrule(lr){2-3}	\cmidrule(lr){4-5}
					& {Fine-Tuning} & {Pref. Learn.} & {Reference} &{Preference} \\
					\midrule
					\multicolumn{5}{c}{\bf CSG} \\
					\midrule
					{\tt book} &1,000& 500& 500& 500 \\
					{\tt dvd} &1,000& 500 & 500& 500  \\
					{\tt electronics} &1,000& 500&500 &500 \\
					{\tt kitchen} &1,000& 500& 500 &500  \\
					\midrule
					\multicolumn{5}{c}{\bf KAG} \\
					\midrule
					{\tt askacademia} ({\tt ac}) &31,450 &2,095 &1,708 &1,708 \\
					{\tt askanthropology} ({\tt an})&3,910 &203 &268 &268 \\
					{\tt askbaking} ({\tt ba})&44,007 &2,096 &268 &268 \\
					{\tt askcarguys} ({\tt ca})&3,227 &159 &117 &117 \\
					{\tt askculinary} ({\tt cu})&45,710 &2,094 &2,563 &2,563 \\
					{\tt askdocs} ({\tt do})&6,449 &315 &455 &455 \\
					{\tt askengineers} ({\tt en})&57,096 &3,154 &2,638	&2,638 \\
					{\tt askhistorians} ({\tt hi})&3,264 &113 &164 &164 \\
					{\tt askhr} ({\tt hr})&8,295 &641 &395 &395 \\
					{\tt askphilosophy} ({\tt phi})&10,307 &608 &677 &677 \\
					{\tt askphysics} ({\tt phy})&7,364 &409 &587 &587 \\
					{\tt askscience} ({\tt sci})&13,316 &899 &977 &977 \\
					{\tt asksciencefiction} ({\tt scif})&29,382 &1,576 &1,987 &1,987 \\
					{\tt asksocialscience} ({\tt so})&2,706 &147 &188 &188 \\
					{\tt askvet} ({\tt ve})&3,300 &170 &224 &224 \\
					{\tt changemyview} ({\tt ch})&38,173 &1,637 &1,836 &1,836 \\
					{\tt explainlikeimfive} ({\tt ex})&19,592 &1,014 &1,070 &1,070 \\
					{\tt legaladvice} ({\tt le})&21,170 &1,106 &1,011 &1,011 \\	
					\bottomrule
			\end{tabular}}
			\label{tbl:datastatic}
		\end{table*}
		
		\subsection{GPT-4 prompts for generating preference data for sentiment generation} \label{apdx:gpt4gensent}
		We use GPT-4 to generate the preference data for CSG with the original Amazon Review dataset. The generating prompts are provided as follows, in which {\tt \{book/dvd/electronics/kitchen\}} and {\tt \{positive/negative\}} denote the alternative terms based on the domain and sentiment label of the input review, respectively.
		
		{\tt Extract a prefix for the \{book/dvd/electronics/kitchen\} review (<reference>) or add a prefix for this review if needed. Then, based on the prefix write another review that is not as {positive/negative} as the given review.} \\ 
		\\
		{\tt <reference>: [the provided review]} \\ 
		\\
		{\tt Line by line provides the generated result using the following format:}\\
		\\
		{\tt <prefix>: [provides the extracted/added prefix]} \\
		\\
		{\tt <origin>: [provides the new reference with the provided prefix]} \\
		\\
		{\tt <augment>: [provides the new review with the same provided prefix]} \\
		\\
		{\tt <label>: [if <origin> is more positive than <augment> state "0", otherwise "1"]} \\
		
		\subsection{GPT-4 prompts for computing KAG win rates}
		An important aspect of our experimental configuration involves assessing GPT-4 win rates. This section provides the prompts utilized to generate win rates for the KAG experiments. 
		
		{\tt Which of the following answers (<Answer 1> or <Answer 2>) is more correct and useful to solve the question (<Question>)?}\\
		\\
		{\tt <Question>: [the prompt]}\\
		\\
		{\tt <Answer 1>: [reference]}\\
		\\
		{\tt <Answer 2>: [model generations]} \\
		\\
		{\tt FIRST provide a one-sentence comparison of the two answers on which is more correct and useful to solve the question. SECOND explain in detail why the selected answer is more correct and useful compared with the other. THIRD on a new line, state only "1" or "2" to indicate which answer is more correct and useful.}

		\section{Additional Experimental Results}
		\subsection{Additional Evaluation with Preference Data} \label{apdx:}
		As the experiments in the main paper verify the CSG, we illustrate the results on KAG in this section. 
		\noindent\textbf{Training and evaluation PL accuracies for KAG.} As illustrated in Fig. \ref{fig:knowgen2}, the trends of training PL loss and PL accuracy against training steps for all six groups of held-out domains are similar, with only small fluctuations. The evaluation PL accuracy against the training steps for all four domains is also similar, and our approach achieves a significantly higher plateau compared with other baselines for each domain.  
		
		\begin{figure}[h] 
			\centering
			\subfigure[Training PL loss and accuracy for held-out domains: {\tt ac,an,ba}.]{
				\begin{minipage}[h]{0.3\textwidth}
					\centering
					\includegraphics[width=4cm]{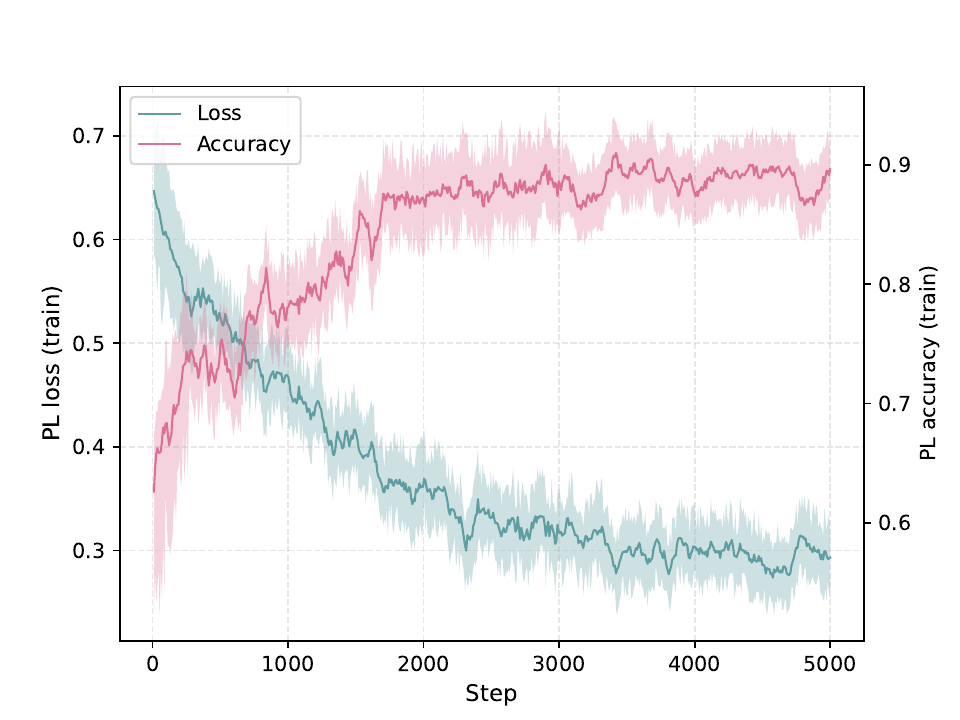}
			\end{minipage}}
			\subfigure[Training PL loss and accuracy for held-out domain: {\tt ca,cu,do}.]{
				\begin{minipage}[h]{0.3\textwidth}
					\centering
					\includegraphics[width=4cm]{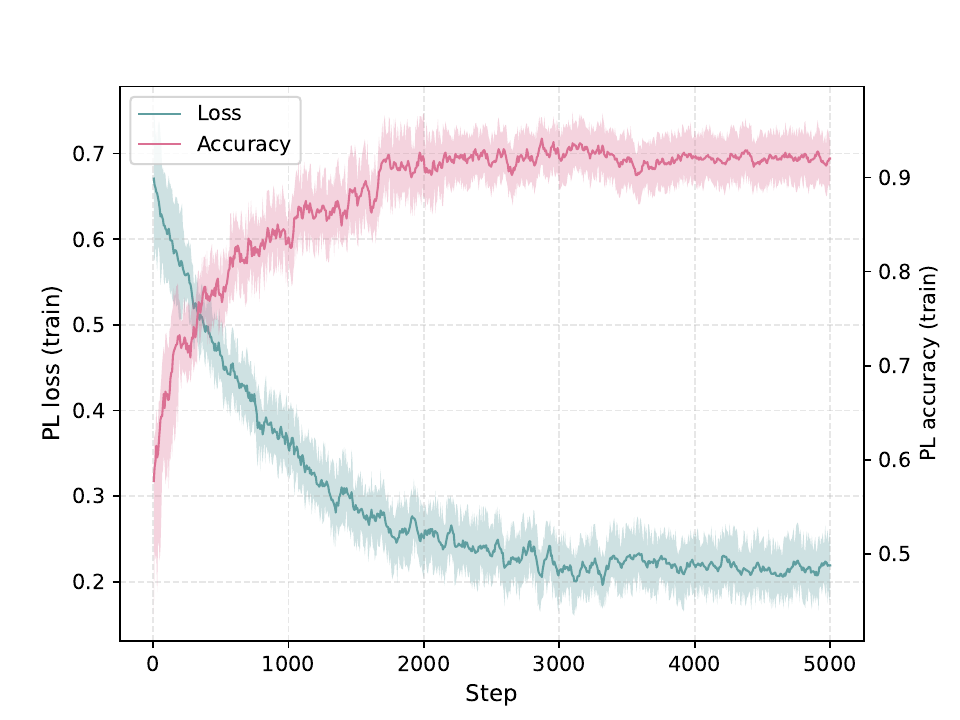}
			\end{minipage}}
			\subfigure[Training PL loss and accuracy for held-out domain: {\tt ne,hi,hr}.]{
				\begin{minipage}[h]{0.3\textwidth}
					\centering
					\includegraphics[width=4cm]{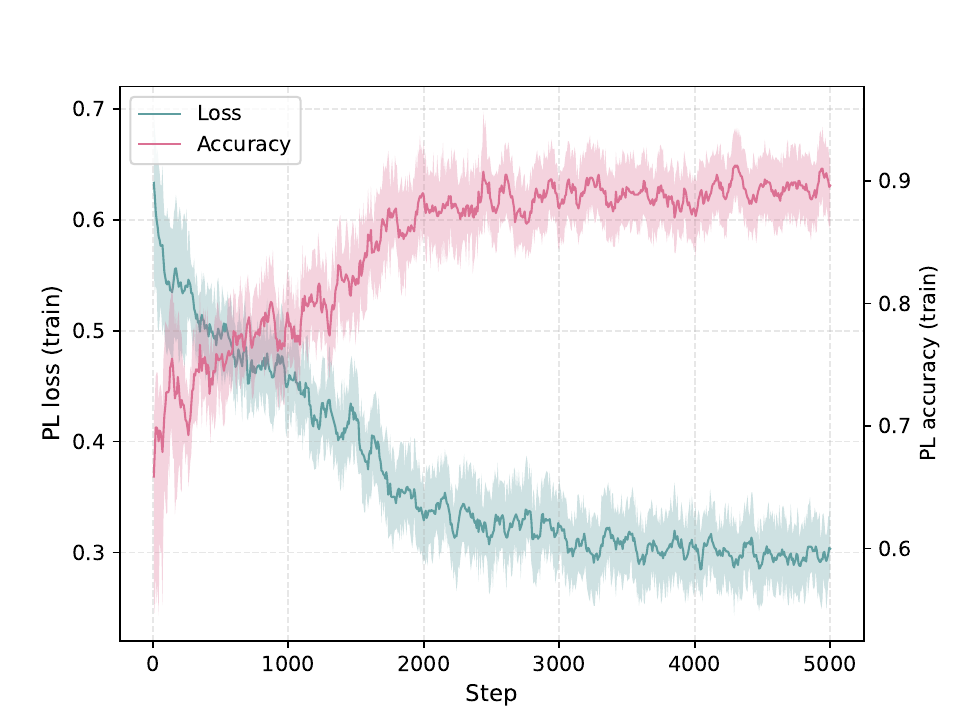}
			\end{minipage}}
			\subfigure[Training PL loss and accuracy for held-out domain: {\tt phi,phy,sci}.]{
				\begin{minipage}[h]{0.3\textwidth}
					\centering
					\includegraphics[width=4cm]{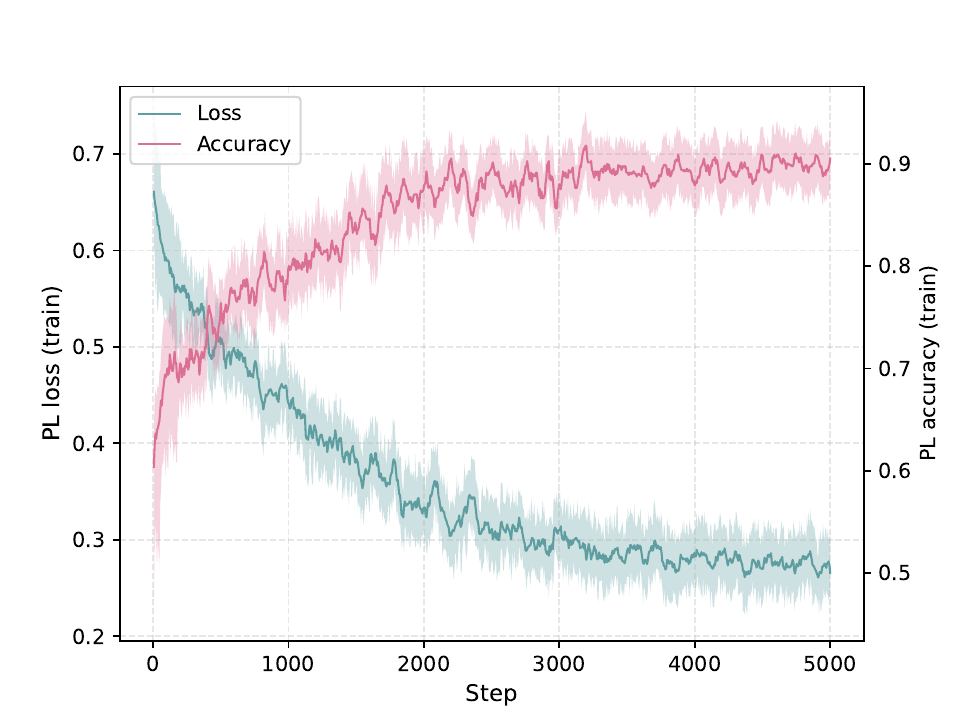}
			\end{minipage}}
			\subfigure[Training PL loss and accuracy for held-out domain: {\tt scif,so,ve}.]{
				\begin{minipage}[h]{0.3\textwidth}
					\centering
					\includegraphics[width=4cm]{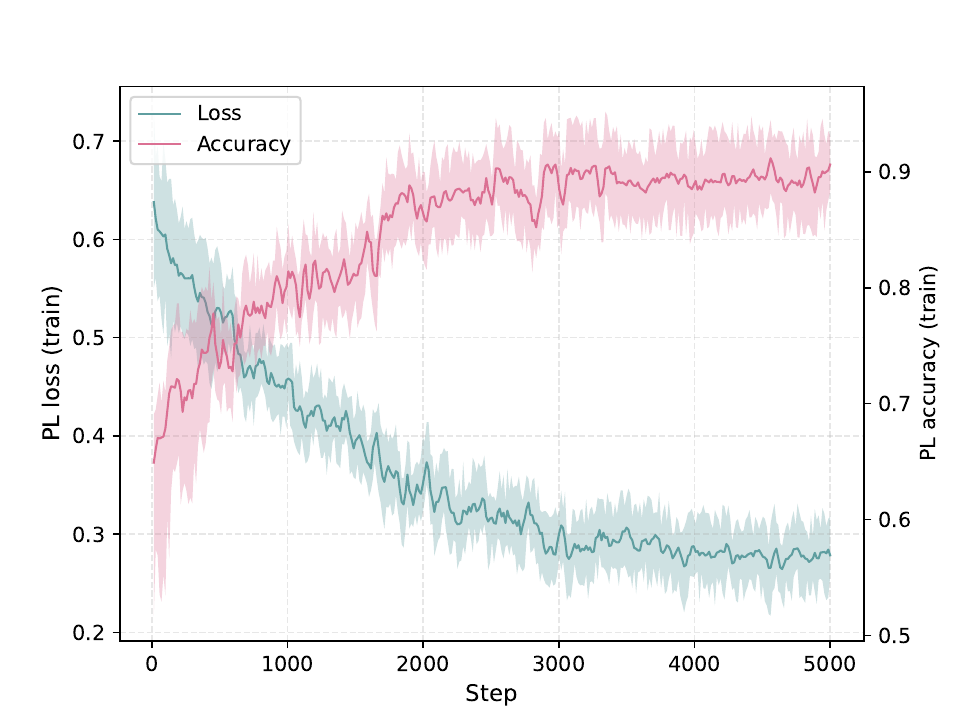}
			\end{minipage}}
			\subfigure[Training PL loss and accuracy for held-out domain: {\tt ch,ex,le}.]{
				\begin{minipage}[h]{0.3\textwidth}
					\centering
					\includegraphics[width=4cm]{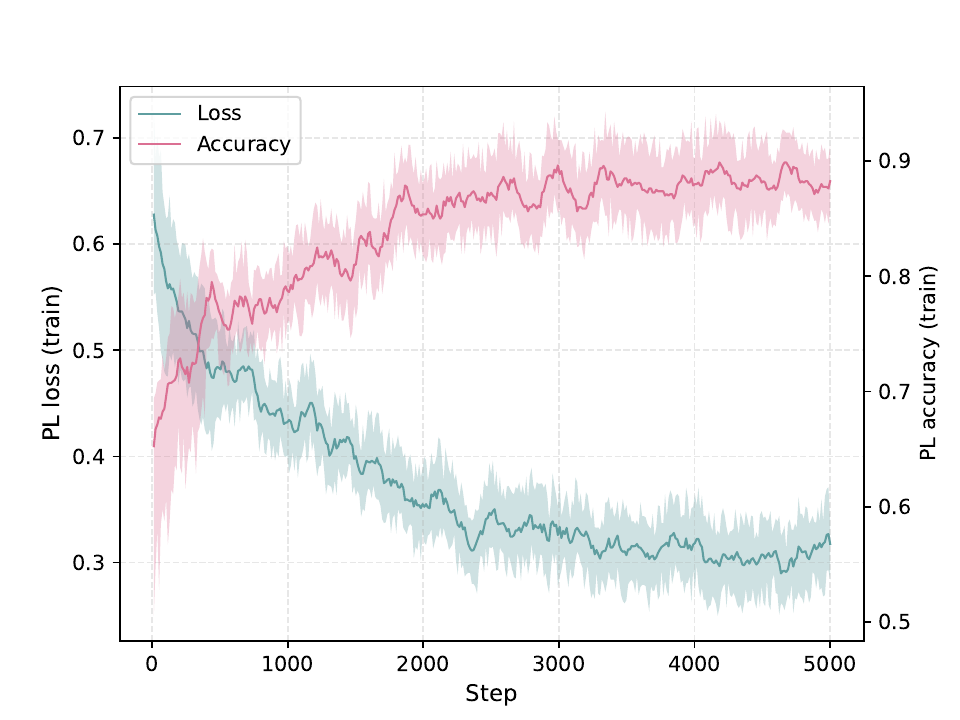}
			\end{minipage}}
			\subfigure[Evaluation PL accuracy on held-out domains: {\tt ac\&an\&ba}.]{
				\begin{minipage}[h]{0.3\textwidth}
					\centering
					\includegraphics[width=4cm]{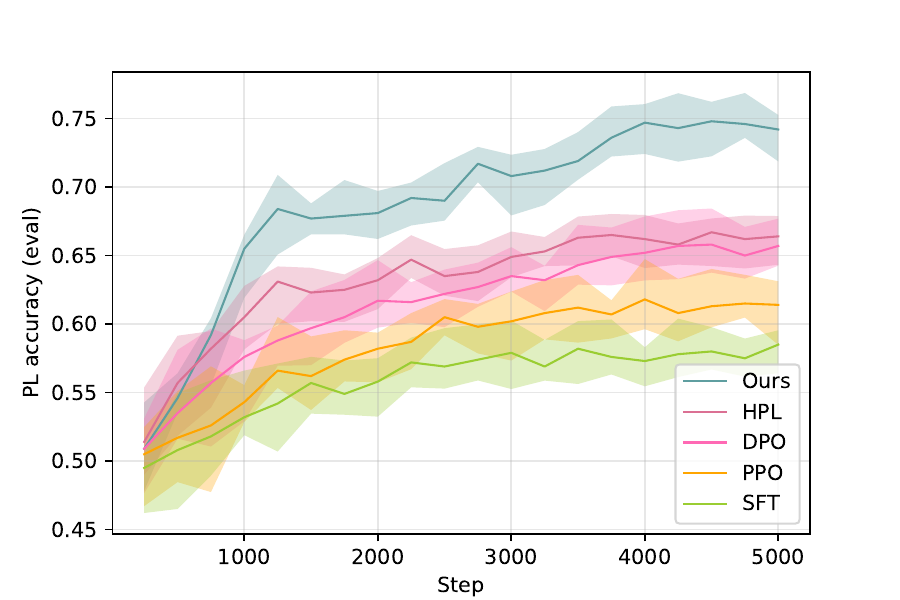}
			\end{minipage}} 
			\subfigure[Evaluation PL accuracy on held-out domain: {\tt ca,cu,do}.]{
				\begin{minipage}[h]{0.3\textwidth}
					\centering
					\includegraphics[width=4cm]{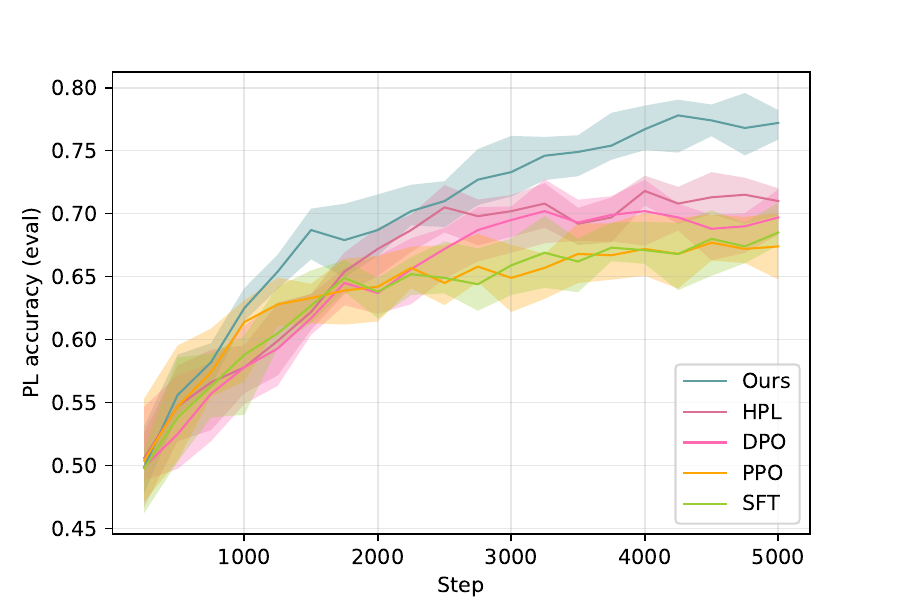}
			\end{minipage}}
			\subfigure[Evaluation PL accuracy on held-out domain: {\tt ne,hi,hr}.]{
				\begin{minipage}[h]{0.3\textwidth}
					\centering
					\includegraphics[width=4cm]{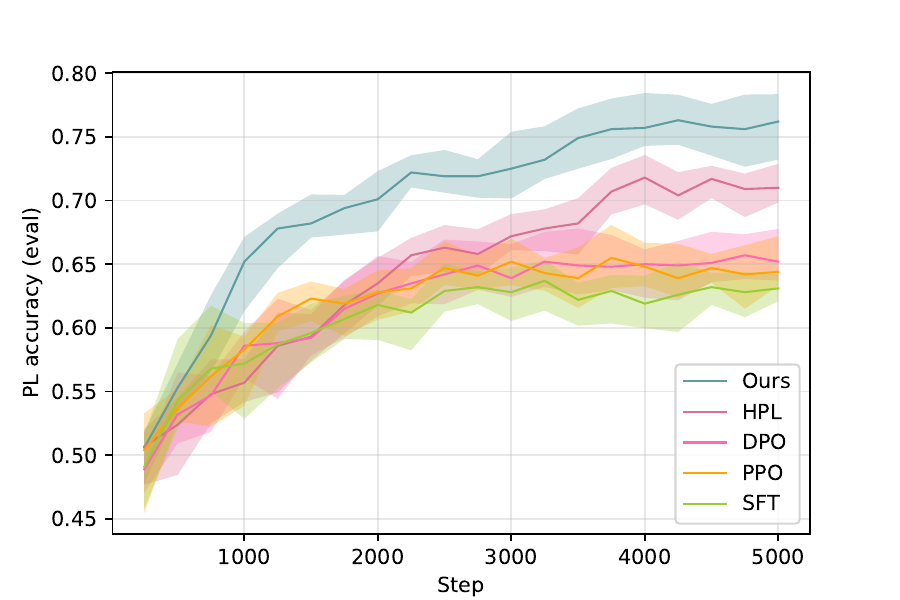}
			\end{minipage}}  
			\subfigure[Evaluation PL accuracy on held-out domain: {\tt phi,phy,sci}.]{
				\begin{minipage}[h]{0.3\textwidth}
					\centering
					\includegraphics[width=4cm]{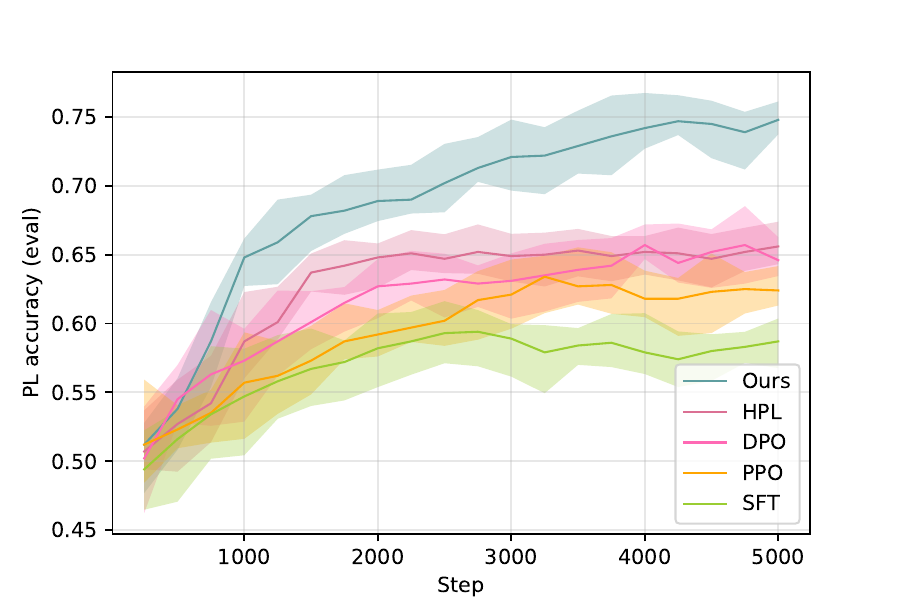}
			\end{minipage}}
			\subfigure[Evaluation PL accuracy on held-out domain: {\tt scif,so,ve}.]{
				\begin{minipage}[h]{0.3\textwidth}
					\centering
					\includegraphics[width=4cm]{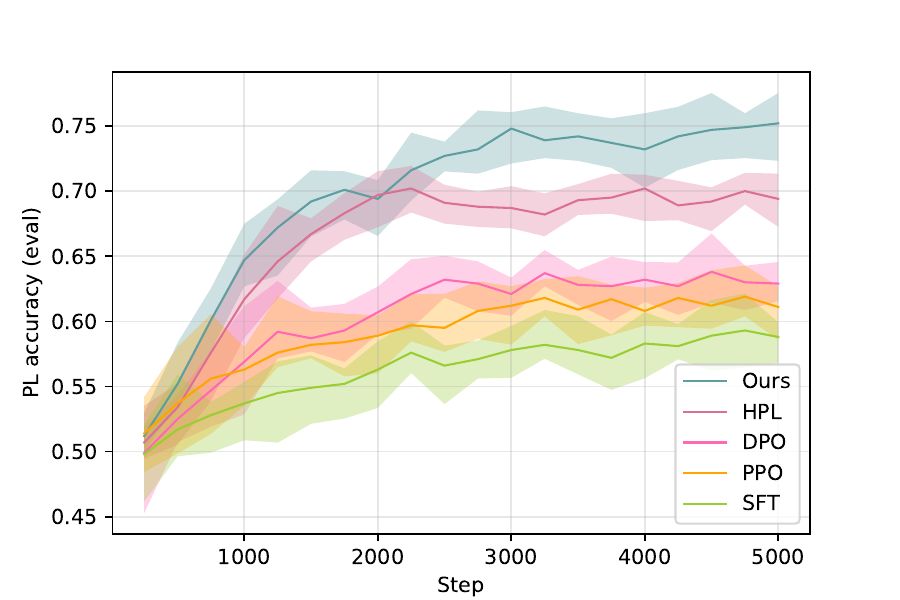}
			\end{minipage}}
			\subfigure[Evaluation PL accuracy on held-out domain: {\tt ch,ex,le}.]{
				\begin{minipage}[h]{0.3\textwidth}
					\centering
					\includegraphics[width=4cm]{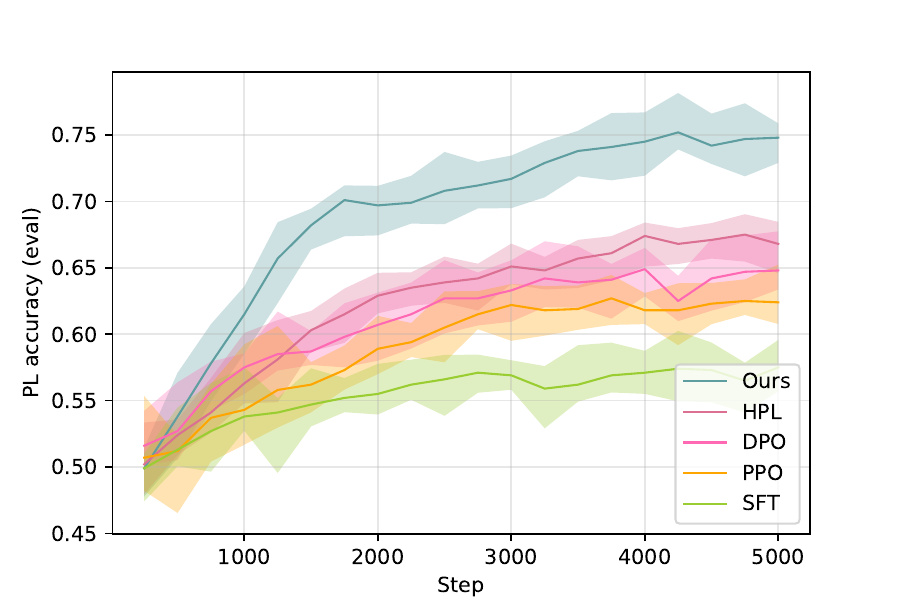}
			\end{minipage}}
			\caption{PL accuracy $\mathcal{A}_{\rm PL}$ against the training steps for KAG.} \label{fig:knowgen2}
		\end{figure}
		
		\subsection{Generation examples} \label{apdx:genexample}
		\noindent\textbf{Examples on sentiment generalization.}
		We present examples of comparison between our method and the SFT, PPO and DPO baselines on CSG. Tab. \ref{tbl:sentimentexample} shows generation examples by different methods for a book review.
		
		\begin{table}[h] 
			\centering
			\caption{Sample sentiment generation to a {\tt book} review from the Amazon Review test set. The examples of all the methods are generated with a temperature of $0.25$.}
			\scalebox{1.}{
				\begin{tabular}{lp{10cm}}
					\toprule
					\multirow{3}{*}{\bf Prompt} & {\bf Instruction.} Continue to write the book review with the <prefix>, and make it have a positive sentiment. \\
					& {\bf Input.} <prefix>: Sphere by Michael Crichton is an excellent novel; this was \\
					\midrule
					{\bf SFT} & Sphere by Michael Crichton is an excellent novel; this was undoubtedly captivating, showcasing Crichton's storytelling finesse. Norman Johnson's Pacific Ocean odyssey brims with mystery and allure, offering a compelling read. While not as deeply researched as some of Crichton's works, "Sphere" still delivers intriguing insights. Overall, a recommended pick for fans of science fiction and adventure.\\
					\midrule
					{\bf PPO} & Sphere by Michael Crichton is an excellent novel; this was truly an immersive and captivating read. Crichton's storytelling prowess shines through in this thrilling narrative, keeping readers engaged from start to finish. Norman Johnson's journey is filled with intrigue and suspense, making for an unforgettable adventure beneath the ocean's surface. Overall, it's a highly recommended read for fans of science fiction and psychological thrillers alike.
					\\
					\midrule
					{\bf DPO} & Sphere by Michael Crichton is an excellent novel; this was an engaging and suspenseful narrative that keeps readers hooked. Norman Johnson's journey into the depths of the Pacific Ocean is filled with intrigue and mystery, making it a captivating read. Overall, it's a compelling read that will appeal to fans of science fiction and adventure.
					\\
					\midrule
					{\bf Ours} & Sphere by Michael Crichton is an excellent novel; this was undeniably a literary marvel that surpasses all expectations. Crichton's masterful storytelling transcends the boundaries of imagination, weaving a narrative so compelling that it becomes impossible to put down. Norman Johnson's journey unfolds with gripping intensity, taking readers on an exhilarating adventure into the unknown depths of the Pacific Ocean. The revelation of the massive spacecraft hidden beneath the waves sets the stage for an epic exploration of science, psychology, and human nature. In every page, Crichton's brilliance shines brightly, illuminating the path for readers to embark on an unforgettable literary journey. In conclusion, "Sphere" is more than just a novel; it is a masterpiece—an absolute must-read for anyone seeking to be enthralled by the magic of storytelling.
					\\
					\bottomrule
			\end{tabular}}
			\label{tbl:sentimentexample}
		\end{table}
		
		\noindent\textbf{Examples on answer generation with GPT-4 judgement.}
		We show examples of KAG in Tab. \ref{tbl:exampleanswer}, we compare our method with a strong baseline PPO on answer generation for a question from the {\tt science} domain, and show the GPT-4 judgement on which answer of ours and the preference of test set is more correct and useful. 
		\begin{table}[h] 
			\centering 
			\caption{Sample answer generation for a question of {\tt science} from the SHP test set. All the methods use greedy strategy for example generation. We use GPT-4 to judge whether our method's generation is more correct and useful to solve the question than the preferred answer in the test set.}
			\scalebox{1.}{
				\begin{tabular}{lp{8cm}}
					\toprule
					{\bf Prompt} & If you were a human floating towards the sun, at what distance from the sun would you feel an Earth-like temperature?
					\\
					\midrule
					{\bf Preference} & Over at NASA, they mention that the  > temperature of the orbiting Space Station's Sun-facing side would soar to 250 degrees F (121 C),  so I guess if you have a clear view of the sun, you'd need to be further away than the Earth is.  I also found this, which states  > At the Earth's distance from the sun, a space thermometer with roughly half its surface is absorbing sunlightwould register 45 degrees Fahrenheit.
					\\
					\midrule
					{\bf PPO} & A naked human wouldn't feel balanced anywhere because our bodies don't spread heat well. One side would burn, and the other would freeze. You'd die no matter how far you are from the sun, even without the sun's radiation and vacuum effects. \\
					\midrule
					{\bf Ours} & In reality, it's impossible for humans to experience Earth-like temperatures at any distance from the sun. The intense heat emitted by the sun would quickly overwhelm the human body's capacity to regulate temperature, leading to severe burns or even evaporation. Thus, while the concept is intriguing, the harsh realities of the solar environment make it unlikely for humans to encounter Earth-like temperatures under such conditions.\\
					\midrule
					{\bf GPT-4 Judgement} &
					<Answer 2> {\bf [Ours]} is more correct and useful to solve the question. While <Answer 1> {\bf [Preference]} provides some information about temperatures in space, it does not directly address the question of at what distance from the sun a human would feel an Earth-like temperature. Additionally, <Answer 1> {\bf [Preference]} does not consider the complexities of human physiology and the extreme conditions present near the sun. <Answer 2> {\bf [Ours]}, on the other hand, acknowledges the impracticality of experiencing Earth-like temperatures near the sun due to the intense heat emitted by it. It also discusses the limitations imposed by the harsh realities of the solar environment on human physiology, providing a more comprehensive and accurate response to the question.\\
					\bottomrule
			\end{tabular}}
			 \label{tbl:exampleanswer}
		\end{table}

%
%
\bibliographystyle{splncs04}
\bibliography{mybibliography}
%

\end{document}